\documentclass{article}


 \usepackage[dblblindworkshop, final]{neurips_2025}

\usepackage[utf8]{inputenc} 
\usepackage[T1]{fontenc}    
\usepackage{hyperref}       
\usepackage{url}            
\usepackage{booktabs}       
\usepackage{amsfonts,amssymb,amsmath}
\usepackage{amsthm}
\usepackage{nicefrac}       
\usepackage{microtype}      
\usepackage{xcolor}         
\usepackage[all]{xy}
\usepackage{wrapfig}
\usepackage{algorithm}
\usepackage{algpseudocode}
\usepackage{subfig}
\usepackage{graphicx}
\usepackage{tikz}
\usepackage{booktabs}   
\usepackage[table]{xcolor} 
\newtheorem{theorem}{Theorem}
\newtheorem{definition}{Definition}
\newtheorem{lemma}{Lemma}
\newtheorem{rem}{Remark}
\newtheorem{assumption}{assumption}
\newtheorem{remark}{Remark}
\title{A Theory of Multi-Agent Generative Flow Networks}

%

\author{%
  Leo Maxime Brunswic\thanks{Equal Contributions} \\
  Huawei Technologies Canada\\
  \texttt{leo.maxime.brunswic@h-partners.com} \\
\And 
    Haozhi Wang${}^*$ \\
  Tianjin University\\
  \texttt{wanghaozhi@tju.edu.cn} \\
\And 
    Shuang Luo  \\
  Zhejiang University\\
  \texttt{luoshuang@zju.edu.cn} \\
\And 
   Jianye HAO \\
  Huawei Noah's Ark Lab\\
  \texttt{haojianye@huawei.com} \\
\And 
     Amir Rasouli\\
  Huawei Technologies Canada\\
  \texttt{amir.rasouli@huawei.com} \\
  \And 
    Yinchuan Li\thanks{Corresponding author}\\
  Huawei Noah's Ark Lab\\
  \texttt{yinchuan.li.cn@gmail.com} 
}

\def\Fedge{F_{\mathrm{edge}}}

\def\Faction{F_{\mathrm{action}}}

\def\Fin{F_{\mathrm{in}}}
\def\Fout{F_{\mathrm{out}}}

\def\StateSpace{\mathcal S}

\def\ActionSpace{\mathcal{A}}

\def\traindist{\nu_{\mathrm{state}}}

\def\Finit{F_{\mathrm{init}}}
\def\STOP{\mathrm{STOP}}
\def\Rhat{\hat{R}}
\def\pihat{\hat{\pi}}
\def\Factionhat{\hat{F}_{\mathrm{action}}}
\def\Fedgehat{\hat{F}_{\mathrm{edge}}}

\begin{document}

\maketitle

\begin{abstract}
Generative flow networks utilize a flow-matching loss to learn a stochastic policy for generating objects from a sequence of actions, such that the probability of generating a pattern can be proportional to the corresponding given reward. However, a theoretical framework for multi-agent generative flow networks (MA-GFlowNets) has not yet been proposed. In this paper, we propose the theory framework of MA-GFlowNets, which can be applied to multiple agents to generate objects collaboratively through a series of joint actions. We further propose four algorithms: a centralized flow network for centralized training of MA-GFlowNets, an independent flow network for decentralized execution, a joint flow network for achieving centralized training with decentralized execution, and its updated conditional version. Joint Flow training is based on a local-global principle allowing to train a collection of (local) GFN as a unique (global) GFN. This principle provides a loss of reasonable complexity and allows to leverage usual results on GFN to provide theoretical guarantees that the independent policies generate samples with probability proportional to the reward function. Experimental results demonstrate the superiority of the proposed framework compared to reinforcement learning and MCMC-based methods.
\end{abstract}

\section{Introduction}

Generative flow networks (GFlowNets)~\cite{bengio2021gflownet} can sample a diverse set of candidates in an active learning setting, where the training objective is to approximate sampling of the candidates proportionally to a given reward function. Compared to reinforcement learning (RL), where the learned policy is more inclined to sample action sequences with higher rewards, GFlowNets can perform exploration tasks better. The goal of GFlowNets is not to generate a single highest-reward action sequence, but rather is to sample a sequence of actions from the leading modes of the reward function~\cite{bengio2021flow}. However, based on current theoretical results, GFlowNets cannot support multi-agent systems.

A multi-agent system is a set of autonomous interacting entities that share a typical environment, perceive through sensors, and act in conjunction with actuators~\cite{busoniu2008comprehensive}. Multi-agent reinforcement learning (MARL), especially cooperative MARL, is widely used in robot teams, distributed control, resource management, data mining, etc~\cite{zhang2021multi,canese2021multi,feriani2021single}.
There two major challenges for cooperative MARL: scalability and partial observability~\cite{yang2019alpha,spaan2012partially}. Since the joint state-action space grows exponentially with the number of agents, coupled with the environment's partial observability and communication constraints, each agent needs to make individual decisions based on the local action observation history with guaranteed performance~\cite{sunehag2017value,wang2020qplex,rashid2018qmix}. In MARL, to address these challenges, a popular centralized training with decentralized execution (CTDE) paradigm~\cite{oliehoek2008optimal,oliehoek2016concise} is proposed, in which the agent's policy is trained in a centralized manner by accessing global information and executed in a decentralized manner based only on the local history. However, extending these techniques to GFlowNets is not straightforward, especially in constructing CTDE-architecture flow networks and finding IGM conditions for flow networks need investigating.

In this paper, we propose the multi-agent generative flow networks (MA-GFlowNets) framework for cooperative decision-making tasks. Our framework can generate more diverse patterns through sequential joint actions with probabilities proportional to the reward function. Unlike vanilla GFlowNets, the proposed method analyzes the interaction of multiple agent actions and shows how to sample actions from multi-flow functions. Our approach consists of building a virtual global GFN capturing the policies of all agents and ensuring consistency of local (agent) policies. Variations of this approach yield different flow-matching losses and training algorithms.

Furthermore, we propose the Centralized Flow Network (CFN), Independent Flow Network (IFN), Joint Flow Network (JFN), and Conditioned Joint Flow Network (CJFN) algorithms for multi-agent GFlowNets framework.
CFN considers multi-agent dynamics as a whole for policy optimization regardless of the combinatorial complexity and demand for independent execution, so it is slower; while IFN is faster, but suffers from the flow non-stationary problem.
In contrast, JFN and CJFN, which are trained based on the local-global principle, takes full advantage of CFN and IFN. They can reduce the complexity of flow estimation and support decentralized execution, which are beneficial to solving practical cooperative decision-making problems.

\textbf{Main Contributions:} 1)
We first generalize the measure GFlowNets framework to the multi-agent setting, and propose a theory of multi-agent generative flow networks for cooperative decision-making tasks;
2) We propose four algorithms under the measure framework, namely CFN, IFN, JFN and CJFN, for training multi-agent GFlowNets, which are respectively based on centralized training, independent execution, and the latter two algorithms are based on the CTDE paradigm;
3) We propose a local-global principle and then prove that the joint state-action flow function can be decomposed into the product form of multiple independent flows, and that a unique Markovian flow can be trained based on the flow matching condition;
4) Control tasks experiments demonstrate that the proposed algorithms can outperform current cooperative MARL algorithms in terms of exploration capabilities.

\subsection{Preliminaries and Notations}
\textbf{Measurable GFlowNets} \cite{brunswic2025ergodic,brunswic2024theory,lahlou2023theory}, extending the original definition of GFlowNets \cite{deleu2023generative,bengio2021gflownet} to non-acyclic continuous and mixed continuous-discrete statespaces, are defined by a tuple $(\StateSpace,\ActionSpace,S,T,R,\Fout)$ in the single-agent setting, where $\StateSpace$ and $\ActionSpace$ denote the state and action space, $S$ and $T$ are the state and transition map, $\pi$ and $\Fout$ are the forward policy and outflow respectively. 
\begin{wrapfigure}{r}{0.23\textwidth}
  \begin{center}
    \xymatrix{\ActionSpace \ar@<0.3ex>@/^0.7pc/[r]^S \ar@<-0.2ex>@/_0.7pc/[r]_T& \StateSpace \ar@<0.3ex>[r]^R \ar@<-0.3ex>[r]_{\Fout}  \ar@<0.2ex>@/_0.7pc/[l]^\pi& \mathbb R_+ }
    \end{center}
    \vspace{-1cm}
\end{wrapfigure}

More precisely, the state space $\StateSpace$ and the state-dependent action spaces $\ActionSpace_s$ are measurable spaces; for each state $s\in \StateSpace$, the environment comes with a stochastic transition map\footnote{We adopt the naming convention of \cite{douc2018markov}. The kernel $K:\mathcal X\rightarrow \mathcal Y$ is a stochastic map which is formalized as follows: for all $x\in \mathcal X$, $K(x\rightarrow \cdot)$ is a probability distribution on $\mathcal Y$. In addition, $K(x\rightarrow \cdot)$ varies measurably with $x$ in the sense that for all measurable set $A \subset \mathcal Y$, the real valued map $x\mapsto K(x \rightarrow A)$ is measurable.} $\ActionSpace_s \xrightarrow{T_s} \StateSpace$. We formalize this dependency on state by bundling (packing) state and action together into a bundle $\{(s,a) ~|~ s\in \StateSpace, a\in \ActionSpace_s \} = \ActionSpace \xrightarrow{S, T} \StateSpace$ where $S(s,a):=s$ and $T(s,a):= T_s(a)$.
For graphs, a bundled action is an edge $s\rightarrow s'$; the state map $S$ returns the origin $s$ while the transition map returns the destination $s'$.
The forward policy $\pi$ is a section of $S$, i.e., a kernel $\StateSpace\xrightarrow{\pi}{\ActionSpace}$ such that $S \circ \pi$ is identity on $\StateSpace$.
The outflow (or state-flow) $\Fout$ and the reward $R$ are non-negative finite measure on $\StateSpace$.
The state space $\StateSpace$ has two special states $s_0$ and $s_f$ such that $T(s_0,a)\neq s_0$ and $T(s_f,a)=s_f$ for all actions $a$; furthermore, there is a special action $\STOP$ such that $T(s,\STOP)=s_f$ for all state $s$.
The reward $R$ is generally non-trainable and unknown but implicitly a component of $\Fout$ and $\pi$; since the reward may not be tractable in the multi-agent setting, we favor a reward-free parameterization of GFlowNets, ie we restrict all objects to $\StateSpace^*:=\StateSpace\setminus \{s_0,s_f\}$.
Therefore, we parameterize them by triplets $\mathbb{F}=(\pi^*,\Fout^*,\Finit)$ where $\pi^*(a|s) = \pi(a |s, a\neq \STOP )$,   $\Fout^*:= \Fout-R$ and $\Finit = \Fout(s_0)T\circ \pi(s_0)$.
We define a Markov chain $(s_t)_{t\geq 1}$ by sampling a first state $s_1$ from $\Finit$, then $a_t = \pi(s_t)$ and $s_{t+1}=T(a_t)$. The sample generated by the  GFlowNet is the last position before hitting $s_f$. The sampling time $\tau$ is then the first $t$ such that $a_t = \STOP$.  The distribution of $s_\tau$ is controlled by the so-called flow-matching constraint
\begin{equation}\Fout = \Fin := \Finit + \Fout^*\pi^* T  \label{eq:FM-const},\end{equation} as measures on $\StateSpace$,
and the sampling Theorem first proved in \cite{bengio2021gflownet}:
\begin{theorem}[\cite{brunswic2024theory} Theorem 2]\label{theo:sampling}
Let $\mathbb F:=(\pi,\Fout^*,\Finit)$ be a GFlowNets on $(\StateSpace,\ActionSpace,S,T,R)$. If the reward $R$ is non-zero and $\mathbb F$ satisfies the flow-matching constraint, then its sampling time  is almost surely finite and the sampling distribution is proportional to $R$. More precisely:
\begin{align}
   \mathbb P(\tau<+\infty)=1, \displaystyle \mathbb E(\tau) \leq \frac{F_\mathrm{out}(\mathcal S)}{R(\mathcal S)}-1,  ~\text{and}~
 s_{\tau} \sim  \frac{1}{R(\mathcal S)}R.
\end{align}
\end{theorem}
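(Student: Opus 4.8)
The plan is to rewrite the flow-matching constraint \eqref{eq:FM-const} as a renewal-type identity for the sub-Markov chain the GFlowNet actually runs, and then read the three conclusions off of that identity. Set $\StateSpace^* := \StateSpace\setminus\{\source,\sink\}$ and let $p(s) := \tfrac{dR}{d\Fout}(s) = \pi(\STOP\mid s)\in[0,1]$ be the stopping density, which exists because $\Fout^* = \Fout - R\ge 0$. Define the sub-stochastic kernel $\kappa(s\to\cdot) := (1-p(s))\,(\pi^*T)(s\to\cdot)$ on $\StateSpace^*$ ("take one non-terminal step from $s$"), so that $\Fout^*\pi^*T = \Fout\,\kappa$ as measures and \eqref{eq:FM-const} reads $\Fout = \Finit + \Fout\,\kappa$. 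Iterating gives $\Fout = \sum_{n=0}^{N}\Finit\,\kappa^{n} + \Fout\,\kappa^{N+1}$ for all $N$; the last term is a nonnegative measure, so the partial sums increase and are dominated by the finite measure $\Fout$, and monotone convergence yields $\Fout = G + H$ with $G := \sum_{n\ge 0}\Finit\,\kappa^{n}$ and $H := \lim_{N}\Fout\,\kappa^{N+1}$ (a decreasing limit), both finite. Evaluating $\Fout = \Finit + \Fout\,\kappa$ on $\StateSpace^*$ and using $\kappa(s\to\StateSpace^*) = 1-p(s)$ identifies the normalizer: the total mass $Z$ of $\Finit$ equals $R(\StateSpace)$, which is positive since $R\neq 0$; let $\mathbb P$ be the law of the chain started from $\Finit/Z$.

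Next I would match these measures to the chain. By construction of $(s_t)$, the event "reach $s_{m+1}$ without having stopped" has law $\Finit\kappa^{m}/Z$ on $\StateSpace^*$, so $\mathbb P(\tau>m) = \tfrac1Z(\Finit\kappa^{m})(\StateSpace^*)$; since $\sum_{m\ge0}(\Finit\kappa^{m})(\StateSpace^*) = G(\StateSpace^*)\le \Fout(\StateSpace^*)<\infty$, the terms tend to $0$, hence $\mathbb P(\tau=\infty)=0$, which is the first claim. Summing stopping mass over time, $\mathbb P(s_\tau\in A) = \mathbb P(\tau<\infty,\,s_\tau\in A) = \tfrac1Z\int_A G(ds)\,p(s)$ for every measurable $A$. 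Taking $A=\StateSpace$ and using $\mathbb P(\tau<\infty)=1$ forces $\int_\StateSpace G(ds)p(s)=Z=R(\StateSpace)$; but $G=\Fout-H$ and $\int_\StateSpace\Fout(ds)p(s)=R(\StateSpace)$, so $\int_\StateSpace H(ds)p(s)=0$, hence $\int_A H(ds)p(s)=0$ for all $A$, and therefore $\mathbb P(s_\tau\in A)=\tfrac1Z\int_A\Fout(ds)p(s)=R(A)/R(\StateSpace)$, i.e. $s_\tau\sim R/R(\StateSpace)$. Finally $\mathbb E(\tau)=\sum_{m\ge0}\mathbb P(\tau>m)=\tfrac1Z\,G(\StateSpace^*)\le\tfrac1Z\,\Fout(\StateSpace^*)$, and with the boundary conventions $\Fout(\source)=\Finit(\StateSpace)=R(\StateSpace)$ and $\Fout(\sink)=0$ this equals $\Fout(\StateSpace)/R(\StateSpace)-1$.

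The step I expect to be the real obstacle is not any single clever trick but the measure-theoretic bookkeeping in the general (continuous, possibly non-acyclic) setting: justifying the kernel-composition identity $\Fout^*\pi^*T=\Fout\kappa$ and the monotone/dominated-convergence swaps for iterated kernels, checking that non-terminal steps indeed land in $\StateSpace^*$ so that $\kappa$ genuinely is a sub-stochastic kernel on $\StateSpace^*$ with defect $p$, and verifying the boundary conventions used at the end. It is worth noting that one never needs $H=0$ (flow may be trapped in a non-stopping recurrent region); it suffices that $H$ contributes nothing to the stopping distribution, which the total-mass argument gives for free, and that $R\neq0$ is exactly what makes $Z>0$ so that the chain and the normalization $R/R(\StateSpace)$ are well defined.
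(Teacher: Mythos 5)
Your proof is correct: the decomposition $\Fout = \sum_{n\ge 0}\Finit\kappa^{n} + H$ with the leftover $H$ a $\kappa$-invariant ``0-flow'' carrying no stopping mass, the identification $\Finit(\StateSpace)=R(\StateSpace)$, and the occupation-measure bound $\mathbb E(\tau)=G(\StateSpace^*)/R(\StateSpace)\le \Fout(\StateSpace^*)/R(\StateSpace)$ are all sound, and the only loose ends are exactly the boundary conventions you flag ($\Fout(\source)=R(\StateSpace)$, $\Fout(\sink)=0$, non-$\STOP$ actions landing in $\StateSpace^*$), which account for the $-1$ in the stated bound. Note that this paper does not prove Theorem~\ref{theo:sampling} itself but imports it from \cite{brunswic2024theory}, and your argument is essentially that reference's proof (its Proposition~2 on 0-flows plays the role of your $H$), so this is the same approach rather than a new route.
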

In passing we introduce $\Rhat:= \Fin-\Fout^*$, $\Fin^*:= \Fout^*\pi^*T$ and $\Faction:= \Fout\otimes \pi$.

\textbf{Flow-matching losses (FM)}, denoted by $\mathcal L_{\mathrm{FM}}$, are used to enforce the flow-matching constraint \ref{eq:FM-const}. They compare the outflow $\Fout$ with the inflow $\Fin := \Finit + \Fout^* \pi^*T$; and are minimized when $\Fin = \Fout$ so that a gradient descent on GFlowNets parameters may enforce equation \ref{eq:FM-const}. The previous works \cite{bengio2021flow,malkin2022trajectory} used divergence-based FM losses valid as long as the state space is acyclic while \cite{brunswic2024theory,morozov2025revisiting, brunswic2025ergodic} introduced stable FM losses and regularization allowing training in the presence of cycles:
\begin{align}
    \mathcal L^{\mathrm{\mathrm{div}}}_{\mathrm{FM}}
    (\mathbb F^{\theta}) = \mathbb E_{s\sim \nu_{\mathrm{state}}} g\circ \log\left(\frac{d \Fin^\theta }{d \Fout^\theta}(s) \right) \quad 
    \mathcal L^{\mathrm{\mathrm{stable}}}_{\mathrm{FM}}
    (\mathbb F^{\theta}) = \mathbb E_{s\sim \nu_{\mathrm{state}}} g\left(\frac{d \Fin^\theta}{d\lambda}(s)-\frac{d \Fout^\theta }{d\lambda}(s) \right), \label{eq:FMloss_stable}
\end{align}
where $g$ is some positive function, decreasing on $]-\infty,0]$, $g(0)=0$ and increasing on $[0,+\infty[$. The simplest choices are $g(x)=x^2$ or $g(x)=\log(1+\alpha |x|^\beta)$. 

\subsection{Multi-agent Problem Formulation}
The \textbf{multi-agent setting} formalizes the data of state, actions, and transitions for multiple agents.
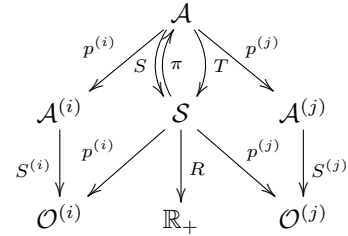
\begin{wrapfigure}{r}{0.33\textwidth}
\vspace{-0.7cm}
  \begin{center}
        \xymatrix{
&\ActionSpace \ar@<-0.3ex>@/_0.7pc/[d]_{S} \ar@<0.2ex>@/^0.7pc/[d]^T  \ar[dl]_{p^{(i)}} \ar[dr]^{p^{(j)}}& \\
   \ActionSpace^{(i)} \ar[d]_{S^{(i)}} &\StateSpace\ar[d]^R\ar[dr]^{p^{(j)}} \ar[dl]_{p^{(i)}} \ar@<-0.3ex>@/^0.7pc/[u]_\pi &\ActionSpace^{(j)}\ar[d]^{S^{(j)}}\\
        \mathcal O^{(i)}&\mathbb R_+& \mathcal O^{(j)}
}
  \end{center}
  \vspace{-0.5cm}
    \caption{Multi-agent formalism}
    \vspace{-0.8cm}
\end{wrapfigure}
Each agent $i \in I$ in the finite agent set $I$ has its own observation $o^{(i)}$ in its observation space $\mathcal O^{(i)}$; it depends on the state via the projection $\StateSpace \xrightarrow{p^{(i)}}{\mathcal O^{(i)}}$. For simplicity sake, we identify $\StateSpace =\prod_{i\in I} \mathcal O^{(i)}$. Each agent has its own action space $\ActionSpace^{(i)}$ and each of the agent observation-dependent action space $\ActionSpace_{o}$ contains a special action $\STOP$; the environment is such that once an agent chooses $\STOP$, it is put on hold until all agents do as well. The game finishes when all agents have chosen $\STOP$; a reward is given based on the last state. The reward received is formalized by a non-negative function $r:\StateSpace\rightarrow \StateSpace$.
We assume that each agent may freely choose its own action independently from the actions chosen by other agents: this is formalized via $\ActionSpace_s = \prod_{i\in I} \ActionSpace_{o^{(i)}} ^{(i)}/\sim$ ie the Cartesian product of agent actions space up to the identification of the $\STOP$ actions.
A trajectory of the system of agents is a, possibly infinite, sequence of states $(s_t)_{t< \tau+1}$ with $\tau \in \mathbb N\cup\{\infty\}$ starting at the source state $s_0\in \StateSpace$ and may eventually calling $\STOP$; the space of trajectories is $\mathcal T$. A policy on $\StateSpace$ induces a Markov chain hence a distribution on trajectories.

MA-GFlowNets are tuples $((\mathbb F^{(i)})_{i\in I}, \mathbb F)$, where each {\it local} GFlowNets $\mathbb F^{(i)}$ is defined on $(\mathcal O^{(i)},\mathcal A^{(i)}, S^{(i)}, T^{(i)}, R^{(i)})$  for $i\in I$ and the {\it global} GFlowNets $\mathbb F$ is defined on  $(\StateSpace,\ActionSpace,S,T,R)$.
\textbf{The objective of MA-GFlowNets}, similarly to GFlowNets, is to build a policy $\pi$ so that the induced trajectories are finite and  $s_\tau$ is distributed proportionally to $R:=r \lambda$ where $\lambda$ is some fixed measure on $\StateSpace$ and $\int_{s\in \StateSpace} r(s)d\lambda(s)$ is finite.
In general, some GFlowNets (local or global) may be virtual, i.e. not implemented.

\section{Multi-Agent GFlowNets}

This section is devoted to details and theory regarding the variations of algorithms for MA-GFlowNets training. If resources allow, the most direct approach is included in the training of the global model directly, leading to a centralized training algorithm in which the local GFlowNets are virtual. As expected, such an algorithm suffers from high computational complexity, hence, demanding decentralized algorithms.
Decentralized algorithms require the agents to collaborate to some extent. We achieve such a collaboration by enforcing consistency rules between the local and global GFlowNets. The global GFlowNets is virtual and is used to build a training loss for the local models ensuring the global model is GFlowNets, so that the sampling Theorem applies. The sampling properties of the MA-GFlowNets are then deduced from the flow-matching property of the virtual global model.

\subsection{Centralized Training}
Centralized training consists in training of the global flow directly. Here, the local flows are virtual:they are theoretically recovered from the global flow as image by the observation maps but not implemented. We use FM-losses as given in equations  \ref{eq:FMloss_stable} applied to the flow on $(\StateSpace,\ActionSpace)$. See Algorithm \ref{alg:centralized}. 
Implicitly, $\Fout$ contains a parameterizable component from $\Fout^*$, while $\Fin$ contains the parameterization of $\pi^*$ and $\Finit$.

\begin{algorithm}
\caption{Centralized Flow Network Training Algorithm for MA-GFlowNets}\label{alg:centralized}
\begin{algorithmic}
\Require A multi-agent environment $(\StateSpace,\ActionSpace,\mathcal O^{(i)},\mathcal A^{(i)}, p_i,S,T,R)$, a parameterized GFlowNets  $\mathbb F:=(\pi,\Fout^*,\Finit)$ on $(\StateSpace,\ActionSpace)$.
\While{not converged}
\State Sample and add trajectories $(s_t)_{t\geq 0} \in \mathcal T$ to replay buffer with  policy $ \pi(s_t\rightarrow a_{t} )$.
\State Generate training distribution $\traindist$.
\State Apply minimization step of the FM loss $\mathcal L^{\mathrm{\mathrm{stable}}}_{\mathrm{FM}}
    (\mathbb F^{\theta})$ .
\EndWhile
\end{algorithmic}
\end{algorithm}

From the algorithmic viewpoint, the CFN algorithm is identical to a single GFlowNets. As a consequence, the usual results on the measurable GFlowNets apply as is.
There are, however, a number of key difficulties: 1) even on graphs, the computational complexity increases as $O(|\ActionSpace_s|^N)$ at any given explored state; 2) centralized training requires all agents to share observations, which is impractical since in many applications the agents only have access to their own observations.

\subsection{Local Training: Independent}
The dual training method is embodied in the training of local GFlowNets instead of the global one. In this case, the local flows $\mathbb F^{(i)}$ are parameterized and the global flow is virtual. In the same way, a local FM loss is used for each client. In order to have well-defined local GFlowNets, we need a local reward, for which a natural definition is $R^{(i)}(o^{(i)}_t):= \mathbb E(R(s_t)| o_t^{(i)})$.
The local training loss function can be written as:
$
 \mathcal{L}(\mathbb{F}^{(i)}) =\mathbb{E} \sum_{t=1}^\tau g\left(F^{\theta_i}_{\text {in }}\left(o^i_t\right)-F^{\theta_i}_{\text {out }}\left(o^i_t\right)\right)
$.

\begin{wrapfigure}{r}{0.45\textwidth}
	\centering
	\includegraphics[width=2.3in]{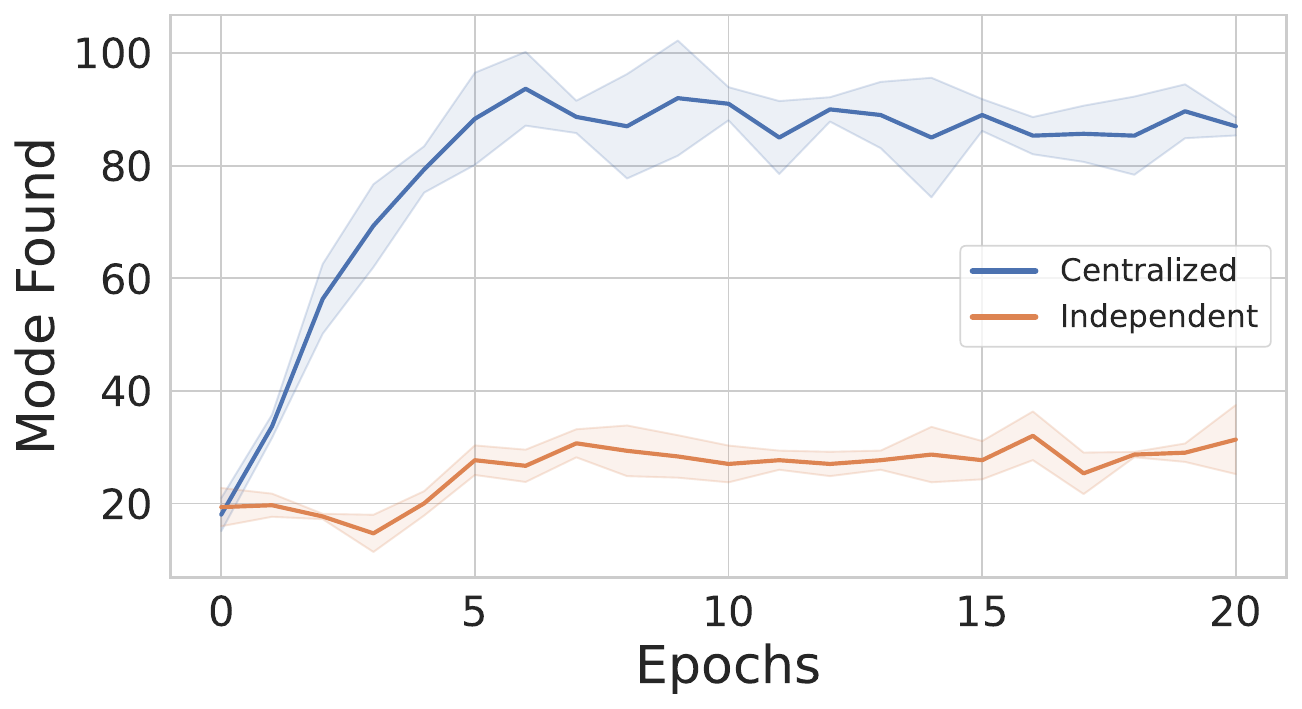}

	\caption{Performance comparison on Hyper-grid task.}
	\label{fig_ind}
    \vspace{-0.5cm}
\end{wrapfigure}

The algorithm \ref{alg:decentralized} in Appendix~\ref{app b} describes the simplest training method, which solves the issue of exponential action complexity with an increasing number of agents.
In this formulation, however, two issues arise: the evaluation of ingoing flow $\Fin^{(i)}(o^{(i)})$ becomes harder as we need to find all transitions leading to a given local observation (and not to a given global state). This problem may be non-trivial as it is also related to the actions of other agents. More importantly, in this case, the local reward is intractable, so we cannot accurately estimate the reward $R^{(i)}(o^{(i)})$ of each node. Falling back to using the stochastic reward $R^{(i)}(o^{(i)}):= R(s_t| o_t^{(i)})$ instead leads to transition uncertainty and spurious rewards, which can cause non-stationarity and/or mode collapse as shown in Figure~\ref{fig_ind}.

\subsection{Local-Global Training}

\subsubsection{Local-Global Principle:  Joint Flow Network}
Local-global training is based upon the following local-global principle, which combined with Theorem \ref{theo:sampling} ensures that the MA-GFlowNet has sampling distribution proportional to the reward $R$.
\begin{theorem}[Joint MA-GFlowNets]\label{theo:joint_gmfn_exists}
Given local GFlowNets $\mathbb F^{(i)}$ on some environments $(\mathcal O^{(i)},\ActionSpace^{(i)},S^{(i)},T^{(i)})$ there exists a global GFlowNets $\mathbb F^{\mathrm{joint}}$ on a multi-agent environment $(\prod_{i\in I} \mathcal O^{(i)},\ActionSpace,S,\tilde T)$ consistent with the local GFlowNets $\mathbb F^{(i)}$, such that
\begin{equation}\label{equ:split}
    \Fout^* = \prod_{i\in I} \Fout^{(i),*}, \quad \Fin = \prod_{i\in I}\Fin^{(i)}.
\end{equation}
  Moreover,  if  $\mathbb F^{\mathrm{joint}}$ satisfies equation \ref{eq:FM-const}  for a reward $R$ and each $\hat R^{(i)}\geq0$ then
$  R = \prod_{i\in I} \hat R^{(i)}$.
\end{theorem}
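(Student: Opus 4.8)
The plan is to build $\mathbb F^{\mathrm{joint}}$ explicitly as a ``tensor product'' of the local GFlowNets and then verify the two identities in \ref{equ:split} by Fubini-type manipulations of product measures. Concretely, on $\StateSpace=\prod_{i\in I}\Obs^{(i)}$ I would take the forward policy to be the product policy $\pi^{\mathrm{joint},*}(s\to\cdot)=\prod_{i\in I}\pi^{(i),*}(o^{(i)}\to\cdot)$, the non-terminal outflow to be $\Fout^*:=\prod_{i\in I}\Fout^{(i),*}$, and the transition $\tilde T$ to act coordinatewise, $\tilde T(s,a)=(T^{(i)}(o^{(i)},a^{(i)}))_{i\in I}$, with the convention that an agent having played $\STOP$ is frozen until all agents do (this affects only the terminal flow/reward and the behaviour at $\sink$, not the reward-free triple $(\pi^*,\Fout^*,\Finit)$). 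The one nonobvious ingredient is the initial flow: rather than the product $\prod_i\Finit^{(i)}$, I would set $\Finit:=\prod_{i\in I}\Fin^{(i)}-\prod_{i\in I}\Fin^{(i),*}$, which is precisely the choice that forces the inflow to come out as the product $\prod_i\Fin^{(i)}$.

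I would then carry out three verifications. First, non-negativity of $\Finit$: since $\Fin^{(i)}=\Finit^{(i)}+\Fin^{(i),*}\ge\Fin^{(i),*}\ge0$ for each $i$ and the product of non-negative measures is monotone in each factor, $\prod_i\Fin^{(i)}\ge\prod_i\Fin^{(i),*}$ as measures on $\StateSpace$, so $\Finit\ge0$; finiteness and measurability are inherited from the local data. Second, the factorization of the inflow: by Tonelli the measure $\Fout^*\otimes\pi^{\mathrm{joint},*}$ on $\ActionSpace$ splits as the product over $i$ of the measures $\Fout^{(i),*}\otimes\pi^{(i),*}$ on $\ActionSpace^{(i)}$, and pushing this forward by the coordinatewise map $\tilde T$ commutes with the product, so $\Fin^*=\Fout^*\pi^{\mathrm{joint},*}\tilde T=\prod_i\Fin^{(i),*}$; hence $\Fin=\Finit+\Fin^*=\prod_i\Fin^{(i)}$, which together with $\Fout^*=\prod_i\Fout^{(i),*}$ (true by construction) is exactly \ref{equ:split}. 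Third, consistency: the images of $\Fout$, $\Fin$ and of the joint policy under the observation projections $p^{(i)}$ recover the local GFlowNets $\mathbb F^{(i)}$ up to the total masses of the other agents' flows, which is a disintegration property of product measures.

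For the ``moreover'' I would argue as follows: if $\mathbb F^{\mathrm{joint}}$ satisfies \ref{eq:FM-const} for a reward $R$, then $R=\Fout-\Fout^*=\Fin-\Fout^*=\Rhat$, and substituting the two factorizations of \ref{equ:split} gives $R=\prod_i\Fin^{(i)}-\prod_i\Fout^{(i),*}$; writing $\Fin^{(i)}=\Fout^{(i),*}+\hat R^{(i)}$, expanding the product multilinearly, and using the non-negativity hypotheses $\hat R^{(i)}\ge0$ and $\Fout^{(i),*}\ge0$ to force the mixed terms, one identifies $R$ with the product term $\prod_i\hat R^{(i)}$. The step I expect to be the main obstacle is exactly this collapse of the multilinear expansion together with the global bookkeeping that makes it legitimate: one has to arrange $\tilde T$, the source/sink states, and the $\STOP$-synchronisation so that the product object is genuinely a Markovian GFlowNet in the sense of Theorem \ref{theo:sampling}, and then check that the single choice of $\Finit$ above is simultaneously compatible with both factorizations in \ref{equ:split} while remaining a non-negative finite measure — this is where the structure of the construction, rather than soft measure theory, has to do the work.
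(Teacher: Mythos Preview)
Your construction of $\mathbb F^{\mathrm{joint}}$ is exactly the paper's: product policy, product star-outflow, coordinatewise $\tilde T$, and the crucial choice $\Finit=\prod_i\Fin^{(i)}-\prod_i\Fin^{(i),*}$ so that $\Fin$ factorises. The verification of (\ref{equ:split}) via Fubini is correct and matches the paper.

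The gap is in the ``moreover''. Expanding $\prod_i(\Fout^{(i),*}+\hat R^{(i)})-\prod_i\Fout^{(i),*}$ multilinearly gives
\[
\sum_{\emptyset\neq J\subseteq I}\ \prod_{j\in J}\hat R^{(j)}\ \prod_{j\notin J}\Fout^{(j),*},
\]
and non-negativity alone does \emph{not} kill the cross terms: already with two agents you obtain $\hat R^{(1)}\otimes\Fout^{(2),*}+\Fout^{(1),*}\otimes\hat R^{(2)}+\hat R^{(1)}\otimes\hat R^{(2)}$, and there is no general reason the first two summands vanish. The paper does not resolve this by algebra but by imposing the Asynchronous Unanimous STOP structure, which decomposes each $\mathcal O^{(i)}$ into a ``life'' part (where rewards are zero) and a ``purgatory'' part (where $\Fout^{(i),*}=\Finit^{(i)}=0$ by construction). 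The argument then runs in two steps: first, on $\prod_i\mathcal O^{(i)}_{\mathrm{life}}$ global flow-matching gives $\prod_i\Fin^{(i)}=\prod_i\Fout^{(i),*}$, and combined with the termwise inequality $\Fin^{(i)}\ge\Fout^{(i),*}$ this forces $\hat R^{(i)}=0$ on life (this is essentially the content of Theorem~\ref{theo:3}); second, on purgatory $\Fout^{(i),*}=0$, so $\Rhat=\prod_i\Fin^{(i)}-0=\prod_i\Fin^{(i),*}=\prod_i\hat R^{(i)}$. The cross terms vanish because $\hat R^{(i)}$ and $\Fout^{(i),*}$ end up with disjoint supports on each $\mathcal O^{(i)}$, and that disjointness is supplied by the life/purgatory machinery you only gesture at in your final paragraph --- it is the substance of the step you correctly flag as the main obstacle, and it cannot be replaced by a soft appeal to non-negativity.
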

Theorem~\ref{theo:joint_gmfn_exists} states that if the $\tilde{T}$ guided by the local transition map $T^{(i)}$ is consistent with the true transition map $T$, and the global reward $R$ is the product of the local rewards, then the local and global flow function satisfies the \eqref{equ:split}. Based on this conclusion, our Joint Flow Network (JFN) algorithm leverages Theorem \ref{theo:joint_gmfn_exists} by sampling trajectories with policy
\begin{align}\label{joint sample policy}
    o^{(i)}_t = p_i(s_t^{(i)}) ~\text{and}~ \pi^{(i)}(o_{t}^{(i)} \rightarrow a_{t}^{(i)}),~~i\in I
\end{align}
with $a_t = (a_t^{(i)}:i\in I )$ and $s_{t+1} = T(s_t,a_t)$, build formally the (global) joint GFlowNet from local GFlowNets and train the collection of agent via the FM-loss of the joint GFlowNet. Equation \ref{equ:split} ensures that the inflow and outflow of the (global) joint GFlowNet are both easily computable from the local inflows and outflows provided by agents. See algorithm \ref{alg:joint}.

\begin{algorithm}
\caption{Joint Flow Network Training Algorithm for MA-GFlowNets}\label{alg:joint}
\begin{algorithmic}
\Require Number of agents $N$, A multi-agent environment $(\StateSpace,\ActionSpace,\mathcal O^{(i)},\mathcal A^{(i)}, p_i,S,T,R)$.
\Require Local GFlowNets  $(\pi^{(i),*},\Fout^{(i),*}, \Finit^{(i)})_{i\in I}$.
\While{not converged}
\State Sample and add trajectories $(s_t)_{t\geq 0} \in \mathcal T$ to replay buffer with  policy according to \eqref{joint sample policy}.
\State Generate training distribution $\traindist$ from replay buffer
\State Apply minimization step of $\mathcal L^{\mathrm{\mathrm{stable}}}_{\mathrm{FM}}(\mathbb F^{\theta,\mathrm{joint}})$ for $R$
\EndWhile
\end{algorithmic}
\end{algorithm}

This training regiment presents two key advantages: over centralized training, the action complexity is linear w.r.t. the number of agents and local actions as in the independent training; over independent training, the reward is not spurious. Indeed, in $\mathcal L^{\mathrm{\mathrm{stable}}}_{\mathrm{FM}}(\mathbb F^{\theta,\mathrm{joint}})$, by equation \ref{equ:split}, the computation of $\Fin$ and $\Fout^*$ reduces to computing the inflow and star-outflow for each local GFlowNets. Also, only the global reward $R$ appears.  The remaining, possibly difficult, challenge is the estimation of local ingoing flows from the local observations as it depends on the local transitions $T^{(i)}$, see first point below.  At this stage, the relations between the global/joint/local flow-matching constraints are unclear, and furthermore, the induced policy of the local GFlowNets still depends on the yet undefined local rewards. The following point clarify those links.

First, the collection of local GFlowNets induces local transitions kernels $T^{(i)}:\mathcal O^{(i)}\rightarrow \mathcal O^{(i)}$ which are not uniquely determined in general by a single GFlowNets. Indeed, the local policies induce a global policy $\pi(s_t\rightarrow a_t) := \prod_{i\in I} \pi(o^{(i)}_t \rightarrow a_t^{(i)})$. Then, the (virtual) transition kernel $T^{(i)}(a_t^{(i)}) = p^{(i)}(T(a_t) | a_t^{(i)})$  of the GFlowNets $i$  depends on the distribution of states and the corresponding actions of \textbf{all} local GFlowNets.  See appendix \ref{appendix:localization} for details. Note that $T^{(i)}$ are derived from the actual environment $T$ and the joint GFlowNets on the multi-agent environment with the true transition $T$, while the Theorem above ensures splitting of star-inflows and virtual rewards only for the approximated $\tilde T$. Furthermore, local rewards may be formalized as stochastic rewards to take into account the lack of information of a single agent, but they are never used during training: the allocation of rewards across agents is irrelevant. Only the virtual rewards $\Rhat^{(i)}=\Fout^{(i),*}-\Fin^{(i)}$ are relevant but they are effectively free. As a consequence, Algorithm \ref{alg:joint} effectively trains both the joint flow as well as a product environment model. But since in general $T\neq \tilde T$ Algorithm \ref{alg:joint}  may fail to reach satisfactory convergence.

Second, beware that in our construction of the joint MA-GFlowNets, there is no guarantee that the global initial flow is split as the product of the local initial flows. In fact, we favor a construction in which $\Finit$ is non-trivial to account for the inability of local agents to assess synchronization with another agent. See Appendix \ref{appendix:terminal_const} for formalization details.

Third, we may partially link local and global flow-matching properties.
\begin{theorem}\label{theo:3}
    Let $(\mathbb F^{(i)})_{i \in I}$ be local GFlowNets and let $\mathbb F$ be their joint GFlowNets.
Assume that none of the local GFlowNets are zero and that each $\hat R^{(i)}\geq0$. If $\mathbb F$ satisfies equation \ref{eq:FM-const}, then there exists an ``essential" subdomain of each $\mathcal O^{(i)}$ on which local GFlowNets satisfy the flow-matching constraint.
\end{theorem}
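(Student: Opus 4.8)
The plan is to transport the global flow-matching identity \eqref{eq:FM-const} through the product decomposition \eqref{equ:split} of Theorem \ref{theo:joint_gmfn_exists}, and then recover a local flow-matching identity on the part of each $\mathcal O^{(i)}$ where that decomposition is non-degenerate. Concretely, I would first pick a reference measure $\lambda$ on $\StateSpace$ compatible with \eqref{equ:split} and pass to densities, so that — using $\Fout=\Fin$ together with the reward identity $R=\prod_{i}\hat R^{(i)}$ already supplied by Theorem \ref{theo:joint_gmfn_exists} under the hypothesis $\hat R^{(i)}\ge0$ — the densities of $\Fout^{*}$, $\Fin$ and $R$ all factor pointwise through the local densities. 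The global constraint $\Fout^{*}+R=\Fin$ then reads, at $\lambda$-a.e.\ state $s=(o^{(i)})_{i\in I}$, as a single scalar identity in the local inflow and outflow densities.

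Next I would fix an agent $i$ and restrict this identity to the fibre $\{o^{(-i)}\}\times\mathcal O^{(i)}$ over a value $o^{(-i)}$ of the other agents' observations at which $\prod_{j\neq i}\Fin^{(j)}$ and $\prod_{j\neq i}\Fout^{(j),*}$ are positive. On such a fibre the other agents' factors become positive constants, and the identity collapses to $\Fin^{(i)}=c\,\Fout^{(i),*}+R^{(i)}_{o^{(-i)}}$ on $\mathcal O^{(i)}$ with $c>0$ and $R^{(i)}_{o^{(-i)}}\ge0$ — exactly a local flow-matching constraint for the GFlowNet $\mathbb F^{(i)}$ rescaled by $c$ (a rescaling that leaves the induced policy $\pi^{(i)}$ unchanged) with free local reward $R^{(i)}_{o^{(-i)}}$. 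I would then take the essential subdomain $U^{(i)}\subseteq\mathcal O^{(i)}$ to be the projection to $\mathcal O^{(i)}$ of the support of the global flow — equivalently, the union over such good fibres — which, because no local GFlowNet is zero, is co-null for $\Fin^{(i)}$ and $\Fout^{(i),*}$ and hence is never left by the Markov chain generated by $\pi^{(i)}$; on $U^{(i)}$ the rescaled local GFlowNet satisfies \eqref{eq:FM-const}, and since $R^{(i)}_{o^{(-i)}}$ is a legitimate (nonnegative) reward, Theorem \ref{theo:sampling} applies locally.

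The step I expect to be the main obstacle is turning the fibrewise (and a priori fibre-dependent) constants $c$ and rescalings into a single coherent local statement, and in particular handling the fact — flagged after Theorem \ref{theo:joint_gmfn_exists} — that $\Finit$ is deliberately \emph{not} the product of the local $\Finit^{(i)}$: expanding $\prod_i(\Finit^{(i)}+\Fout^{(i),*}\pi^{(i),*}T^{(i)})$ produces cross terms that must be shown to vanish on $U^{(i)}$, which is precisely where the word ``essential'' does its work and where \emph{both} hypotheses are genuinely used — $\hat R^{(i)}\ge0$ to keep $R^{(i)}_{o^{(-i)}}\ge0$ and to pin down the ratios $c$, and nonvanishing of every local GFlowNet to keep the relevant products strictly positive so that no factor collapses. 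A secondary point worth care: the virtual transition kernels $T^{(i)}$ are only determined up to the global state--action distribution, so ``local flow-matching'' must be read with respect to the $T^{(i)}$ furnished by the joint construction, and localising to the accessible set $U^{(i)}$ is exactly what makes this unambiguous; in the cyclic or continuous case the support-based definition of $U^{(i)}$ should be replaced by a $\lambda$-a.e.\ version, which the stable form \eqref{eq:FMloss_stable} of flow-matching already accommodates.
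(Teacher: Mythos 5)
There is a genuine gap, and it sits exactly where you flag ``the main obstacle''. The paper's proof never introduces fibrewise constants at all: it works on the ``essential'' domain $\mathcal O^{(i)}_{\mathrm{life}}$ coming from the Asynchronous Unanimous STOP formalization (Appendix~\ref{appendix:terminal_const}), where the reward is supported only on the ``purgatory'' states, so on rectangles $\prod_i A^{(i)}$ with $A^{(i)}\subset\mathcal O^{(i)}_{\mathrm{life}}$ the global constraint $\Fin=\Fout$ together with the splitting \eqref{equ:split} reduces to the clean product identity $\prod_i \Fin^{(i)}(A^{(i)})=\prod_i \Fout^{(i),*}(A^{(i)})$. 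The hypothesis $\Rhat^{(i)}\geq 0$ (weak flow-matching) gives the termwise inequality $\Fin^{(i)}(A^{(i)})\geq \Fout^{(i),*}(A^{(i)})$, and equality of the products with all factors nonzero (none of the local GFlowNets is zero) forces equality factor by factor — that single squeeze is the whole argument, and it is what actually ``pins down the ratios''. Your route instead keeps the reward term, divides only by the other agents' inflow factors, and arrives at $\Fin^{(i)}=c\,\Fout^{(i),*}+R^{(i)}_{o^{(-i)}}$ with a fibre-dependent $c$; the claim that this is the flow-matching constraint of ``$\mathbb F^{(i)}$ rescaled by $c$'' does not work, because rescaling a GFlowNet multiplies $\Fout^{(i),*}$, $\Finit^{(i)}$ and hence $\Fin^{(i)}$ by the same constant, leaving the constraint invariant rather than producing the asymmetric factor $c$. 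To conclude flow-matching for $\mathbb F^{(i)}$ itself you need $c=1$ and the reward term gone, and you provide no mechanism for either; so the central step of the theorem is missing from the proposal.

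Two secondary mismatches with the paper. Your essential subdomain is the projection of the support of the global flow, whereas in the paper it is the structural ``life'' part of $\mathcal O^{(i)}$; the latter is what makes the reward drop out of the rectangle identity, and a support-based domain does not give you that (indeed the global flow is concentrated near termination exactly where the reward lives). And the worry about cross terms from $\Finit$ is moot: in the joint construction $\Finit:=\prod_i(\Fin^{(i),*}+\Finit^{(i)})-\prod_i\Fin^{(i),*}$ is \emph{defined} so that $\Fin=\prod_i\Fin^{(i)}$ holds exactly (asynchronous start); the cross terms are absorbed by design, not something to be shown to vanish.
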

The restriction regarding the domain on which local GFlowNets satisfy the flow-matching constraint is detailed in  Appendix \ref{appendix:terminal_const}, this sophistication arises because of the stopping condition of the multi-agent system. The essential domain may be informally formulated as ``where the local agent is still playing": an agent may decide (or be forced) to stop playing, letting other agents continue playing, the forfeited player is then on hold until the game stops and rewards are actually awarded.

To conclude, the joint GFlowNets provides an approximation of the target global GFlowNets, this approximation may fail if the transition kernel $T$ is highly coupled or if the reward is not a product.

\subsubsection{Conditioned Joint Flow Network}
Training of MA-GFlowNets via training of the virtual joint GFlowNets is an approximation of the centralized training. In fact, the space of joint GFlowNets is smaller than that of the general MA-GFlowNets, as only rewards that splits into the product $R(s)=\prod_{i\in I} R^{(i)}(o^{(i)})$ may be exactly sampled. If the rewards are not of this form, the training may still be subject to a spurious reward or mode collapse.
One may easily build more sophisticated counter-examples based on this one.

Our proposed solution is to build a conditioned JFN inspired by augmented flows \cite{dupont2019augmented,huang2020augmented} methods, which allow the bypass of architectural constraints for Normalization flows \cite{papamakarios2021normalizing}. The trick is to add a shared ``hidden" state to the joint MA-GFlowNets allowing the agent to synchronize.   This hidden state is constant across a given episode and may be understood as a cooperative strategy chosen beforehand by the agents. 
Formally, this simply consist in augmenting the state space and the observation spaces by a strategy space $\Omega$ to get $\tilde \StateSpace = \StateSpace\times \Omega$ and $\tilde {\mathcal O}^{(i)}={\mathcal O}^{(i)}\times \Omega $, $\Finit$ is augmented by a distribution $\mathbb P$ on $\Omega$, the observation projections as well as transition kernel act trivially on $\Omega$ ie $T(s;\omega) = T(s)$ and $p^{(i)}(s;\omega)=(p^{(i)}(s), \omega)$. The joint MA-GFlowNets theorem applies the same way, beware that the observation part of $T^{(i)}$ now have a dependency on $\Omega$ even though $T$ does not.  In theory, $\Omega$ may be big enough to parameterize the whole trajectory space $\mathcal T$, in which case it is possible to have decoupled conditioned local transition kernels $T^{(i)}(\cdot; \omega)$ so that $\tilde T = T$ on a relevant domain. Furthermore,  the limitation on the reward is also lifted if the flow-matching property is enforced on the expected joint flow $\mathbb E_{\omega}\mathbb F^{\mathrm{joint}}$. Two possible losses may be considered: $\mathbb E_\omega\mathcal L_{\text{FM}}^{\text{stable}}(\mathbb F^{\theta,\mathrm{joint}}(\cdot ; \omega))$ or $\mathcal L_{\text{FM}}^{\text{stable}}(\mathbb E_\omega\mathbb F^{\theta,\mathrm{joint}}(\cdot ; \omega))$. The former, which we use in our experiments, is simpler to implement but does not a priori lift the constraint on the reward.

The training phase of the Conditioned Joint Flow Network (CJFN) is shown in Algorithm~\ref{alg:cond_joint} in the appendix.
We first sample trajectories with policy
$
    o^{(i)}_t = p_i(s_t^{(i)})$ and $\pi^{(i)}_\omega( o_{t}^{(i)}\rightarrow a_{t}^{(i)}),~~i\in I
$
with $a_t = (a_t^{(i)}:i\in I )$ and $s_{t+1} = T(s_t,a_t)$.
Then we train the sampling policy by minimizing the FM loss $\mathbb E_\omega\mathcal L_{\text{FM}}^{\text{stable}}(\mathbb F^{\theta,\mathrm{joint}}(\cdot ; \omega))$.

\section{Related Works}

\textbf{Generative Flow Networks:}

Nowadays, GFlowNets has achieved promising performance in many fields, such as
molecule generation~\cite{bengio2021flow,malkin2022trajectory,jain2022biological}, discrete probabilistic modeling~\cite{zhang2022generative}, structure learning~\cite{deleu2022bayesian}, domain adaptation~\cite{zhu2023generalized}, graph neural networks training~\cite{li2023generative,li2023dag}, and large language model training~\cite{li2023gflownets,hu2023amortizing,zhang2024improving}.
This network could sample the distribution of trajectories with high rewards and can be useful in tasks where the reward distribution is more diverse.
GFlowNets is similar to reinforcement learning (RL) \cite{sutton2018reinforcement}. However,  RL aims to maximize the expected reward favoring mode collapse onto the single highest reward yielding action sequence, while GFlownets favor diversity. 
Tiapkin et al. \cite{tiapkin2024generative} bridged GFlowNets to entropy-RL. 

Comprehensive distributed GFlowNets framework is still lacking. Previously, the meta GFlowNets algorithm~\cite{ji2024meta} was proposed to solve the problem of GFlowNets distributed training but it requires the observation state and task objectives of each agent to be the same, which is not suitable for multi-agent problems. Later, a multi-agent GFlowNets algorithm was proposed in \cite{luo2024multi}, but lacked theoretical support and general framework. 
Connections between MA-GFlowNets and multi-agent RL are discussed in Appendix C.


\textbf{Cooperative Multi-agent Reinforcement Learning:}
There exist many MARL algorithms to solve collaborative tasks. Two extreme algorithms for thus purpose are independent learning \cite{tan1993multi} and centralized training.
Independent training methods regard the influence of other agents as part of the environment, but the team reward function often has difficulty in measuring the contribution of each agent, resulting in the agent facing a non-stationary environment \cite{sunehag2017value}.

On the contrary, centralized training treats the multi-agent problem as a single-agent counterpart. However, this method has high combinatorial complexity and is difficult to scale beyond dozens of agents \cite{yang2019alpha}.
Therefore, the most popular paradigm is centralized training and decentralized execution (CTDE), including value-based \cite{sunehag2017value,rashid2018qmix,son2019qtran,wang2020qplex} and policy-based \cite{lowe2017multi,yu2021surprising,kuba2021trust} methods.
The goal of value-based methods is to decompose the joint value function among the agents for decentralized execution. This requires satisfying the condition that the local maximum of each agent's value function should be equal to the global maximum of the joint value function.


The methods, VDN \cite{sunehag2017value} and QMIX \cite{rashid2018qmix} employ two classic and efficient factorization structures, additivity and monotonicity, respectively, despite their strict factorization method.
In QTRAN \cite{son2019qtran} and QPLEX \cite{wang2020qplex}, extra design features are introduced for decomposition, such as the factorization structure and advantage function.
The policy-based methods extend the single-agent TRPO \cite{schulman2015trust} and PPO \cite{schulman2017proximal} into the multi-agent setting, such as MAPPO \cite{yu2021surprising}, which has shown surprising effectiveness in cooperative multi-agent games.
These algorithms maximize the long-term reward, however, it is difficult for them to learn more diverse policies in order to generate more promising results.

\section{Experiments}



We first verify the performance of CFN on a multi-agent hyper-grid domain where partition functions can be accurately computed.  We then compare the performance of CFN and CJFN with standard MCMC and some RL methods to show that our proposed sampling distributions better match normalized rewards.
All our code is done using the PyTorch \cite{paszke2019pytorch} library.
We re-implemented the multi-agent RL algorithms and other
baselines.

\begin{figure*}
	\centering
    \subfloat[Win Rate]{
	\includegraphics[width=2.0in]{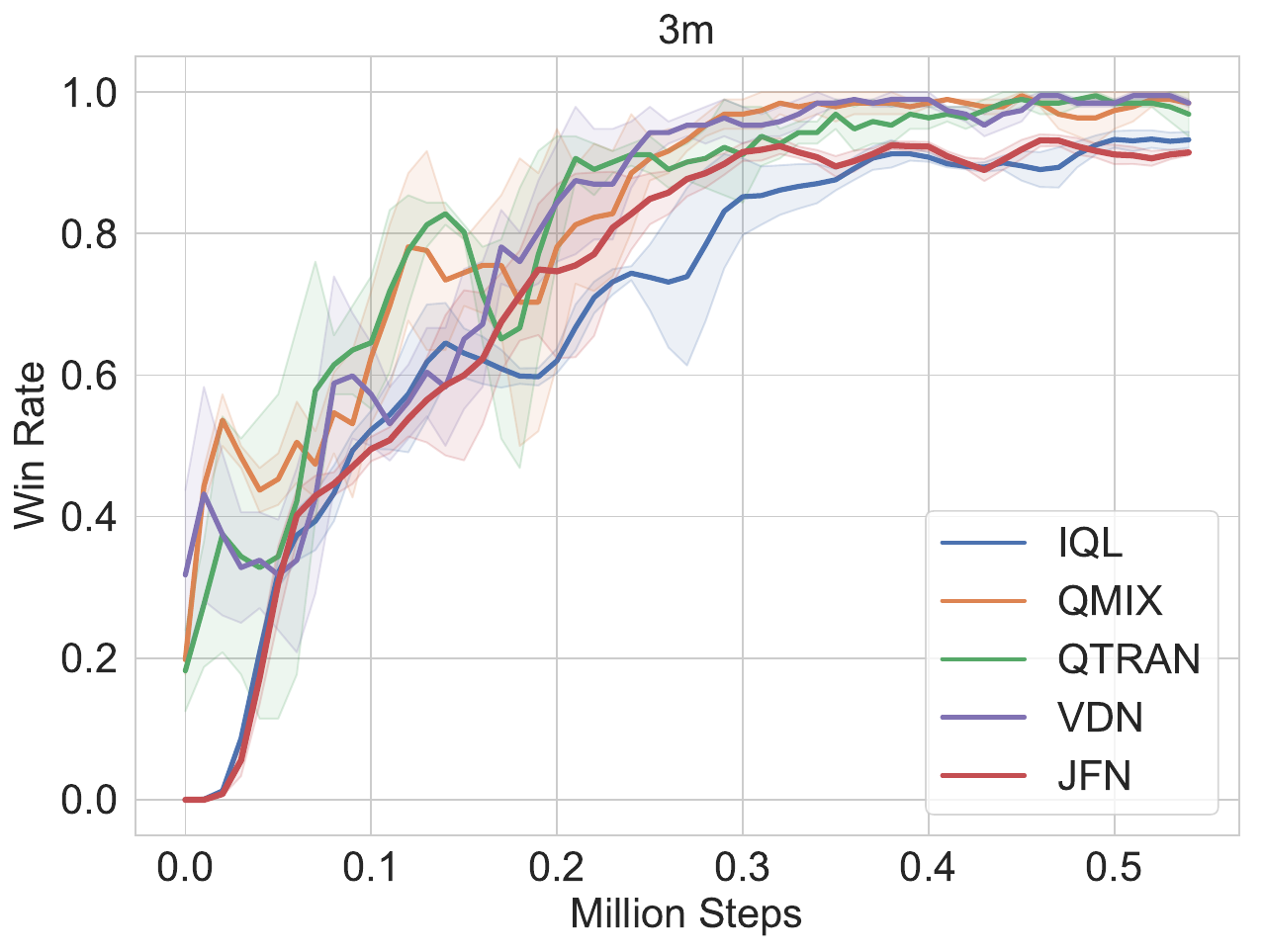}
	}
     \subfloat[Episode 1]{
	\includegraphics[width=1.6in]{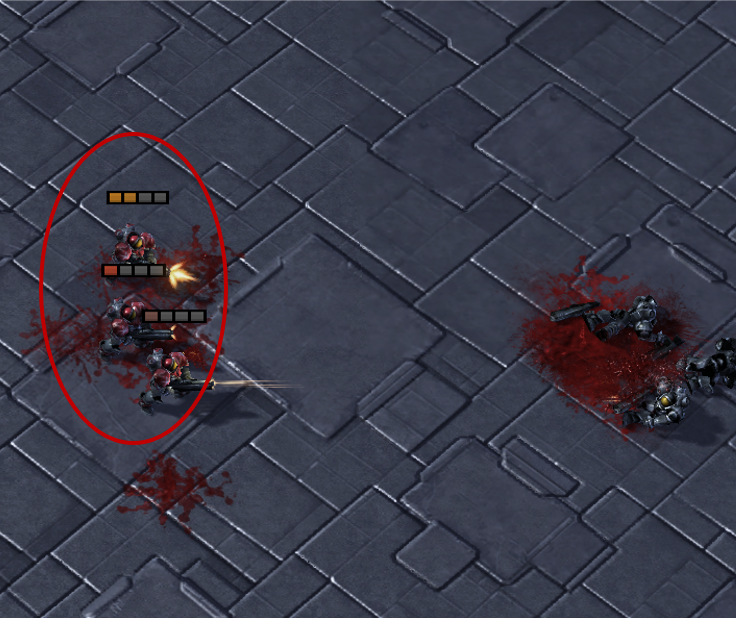}
	}
 \subfloat[Episode 2]{
	\includegraphics[width=1.6in]{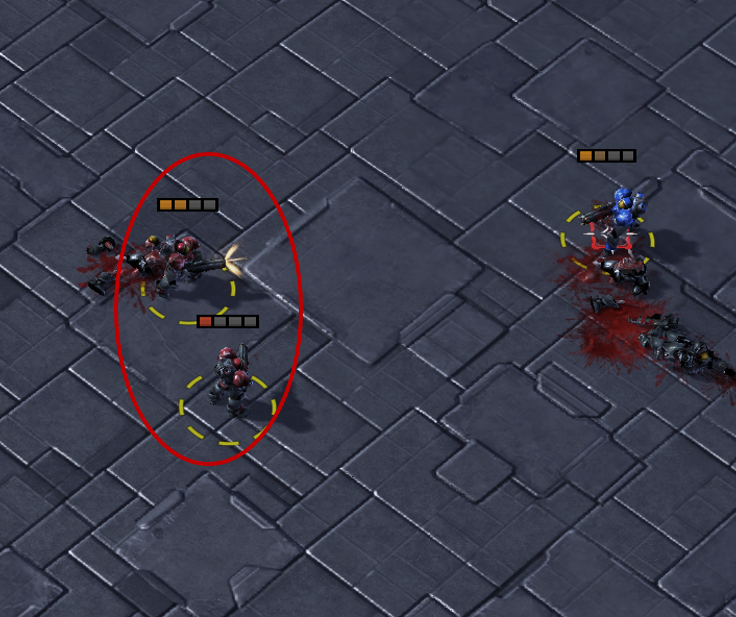}
	}

 	\caption{The performance comparison results on the 3m map of StarCraft}
	\label{fig_3m}
\end{figure*}

\subsection{Hyper-grid Environment}
We consider a multi-agent MDP where states are the cells of a $N$-dimensional hypercubic grid of side length $H$.
In this environment, all agents start from the initialization point $x = (0,0,\cdots)$, and are only allowed to increase coordinate $i$ with action
$a_i$.
In addition, each agent has a stop action. When all agents choose the stop action or reach the maximum $H$ of the episode length, the entire system resets for the next round of sampling.
The reward function is designed as
\begin{equation}
    R(x)=R_0+R_1 \prod_i \mathbb{I}\left(0.25<\left|x_i / H-0.5\right|\right)\\
    +R_2 \prod_i \mathbb{I}\left(0.3<\left|x_i / H-0.5\right|<0.4\right),
\end{equation}
where $x=[x_1,\cdots,x_{I}]$ includes all agent states and the reward term $0 < R_0 \ll R_1 < R_2$ leads a distribution of modes.
The specific details about the environments and experiments can be found in the appendix.

We compare CFN and  CJFN with a modified MCMC and RL methods.
In the modified MCMC method \cite{xie2021mars}, we allow iterative reduction of coordinates on the basis of joint action space and cancel the setting of stop actions to form an ergodic chain.
As for the RL methods, we consider the maximum entropy algorithm, i.e., multi-agent SAC \cite{haarnoja2018soft}, and a previous cooperative multi-agent algorithm, i.e., MAPPO, \cite{yu2021surprising}. 
To measure the performance of these methods, we use the normalized L1 error as $\mathbb{E}[|p(s_f)-\pi(s_f)| \times N]$ with $p(s_f)=R(s_f)/Z$ being the sample distribution computed by the true reward, where $N$ is cardinality of the space of $s_f$.

\begin{wrapfigure}{r}{0.55\textwidth}
   \vspace{-.7cm}
	\centering
    \subfloat{
	\includegraphics[width=0.47\linewidth]{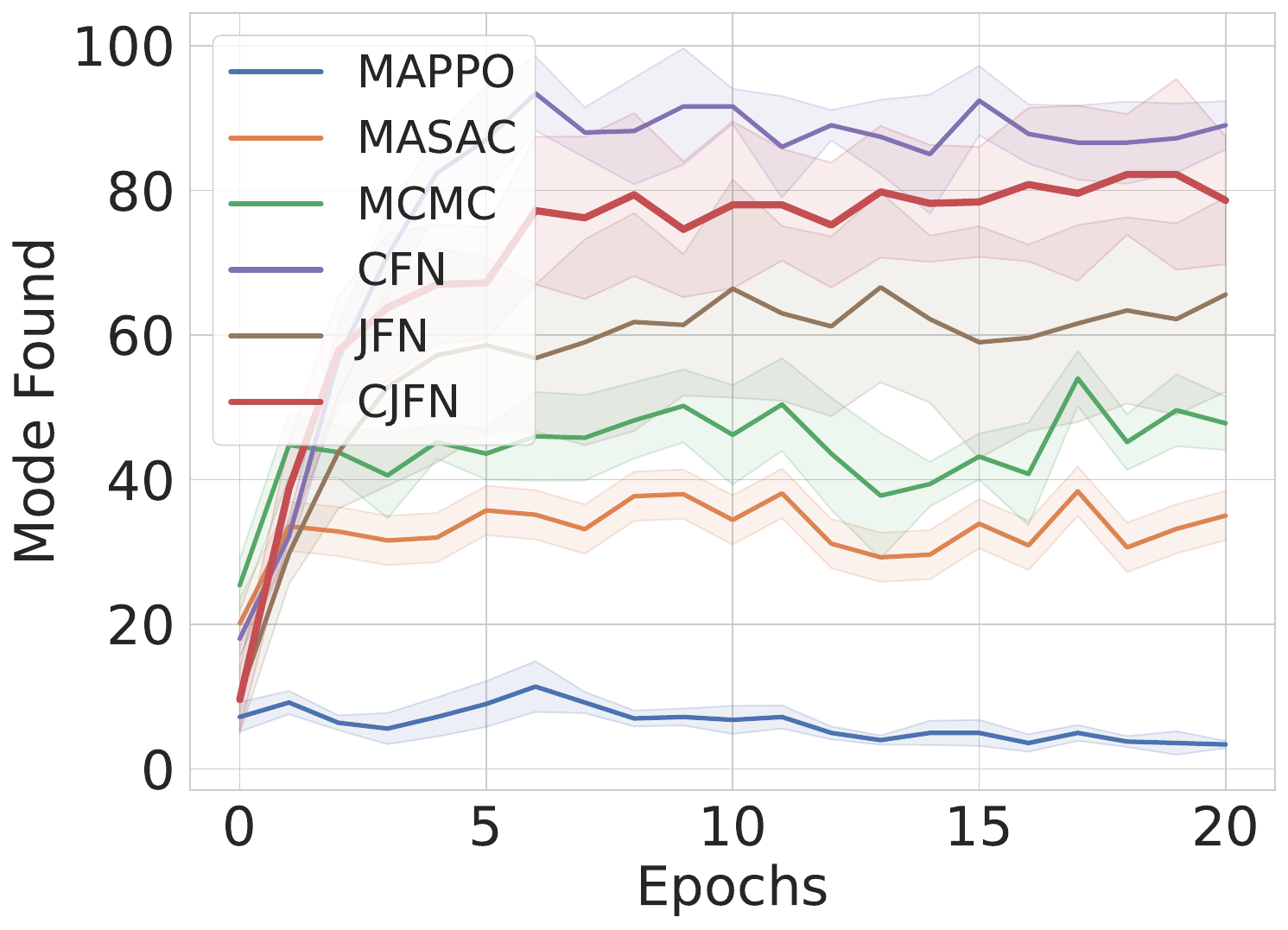}
	}
	\subfloat{
	\includegraphics[width=0.47\linewidth]{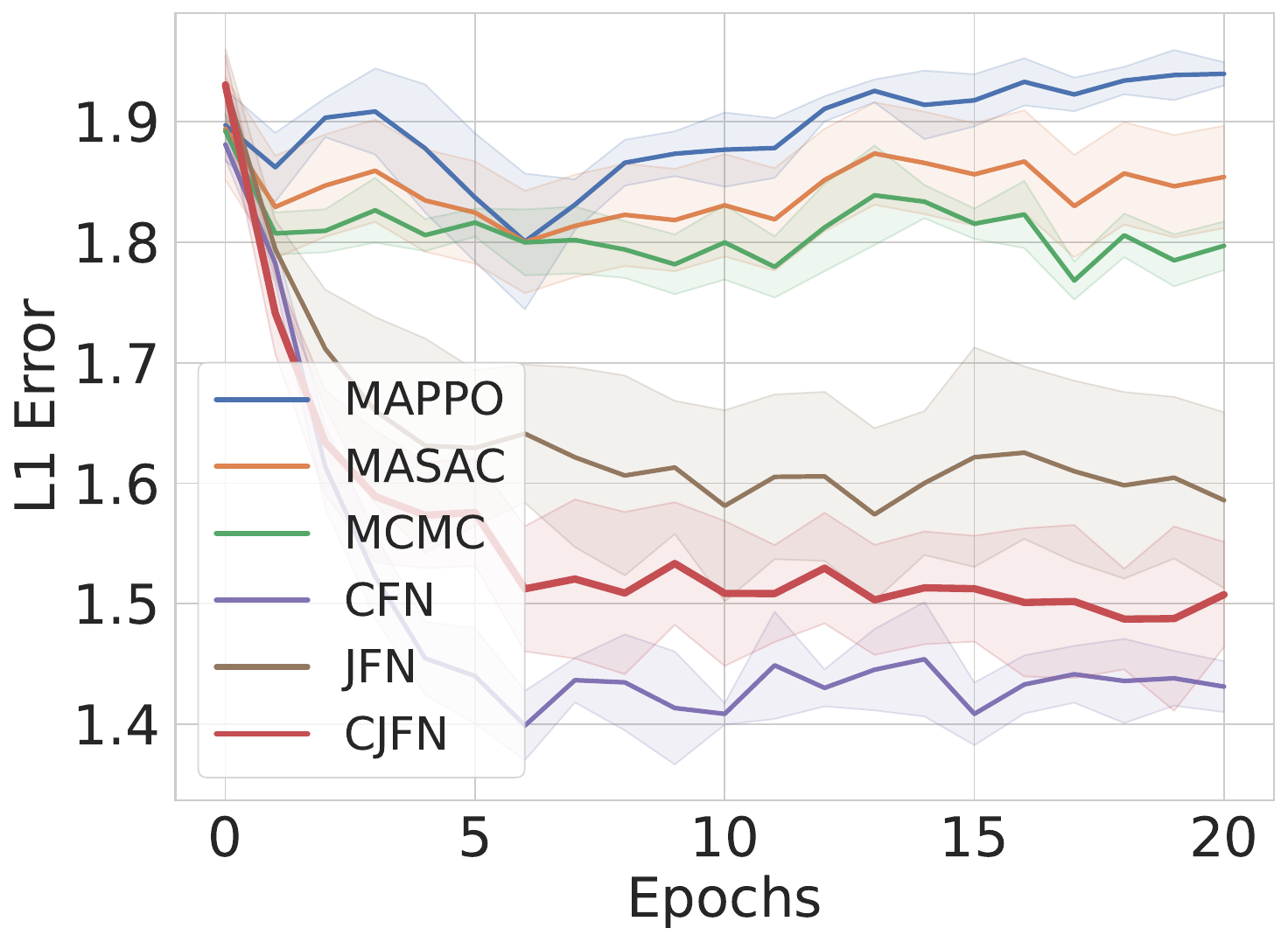}
	}
    
    \subfloat{
	\includegraphics[width=0.47\linewidth]{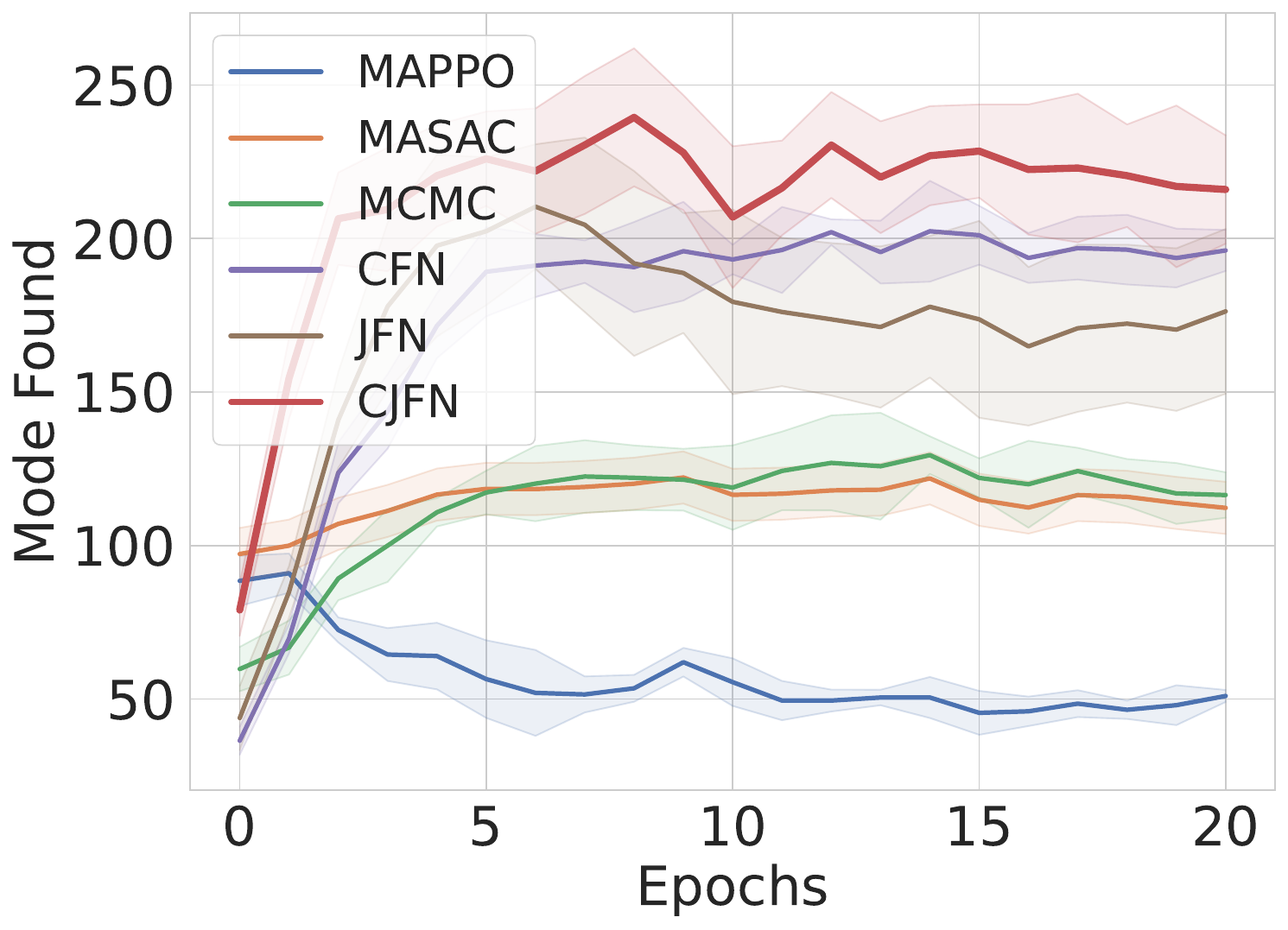}
	}
	\subfloat{
	\includegraphics[width=0.47\linewidth]{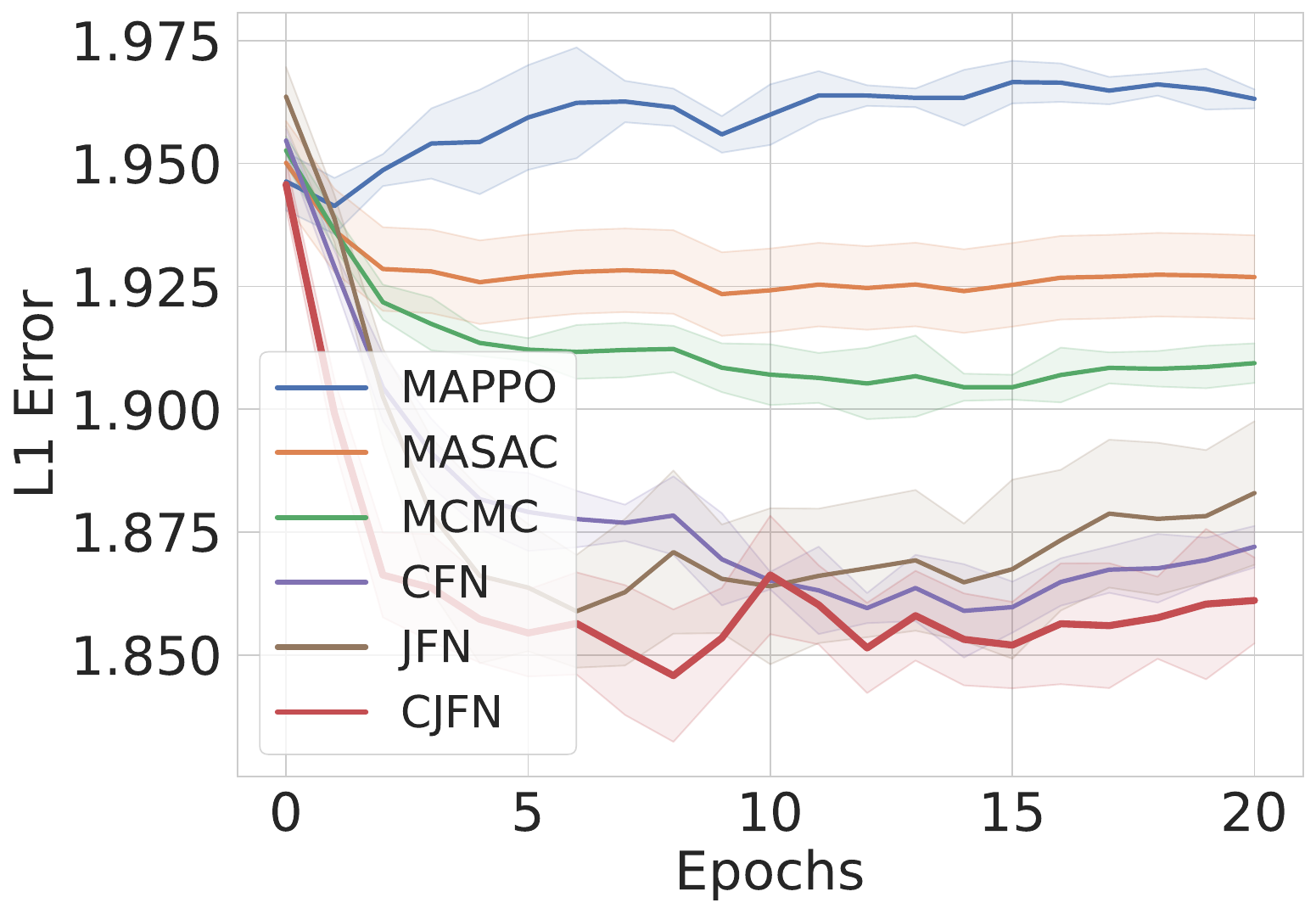}
	}

    	\subfloat{
	\includegraphics[width=0.47\linewidth]{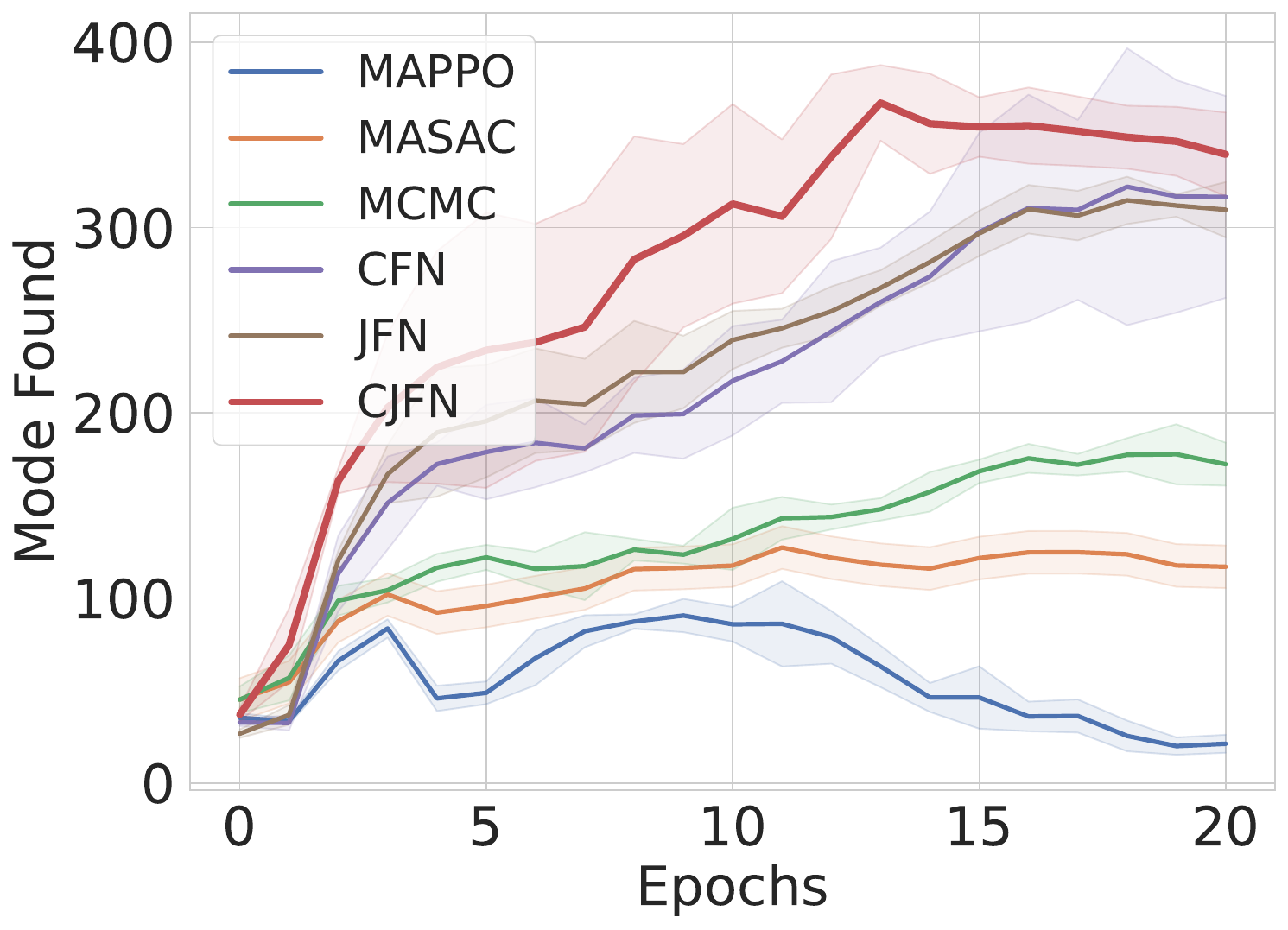}
	}
	\subfloat{
	\includegraphics[width=0.47\linewidth]{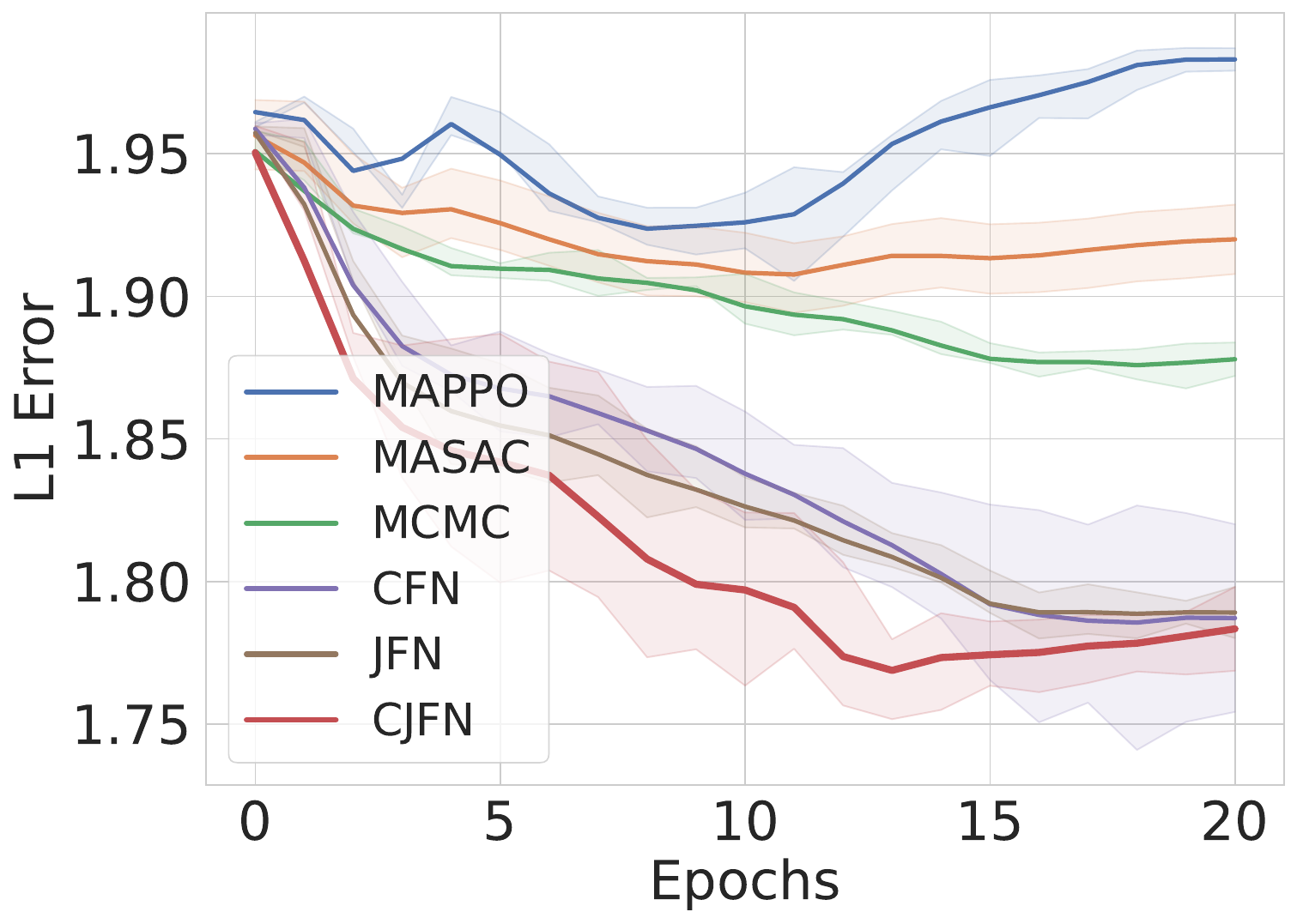}
	}
 	\caption{Mode Found  (Left, higher is better) and L1 error (Right, lower is better) performance of different algorithms on hyper-grid environments. \textbf{Top:} Hyper-Grid v1, \textbf{Middle:} Hyper-Grid v2, \textbf{Bot:} Hyper-Grid v3.} \vspace{0.5cm}
	\label{fig_fdn}
     \vspace{-1.4cm}
\end{wrapfigure}
Moreover, we can consider the mode found theme to demonstrate the superiority of the proposed algorithm.

\textcolor{black}{Figure~\ref{fig_fdn} illustrates the performance superiority of our proposed algorithm compared to other methods in the L1 error and Mode Found.
We find that on small-scale environments shown in Figure~\ref{fig_fdn}-Left, CFN can achieve the best performance because CFN can accurately estimate the flow of joint actions when the joint action space dimension is small.
There are two main reasons for the large l1-error index. First, we normalized the standard L1 error and multiplied it by a constant to avoid the inconvenience of visualization of a smaller magnitude. Secondly, when evaluating L1-error, we only sampled 20 rounds for calculation, and with the increase of the number of samples, L1-error will further decrease.}
As the complexity of the estimation of action flow  increases, we find that the performance of CFN degrades while the joint-flow-based methods still achieve good estimation and maintain the speed of convergence, as shown in Figure~\ref{fig_fdn}-Middle. 

\subsection{StarCraft}
Figure~\ref{fig_3m} shows the performance of the proposed algorithm on the StarCraft 3m map \cite{vinyals2017starcraft}, where (a) shows the win rate comparison with different algorithms, and (b) and (c) show the decision results sampled using the proposed algorithm.
In the experiment, the outflow flow is calculated when the flow function is large, and the maximum flow is used to calculate the win rate when sampling. It can be found that since the experimental environment is not a sampling environment with diversified rewards, although the proposed algorithm is not significantly better than other algorithms, it still illustrates its potential in large-scale decision-making. In addition, the proposed algorithm can sample results with more diverse rewards, such as (b) and (c), and the number of units left implies the trajectory reward.

\section{Conclusion}
In this paper, we discussed the policy optimization problem when GFlowNets meets the multi-agent systems. Different from RL, the goal of MA-GFlowNets is to find diverse samples with probability proportional to the reward function. Since the joint flow is equivalent to the product of independent flow of each agent, we designed a CTDE method to avoid the flow estimation complexity problem in a fully centralized algorithm and the non-stationary environment in the independent learning process, simultaneously. Experimental results on Hyper-Grid environments and StarCraft  task demonstrated the superiority of the proposed algorithms.

\textbf{Limitation and Future Work:} Our theory is incomplete as it does not apply to non-cooperative agents and has limited support of different game/agent terminations
or initialization. A local-global principle beyond independent agent policies would also be particularly interesting. Our experiments do not cover the whole range of the theory in particular regarding continuous tasks and CJFN loss on projected GFN. 
An ablation study analyzing the tradeoff of small versus big condition space $\Omega$ would enlighten its importance. Finally, a metrization of the space of global GFlowNet would allow a more precise functional and optimization analysis of JFN/CJFN and their limitations.

%

\bibliographystyle{unsrt}
\bibliography{note}

\newpage
\appendix
\onecolumn

\section{Joint Flow Theory}
The goal of this section is to lay down so elementary points on the measurable theory of MA-GFlowNets as well as prove the main theorem on the joint GFlowNet.
\subsection{Notations on Measures and Kernels}
We mostly use notations from \cite{douc2018markov} regarding kernels and measures.  The measurable GFlowNet formalism is that of \cite{brunswic2025ergodic} In the whole section, since we deal with technicalities, we use kernel type notations for image by kernels and maps (seen as deterministic kernels). So that for a kernel $K:X\rightarrow Y$ and a measure $\mu$ on $X$ we denote by $\mu K$ the measure on $Y$ defined by $\mu K (B) = \int_{x\in X} K(x \rightarrow B)d\mu (x)$ for $B\subset Y$ measurable and $\mu\otimes K$ is the measure on $X\times Y$ so that $\mu\otimes K (A\times B) = \int_{x\in A}K(x\rightarrow B) d\mu (x)$.  Recall that a measure $\nu$ dominates a measure $\mu$  which is denoted $\mu \ll \nu$, if for all measurable $A$, $\nu(A)=0\Rightarrow \mu(A)=0$. The Radon-Nykodim Theorem ensures that if $\mu\ll \nu$ and $\mu,\nu$ are finite then there exists $\varphi \in L^{1}(\nu)$ so that $\mu=\varphi \nu$. This function $\varphi$ is called the Radon-Nykodim derivative and is denoted $\frac{d\mu}{d\nu}$. We favor notations $\mu(A\rightarrow B)$ when $\mu$ is a measure on $X\times Y$ and $A\subset X$ and $B\subset Y$; also  $\mu(A\rightarrow \cdot)$ means the measure $B\mapsto \mu(A\rightarrow B)$.
We provide the notations in Table \ref{table:notations}.

\begin{table}[ht]
\caption{The Descriptions of Different Notations}
\label{table:notations}
\vskip 0.15in
\begin{small}
\begin{center}
\begin{tabular}{rl}
\toprule
Notation & Descriptions \\

$\mathcal{S}$  & state space, any measurable space with special elements $s_0, s_f$\\
$\lambda$ & background measure on $\StateSpace$; usually the counting measure on discrete spaces, or the Lebesgue measure on continuous spaces\\
$\mathcal{A}$  & action space, a measurable space with special element $\STOP$, it is a bundle over $\StateSpace$ \\
$\mathcal{S}^{(i)}=\mathcal O^{(i)}$ & state/observation space of agent $i$, same  properties as $\StateSpace$ \\
$\mathcal{A}^{(i)}$ & action space of agent $i$, has a special element $\STOP^{(i)}$, it is a bundle over $\StateSpace^{(i)}$ \\
$\mathcal{A}_{s}$ & action space on state $s$, ie the fiber above $s$ of the bundle $\ActionSpace\xrightarrow{S} \StateSpace$ \\
$\mathcal{A}_{o^{(i)}}^{(i)}$ & action space on observation $o^{(i)}$, ie the fiber above $o^{(i)}$ of the bundle $\ActionSpace^{(i)}\xrightarrow{S^{(i)}} \mathcal O^{(i)}$ \\
$S$  & state map $S(s,a)=s$, i.e., return the current state $s$ \\
$S^{(i)}$ & state map of agent $i$ \\
$T$  & Transition map $T(s,a)=T_s(a)=s'$, i.e., transfer the current state $s$ into next state $s'$\\
$T^{(i)}$  & Transition map of agent $i$, depend in general of the chosen action of all agent and is thus stochastic\\
$R$ & the reward measure on $\StateSpace\setminus\{s_0,s_f\}$, usually known via its density $r$ with respect to the background measure $\lambda$\\
$R^{(i)}$ & the perceived reward measure of agent $i$, usually intractable and stochastic\\
$\Rhat$ & the GFlowNet reward measure on $\StateSpace\setminus\{s_0,s_f\}$ used at inference time instead of $R$ for stop decision making\\
$\Rhat^{(i)}$ & the GFlowNet reward measure of agent $i$ on $\mathcal O^{(i)}\setminus\{s_0,s_f\}$ used at inference time instead of $R^{(i)}$ for stop decision making\\
$F_{\text{out}}$ & out-flow or state-flow, non-negative measure on $\StateSpace\setminus \{s_0,s_f\}$ \\
$F_{\text{out}}^{(i)}$ & out-flow or state-flow of agent $(i)$, non-negative measure on $\mathcal O^{(i)}\setminus \{s_0,s_f\}$ \\
$F_{\text{out}}^*$ & star out-flow, non-negative measure on the space $\StateSpace\setminus \{s_0,s_f\}$ such that  $\Fout^* :=\Fout-R$ \\
$F_{\text{out}}^{(i),*}$ & star out-flow of agent $i$, non-negative measure on the space $\mathcal O^{(i)}\setminus \{s_0,s_f\}$ such that  $\Fout^{(i),*} =\Fout^{(i)}-R^{(i)}$ \\
$\pi$ & the forward policy, can call STOP action \\
$\pi^*$ & the forward policy defined on the space $\StateSpace\setminus \{s_0,s_f\}$, does not call STOP action \\
$\pi^{(i)}$ & the forward policy of agent $i$, can call STOP action \\
$\pi^{(i),*}$ & the star forward policy of agent i defined on the space $\StateSpace\setminus \{s_0,s_f\}$, does not call STOP action \\
$F_{\text{init}}$ & the unnormalized distribution used to sample $s_1$ while moving $s_0\rightarrow s_1$\\
$F_{\text{init}}^{(i)}$ & the unnormalized distribution used by agent $i$ to sample $s_1^{(i)}$ while moving $s_0\rightarrow s_1$\\
$r(s)$ & the density of reward at $s$ on a continuous statespace \\
\bottomrule
\end{tabular}
\end{center}
\end{small}
\vskip -0.1in
\end{table}

\color{black}
\subsection{An Introduction for Notations}

We understand that our formalism is abstract, this section is devoted justifying our choices and providing examples.
\subsubsection{Motivations}
To begin with, our motivation to  formalize the action space as a measurable bundle $\ActionSpace := \{(s,a) ~|~ s\in\StateSpace, a\in \ActionSpace_s\}\xrightarrow{S} \StateSpace$ is three fold:
\begin{enumerate}
    \item The available actions from a state may depend on the state itself: on a grid, the actions available while on the boundary of the grid are certainly more limited than while in the middle. More generally, on a graph, actions are typically formalized by edges $s\xrightarrow{a} s'$ of the graph, the data of an edge contains both the origin $s$ and the destination $s'$. In other words, on graphs, actions are bundled with an origin state.  It is thus natural to consider the actions as bundled with the origin state. When an agent is transiting from a state to another via an action, the state map tells where it comes from while the transition map tells where it is going.

    \item We want our formalism to cover as many cases as possible in a unified way: Graphs, vector spaces with linear group actions or mixture of discrete and continuous state spaces. Measures and measurable spaces provide a nice formalism to capture the quantity of reward on a given set or a probability distribution.

    \item We want a well-founded and possibly standardized mathematical formalism. In particular, the policy takes as input a state and returns a distribution of actions. the actions should correspond to the input state. To avoid having a cumbersome notion of policy as a family of distributions $\pi_s$ each on $\ActionSpace_s$, we prefer to consider the union of the state-dependent action spaces $\mathcal A:= \bigcup_{s\in \mathcal S} \mathcal{A}_s$ and define the policy as Markov kernel $\StateSpace \rightarrow \ActionSpace$.  However, we still need to satisfy the constraint that the distribution $\pi(s)$ is supported by $\mathcal A_s$.  Bundles are usual mathemcatical objects formalizing such situations and constraints and are thus well suited for this purpose and the constraint is easily expressed as $S\circ \pi(s) = s, \forall s\in\StateSpace$.
\end{enumerate}

Our synthetic formalism comes with a few drawbacks due to the level of abstraction:
\begin{enumerate}
    \item The notation $\pi(s)$ differs from the more common notation $\pi(s, a)$ as the action already contains $s$ implicitly.

    \item We need to use Radon-Nikodym derivative.    At a given state, on a graph, a GFlowNets has a probability of stopping $$\mathbb P(\STOP | s) = \frac{R(s)}{\Fout(s)}.$$
    On a continuous statespace with reference measure $\lambda$, the stop probability is
    $$\mathbb P(\STOP | s) = \frac{r(s)}{f_{\mathrm{out}}(s)}$$
    where $r(s)$ is the density of reward at $s$ and $f_{\mathrm{out}}(s)$ is the density of outflow at $s$. A natural measure-theoretic way of writing these equations as one is via Radon-Nikodym derivation: given two measures $\mu,\nu$; if $\mu(X)=0 \Rightarrow \nu(X)=0$ for any measurable $X\subset \StateSpace$ then $\mu$ is said to dominate $\nu$ and, by Radon-Nikodym Theorem, there exists a measurable function $\varphi \in L^1(\mu)$ such that $\nu(X)=\int_{x\in X}\varphi(x) d\nu(x)$ for all measurable $X\subset \StateSpace$. This $\varphi$ is the Radon-Nikodym derivative $\frac{d\nu}{d\mu}$.

    If one has a measure $\lambda$ dominating both $R$ and $\Fout$ and if $\Fout$ dominated $R$ then
    $$\mathbb P(\STOP | s) := \frac{dR}{d\Fout}(s) = \frac{dR}{d\lambda}(s) \times \left( \frac{d\Fout}{d\lambda}\right)^{-1}.$$
    When $\StateSpace$ is discrete, we choose $\lambda$ as the counting measure, and we recover the graph formula above. When $\StateSpace$ is continuous, we choose $\lambda$ as the Lebesgue measure, and we recover the second formula.
\end{enumerate}

\subsubsection{Example}
Consider the $D$-dimensional $W$-width hypergrid case with agent set $I$, see Figure \ref{fig:2DW6}.
The state space is the finite set $\StateSpace = \left(\{1,\cdots, W\}^D\right)^I$, say each agent only observes its own position on the grid so that $\mathcal O^{(i)} = \{1,\cdots,W\}^D$.
the observation-dependent action space of the $i$-th agent $\ActionSpace_{o^{(i)}}^{(i)}$ is a subset of $ H := \{\pm \mathbf 1_k : 1\leq k \leq W \}$ where $\mathbf 1_k$ is the hot-one array $(0,\cdots,0,1,0,\cdots,0)$ with a one at the $k$-th coordinate.
The set $\ActionSpace_{o^{(i)}}^{(i)}$ depends on $s$: if $1<s_k<W$ then $\ActionSpace_{o^{(i)}}^{(i)} = \{\pm \mathbf 1_k : 1\leq k \leq W \}\cup\{\STOP\}$ but if $s_k=1$ then $-\mathbf 1_k\notin \mathcal A_{o^{(i)}}^{(i)}$ and similarly if $s_k=W$. The local total action space is then
$$\ActionSpace_{o^{(i)}}^{(i)}= \left\{(s,a) ~|~ 1\leq s_k\leq W  \text{ and }  1\leq s_k+a_k\leq W  \right\}\cup\{\STOP\} \subset \{1,\cdots,W\}^D\times H \cup \{\STOP \}.$$
The local state maps $S^{(i)}$ is $S^{(i)}({o^{(i)}},a)={o^{(i)}}$.
Since each agent may choose its action freely, for any $s\in \StateSpace, \ActionSpace_s = \prod_{i\in I} \ActionSpace_{o^{(i)}} /\sim $ however, since $\ActionSpace_{o^{(i)}}$  depends on $i$ and $s$ then $\ActionSpace \neq \prod_{i\in I}\ActionSpace_{o^{(i)}}/\sim$.

The local transition kernel $T^{(i)}$ depends both on the global transition kernel and the policies of all the agents.
Two possible choices of transitions depend on whether the agent interacts or not.
In the non-interacting case $T_1(s,a) = s+a$.  If agents may not occupy the same position then the transition rejects the action if the agent moving would put them in the same position; so $ T_2(s, a)  = s+a$  if $s+a$ is legal, otherwise $p^{(i)}\circ T_2 (s,a) = o^{(i)}$ for some $i$. The simplest $T_2$ is to choose $T_2(s,a) = s$ if $s+a$ is illegal.
In this case
$$T_2^{(i)}(o^{(i)},a^{(i)}) = \mathbb P(s+a \text{ is legal}|o^{(i)},a^{(i)})\delta_{o^{(i)}+a^{(i)}} + \mathbb P(s+a \text{ is  illegal}|o^{(i)},a^{(i)})\delta_{o^{(i)}}.$$
Clearly, $ \mathbb P(s+a \text{ is legal}|o^{(i)},a^{(i)})$ depends on the policies and positions of all the agents, then so does the local transition kernels $T_2^{(i)}$.

A non-negative measure $\mu$ on $\StateSpace$ is any function of the form $\mu(X) = \sum_{x\in X}f(x)$ with $f:\StateSpace \rightarrow \mathbb R_+$ any function. Defining the counting measure $\mathrm{\lambda}(X):=\sum_{x\in X}1 = \mathrm{Card}(X)$ we have $\mu = f \lambda$ as measures on $\StateSpace$, or equivalently, $\frac{d\mu}{d\lambda} = f$. We may thus translate any reward or probability distribution on such a hypergrid as a measure.

A policy is a Markov kernel $\StateSpace \rightarrow \ActionSpace$ such that $S\circ \pi = \mathrm{Identity}$. More concretely, it means we have a function that associates to any state $s$ a probability distribution on $\mathcal A$ with support on elements of the form $(s,a)$ with $a\in \ActionSpace_s$.
From the description of measures, such a policy is fully described by a function $q:\ActionSpace  \rightarrow \mathbb R_+$ such that
$$\forall s\in \StateSpace,\sum_{a\in \ActionSpace_s}q(s, a)=1.$$
The policy is then $\pi(s) = \sum_{a\in \ActionSpace_s} q(s,a)\delta_{(s,a)}$.

A GFlowNet on this hypergrid in reward-less notations is given by $(\Finit, \pi^*, \Fout^*)$. Now, $\Finit$ is any measure on $\StateSpace$, it may be given by a pre-chosen family of categorical distribution of the finite set $\StateSpace$. For big $W,D$ and $I$, since $\Finit$ have limited number of parameters, we may choose $\Finit = C \delta_{s_1}$ for some $s_1$, and some trainable constant $C$. The star-policy is similar to $\pi$ except that the $\STOP$ action is absent:
$$\pi^*(s) = \sum_{a\in \ActionSpace_s\setminus \STOP}q^*(s,a), \quad\quad  Z(s) := \sum_{a'\in \ActionSpace_s\setminus \STOP} q(s,a').$$
Finally, $\Fout$ is measure and is thus of the form $$ \Fout^*(X) := \sum_{x\in X} f_{\mathrm{out}}^*(x)$$
for some function $f_{\mathrm{out}}^*:\StateSpace \rightarrow \mathbb R_+$.

Standard notation GFlowNet is then recovered, given a reward $r:\StateSpace\rightarrow \mathbb R_+$, via:
\begin{itemize}
    \item $R(X) = \sum_{x\in X}r(x) $;
    \item $\Fout(X) = \sum_{x\in X}f_{\mathrm{out}}(x)$ with $\forall s\in \StateSpace, f_{\mathrm{out}}(s) = f^*_{\mathrm{out}}(s)+r(s)$;
    \item $q(s,a) =  \frac{f_{\mathrm{out}}^*(s)}{f_{\mathrm{out}}(s)}q^*(s,a)$ if $a\neq \STOP$ and $q(s,\STOP)=\frac{r(s)}{f_{\mathrm{out}}(s)}$.
\end{itemize}

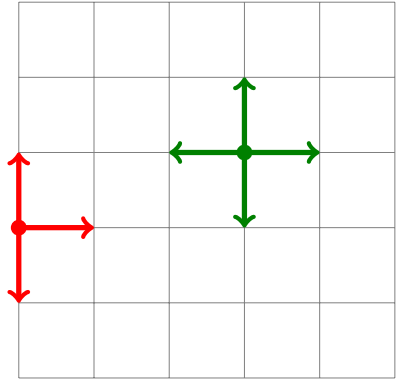
\begin{figure}
    \centering
\begin{tikzpicture}

\draw[step=1cm,gray,very thin] (0,0) grid (5,5);

\fill[red] (0,2) circle (3pt);
\draw[->, red, line width=0.7mm] (0,2) -- (0,3); 
\draw[->, red, line width=0.7mm] (0,2) -- (0,1); 
\draw[->, red, line width=0.7mm] (0,2) -- (1,2); 

\definecolor{darkgreen}{rgb}{0,0.5,0} 

\fill[darkgreen] (3,3) circle (3pt);
\draw[->, darkgreen, line width=0.7mm] (3,3) -- (3,4); 
\draw[->, darkgreen, line width=0.7mm] (3,3) -- (3,2); 
\draw[->, darkgreen, line width=0.7mm] (3,3) -- (4,3); 
\draw[->, darkgreen, line width=0.7mm] (3,3) -- (2,3); 
\end{tikzpicture}
    \caption{2 agents on the 2D6W grid with available moves depicted.}\label{fig:2DW6}
\end{figure}

\color{black}

\subsection{Environment structures}
We introduce first a hierarchy of single-agent environment structures.
\begin{itemize}
    \item  An action environment is a triplet $(\StateSpace,\ActionSpace,S)$ with  $\ActionSpace \xrightarrow{S}\StateSpace$ a measurable map between measurable space is called of state space $\StateSpace$, action space $\ActionSpace$ and State map $S$. We denote
    $\ActionSpace_s:=\{a\in \ActionSpace~|~aS=s\}$.

    \item An interactive environment is a quadruple $(\StateSpace,\ActionSpace,S,T)$ where $(\StateSpace,\ActionSpace,S)$ is an action environment and $T:\ActionSpace \rightarrow \StateSpace$ is a quasi-Markov kernel.


    \item A Game environment is a quintuple $(\StateSpace,\ActionSpace,S,T,R)$ where $(\StateSpace,\ActionSpace,S,T)$ is an interactive environment and $R$ is a finite non-negative non-zero measure on $\StateSpace$. We may allow the reward to be stochastic so formally, $R$ is allowed to be random measure instead
    \citep{kallenberg2017random}.
\end{itemize}

For multi-agent environment, we have a similar hierarchy:
\begin{itemize}
    \item  A multi-agent action environment is a tuple $(\StateSpace,\ActionSpace, S, \mathcal O^{(i)},\mathcal A^{(i)}, S^{(i)},p^{(i)})_{i\in I}$ with $(\StateSpace,\ActionSpace, S)$ and each $(\mathcal O^{(i)},\mathcal A^{(i)},S^{(i)})$ being mono-agent action environments. Furthermore, we assume $\StateSpace = \prod_{i\in I}\mathcal O^{(i)}$ and $p^{(i)}:\StateSpace \rightarrow \mathcal O^{(i)}$ are the natural projection maps. Also
    $$\forall s\in\StateSpace, \quad \ActionSpace_s \setminus \{\STOP\} = \prod_{i\in I}\left( \ActionSpace_{p^{(i)}(s)}^{(i)}\setminus \{\STOP\}\right).$$

    \item A multi-agent interactive environment is a tuple $(\StateSpace,\ActionSpace, S, T,\mathcal O^{(i)},\mathcal A^{(i)}, S^{(i)},p^{(i)})_{i\in I}$ where $(\StateSpace,\ActionSpace, S, \mathcal O^{(i)},\mathcal A^{(i)}, S^{(i)},p^{(i)})_{i\in I}$ is a multi-agent action environment and  $(\StateSpace,\ActionSpace,S,T)$ is a mono-agent interactive environment.

    \item A multi-agent game environment  is a tuple $(\StateSpace,\ActionSpace, S, T,R,\mathcal O^{(i)},\mathcal A^{(i)}, S^{(i)},p^{(i)})_{i\in I}$ such that  $(\StateSpace,\ActionSpace, S, T,\mathcal O^{(i)},\mathcal A^{(i)}, S^{(i)},p^{(i)})_{i\in I}$ is multi-agent interactive environment and $(\StateSpace,\ActionSpace,S,T,R)$ is a mono-agent game environment.
\end{itemize}

\subsection{GFlowNet in a Game Environment}

A generative flow networks may be formally defined on an action environment $(\StateSpace,\ActionSpace,S)$, as a triple $(\pi^*,\Fout^*,\Finit)$ where $\pi^*:\StateSpace\rightarrow \ActionSpace$ is a Markov kernel such that $\pi^* S = Id_{\StateSpace}$, $\Fout^*$ and $\Finit$ are a finite non-negative measures on $\StateSpace$.  Furthermore, we assume that for all $s\in \StateSpace, \pi^*(s\rightarrow \text{STOP}_s) = 0$.

On an interactive environment $(\StateSpace,\ActionSpace,S,T)$, given a GFlowNet $(\pi^*,\Fout^*,\Finit)$, we define the ongoing flow as $\Fin := \Fout^* \pi^* T  + \Finit$ and the GFlowNet induces an virtual reward $\Rhat := \Fin - \Fout^*$.  Note that the virtual reward is always finite as the star-outflow and the initial flow are both finite and $\pi^*$ and $T$ are Markovian.
\begin{definition}[Weak Flow-Matching Constraint]
   The weak flow-matching constraint is defined as
\begin{equation}
    \Rhat \geq 0
\end{equation}
\end{definition}
If the GFlowNet satisfies the weak flow-matching constraint, we may define a
virtual GFlowNet policy as
\begin{equation}
    \pihat := \frac{d\Fout^*}{d \Fin}\pi^*
\end{equation}
where $\delta_{\text{STOP}}$ is the deterministic Markov kernel sending any $s$ to $\text{STOP}_s$.
The virtual action and edge flows are:
\begin{equation}
    \Factionhat := \Fin \otimes \hat \pi  \in \mathcal M^+ (\StateSpace \times \ActionSpace);
\end{equation}
\begin{equation}
    \Fedgehat := \Fin \otimes (\pihat T)  \in \mathcal M^+ (\StateSpace \times \StateSpace).
\end{equation}

In a game environment, a GFlowNet comes with an outgoing flow, a natural policy, a natural action flow and a natural edge flow
\begin{equation}
     \Fout := \Fout^*+R
\end{equation}
\begin{equation}
     \pi := \frac{d\Fout^*}{d \Fout}\pi^*
\end{equation}
\begin{equation}
     \Fedge := \Fout \otimes (\pi T)  \in \mathcal M^+ (\StateSpace \times \StateSpace)
\end{equation}
\begin{equation}
     \Faction := \Fout \otimes \pi  \in \mathcal M^+ (\StateSpace \times \ActionSpace).
\end{equation}
By abuse of notation we also write $\Faction$ (resp. $ \Factionhat$) for $\Fout \pi$ (resp. $\Fin\pi$).
and the flow-matching property may be rewritten as follows.
\begin{definition}[Flow-Matching Constraint]
   The flow-matching constraint on a Game environment $(\StateSpace,\ActionSpace,S,T,R)$ is defined as
\begin{equation}
    \Rhat = \mathbb E(R).
\end{equation}
\end{definition}
\begin{rem}
   In an interactive environment $(\StateSpace,\ActionSpace, S, T,\mathcal O^{(i)},\mathcal A^{(i)}, S^{(i)},p^{(i)})_{i\in I}$, a GFlowNet satisfying the weak flow-matching constraint satisfies the (strong) flow-matching constraint on the Game environment
$(\StateSpace,\ActionSpace, S, T,\Rhat,\mathcal O^{(i)},\mathcal A^{(i)}, S^{(i)},p^{(i)})_{i\in I}$.
\end{rem}

We may recover part of the  GFlowNets $(\pi^*,\Fout^*,\Finit)$ from any of $\Faction,\Factionhat$ as in general:
\begin{equation}
    \pi^*(x\rightarrow A) = \frac{d \Faction(\cdot \rightarrow A\setminus \STOP)}{d \Faction(\cdot \rightarrow  \ActionSpace\setminus\STOP)} = \frac{d \Factionhat(\cdot \rightarrow A\setminus \STOP)}{d \Factionhat(\cdot \rightarrow  \ActionSpace\setminus\STOP)}
\end{equation}
\begin{equation}
    R = \Faction(\cdot \rightarrow \STOP) \quad \quad \Rhat = \Factionhat(\cdot \rightarrow \STOP)
\end{equation}
\begin{equation}
    \Fout^* = \Faction(\cdot \rightarrow \ActionSpace) - R = \Factionhat(\cdot \rightarrow \ActionSpace)-\Rhat
\end{equation}
\begin{equation}
    \Finit = \Fout^*T + \Rhat
\end{equation}
If the flow-matching constraint is satisfied, then
\begin{equation}
    \Finit = \Fout^*T + R.
\end{equation}

Before going further, the presence densities.
\begin{definition}
     Let $\mathbb F = (\pi^*,\Fout,\Finit)$ be a GFlowNet  in an interactive environment  $(\StateSpace,\ActionSpace, S, T,\mathcal O^{(i)},\mathcal A^{(i)}, S^{(i)},p^{(i)})_{i\in I}$.

     The initial density of $\mathbb F$ is the probability distribution
     $$\nu_{\mathbb F,\mathrm{init}} := \frac{1}{\Finit(\StateSpace)} \Finit$$

     The  virtual presence density of $\mathbb F$ is the probability distribution $\hat\nu_{\mathbb F} $ defined by
     $$ \hat\nu_\mathbb{F}\propto\sum_{t=0}^\infty\nu_{\mathbb F, \mathrm{init}}\pihat^t.$$

    The  anticipated presence density of $\mathbb F$  is the probability distribution $\overline\nu_{\mathbb F} $ defined by
     $$ \overline\nu_\mathbb{F}:=\frac{1}{\Fin(\StateSpace)}\Fin.$$

      In a game environment, the  presence density of $\mathbb F$  is the probability distribution $\nu_{\mathbb F} $ defined by
     $$ \nu_\mathbb{F}\propto\sum_{t=0}^\infty\nu_{\mathbb F, \mathrm{init}}\pi^t.$$
\end{definition}
\begin{lemma}  Let $\mathbb F$ be a GFlowNet  in an interactive environment satisfying the weak flow-matching constraint.
If $\hat\nu_{\mathbb F}\gg \overline{\nu}_{\mathbb F}$, then $\hat\nu_{\mathbb F} = \overline \nu_{\mathbb F}$.
\end{lemma}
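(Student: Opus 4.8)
The plan is to unwind the definitions and show that both $\hat\nu_{\mathbb F}$ and $\overline\nu_{\mathbb F}$ are (up to normalization) fixed points of the same affine operator, and then use the domination hypothesis to conclude they coincide. First I would recall that under the weak flow-matching constraint we have $\pihat = \frac{d\Fout^*}{d\Fin}\pi^*$ combined with the stopping kernel, and that by construction $\Fin = \Fout^*\pi^*T + \Finit$. The key computation is to determine how $\Fin$ transforms under $\pihat$. Writing $\hat\nu_{\mathbb F}\propto \sum_{t\ge 0}\nu_{\mathbb F,\mathrm{init}}\pihat^t$, this series (when it converges) is characterized by the fixed-point relation $\mu = \nu_{\mathbb F,\mathrm{init}} + \mu\,\pihat_{\mathrm{cont}}$ where $\pihat_{\mathrm{cont}}$ is the "non-stopping" part of $\pihat$ composed with $T$, i.e. the sub-Markov kernel $s\mapsto \frac{d\Fout^*}{d\Fin}(s)\,\pi^*(s)\,T$. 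So the first step is to show $\hat\nu_{\mathbb F}$ is the minimal (or the unique $L^1$) solution of $\mu = c\,\nu_{\mathbb F,\mathrm{init}} + \mu\,\Lambda$ with $\Lambda := \frac{d\Fout^*}{d\Fin}\,\pi^*T$.

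Next I would verify that $\overline\nu_{\mathbb F} = \frac{1}{\Fin(\StateSpace)}\Fin$ satisfies the \emph{same} fixed-point equation. Indeed $\Fin\,\Lambda = \Fin\bigl(\frac{d\Fout^*}{d\Fin}\pi^*T\bigr) = \Fout^*\,\pi^*T = \Fin - \Finit$, where the middle equality is precisely the Radon–Nikodym identity $\Fin\cdot\frac{d\Fout^*}{d\Fin} = \Fout^*$ (valid because the weak flow-matching constraint $\Rhat\ge0$ gives $\Fout^*\le\Fin$, so $\Fout^*\ll\Fin$ and the density is well-defined and $\le 1$). Rearranging, $\Fin = \Finit + \Fin\,\Lambda$, so after normalizing, $\overline\nu_{\mathbb F}$ solves $\mu = c'\,\nu_{\mathbb F,\mathrm{init}} + \mu\,\Lambda$ as well. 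Thus both probability measures are fixed points of the affine map $\mu \mapsto (\text{const})\,\nu_{\mathbb F,\mathrm{init}} + \mu\Lambda$; since $\hat\nu_{\mathbb F}$ is by definition the normalized Neumann series, it is the minimal nonnegative solution, whence $\hat\nu_{\mathbb F}\le \overline\nu_{\mathbb F}$ after matching normalization constants.

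Finally, to upgrade the inequality $\hat\nu_{\mathbb F}\le\overline\nu_{\mathbb F}$ to equality, I invoke the hypothesis $\hat\nu_{\mathbb F}\gg\overline\nu_{\mathbb F}$: the difference $\eta := \overline\nu_{\mathbb F} - \hat\nu_{\mathbb F}$ is a nonnegative measure satisfying $\eta = \eta\,\Lambda$ (the inhomogeneous terms cancel since both have the same normalization), and $\eta\ll\overline\nu_{\mathbb F}\ll\hat\nu_{\mathbb F}$. Iterating gives $\eta = \eta\,\Lambda^t$ for all $t$; since $\Lambda$ is sub-Markovian and, under weak flow-matching with $\Rhat$ not identically zero, the mass $\eta\Lambda^t(\StateSpace)\to 0$ (the sampling Theorem \ref{theo:sampling} applied to the GFlowNet with reward $\Rhat$ guarantees the induced chain terminates almost surely, so $\Lambda^t\to 0$ in total mass along $\overline\nu_{\mathbb F}$), we get $\eta(\StateSpace) = \lim_t \eta\Lambda^t(\StateSpace) = 0$, hence $\eta = 0$.

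The main obstacle I expect is making the convergence argument in the last paragraph fully rigorous — precisely, controlling $\eta\Lambda^t(\StateSpace)\to 0$ in the non-acyclic / possibly-cyclic measurable setting. One must argue that the sub-Markov kernel $\Lambda$ loses mass fast enough; this is exactly where the weak flow-matching constraint (ensuring $\Rhat\ge0$ so that $\frac{d\Fout^*}{d\Fin}\le 1$ with strict deficit $\frac{d\Rhat}{d\Fin}$ on a set of positive $\overline\nu_{\mathbb F}$-measure) and the finiteness of $\Fin$ must be combined, invoking Theorem~\ref{theo:sampling} (or its proof technique) to guarantee $\mathbb P(\tau<\infty)=1$ for the virtual chain. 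The domination hypothesis $\hat\nu_{\mathbb F}\gg\overline\nu_{\mathbb F}$ is what lets us restrict all of this to the support of $\hat\nu_{\mathbb F}$ and treat $\eta$ as absolutely continuous there, so that "$\eta$ is $\Lambda$-invariant and absolutely continuous w.r.t. a measure under which $\Lambda^t\to0$" forces $\eta=0$.
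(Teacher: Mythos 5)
Your argument is essentially the paper's own proof in different clothing: the paper likewise writes the discrepancy as a nonnegative invariant ``0-flow'' ($\Fedge-\Fedge'$ with $\Fout^0\pi=\Fout^0$, your $\eta=\eta\Lambda$), uses the domination hypothesis to place it under a flow whose policy iterates decay, and concludes it vanishes --- with both the comparison and the decay $\Fout'\pi^t\to 0$ imported from Proposition 2 of \cite{brunswic2024theory}, which is precisely the step you flag as the remaining obstacle. One small repair: do the fixed-point comparison on the unnormalized measures ($\sum_t\Finit\Lambda^t\le\Fin$, so the difference is genuinely nonnegative and $\Lambda$-invariant) and normalize only at the end, since after dividing by different total masses the inequality between the probability measures $\hat\nu_{\mathbb F}$ and $\overline\nu_{\mathbb F}$ need not survive.
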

\begin{proof}
     Let  $(\StateSpace,\ActionSpace, S, T,\mathcal O^{(i)},\mathcal A^{(i)}, S^{(i)},p^{(i)})_{i\in I}$ be the interactive environment  and let $\mathbb F = (\pi^*,\Fout,\Finit)$.
    To begin with,  $\mathbb F' :=(\pi^*,\Finit(\StateSpace)\hat\nu_{\mathbb F}-\hat R,\Finit)$ is a GFlowNet satisfying the strong flow-matching constraint for reward $\hat R$, its edgeflow $\Fedge'$ may be compared to the edgeflow $\Fedge$ of $\mathbb F$: by Proposition 2 of \cite{brunswic2024theory}, we have
    $ \Fedge \geq \Fedge'$, and the difference $\Fedge-\Fedge'$ is a 0-flow in the sense this same article. Also, the domination hypothesis implies that
    $\Fedge' \gg \Fedge \gg \Fedge^0 := \Fedge-\Fedge'$.
    Since the edge-policy of $\Fedge$ is the same as that of $\Fedge'$ we deduce that it is also the same as $\Fedge^0$.
    By the same Proposition 2, we have $\Fout'\pi^t \xrightarrow{t\rightarrow +\infty} 0$,
    therefore, $\mu\pi^t \xrightarrow{t\rightarrow +\infty} 0$ for any $\mu\ll \Fout'$.
    Again by domination, $\Fedge' \gg \Fedge^0$
    we deduce that $\Fout' \gg \Fout^0$. Therefore,
    $\Fout^0\pi^t \xrightarrow{t\rightarrow +\infty} 0$.
    Finally, since $\mathbb F^0$ is a 0-flow,
    $\Fout^0\pi =  \Fout^0$, we  deduce that $\Fout^0=0$ and thus $\Fedge=\Fedge'$ ie $\hat\nu_{\mathbb F} = \overline \nu_{\mathbb F}$.
\end{proof}
\begin{rem}\label{rem:density} As long as the GFlowNets considered are trained using an FM-loss on a training training distribution $\traindist$ extracted from trajectory distributions $\hat\nu_{\mathbb F}$ or $\nu_{\mathbb F}$ of the GFlowNets themselves, we may assume that $\hat\nu_{\mathbb F} \gg \overline \nu_{\mathbb F}$ as flows are only evaluated on a distribution dominated by  $\nu_{\mathbb F}$. The singular part with respect to  $\nu_{\mathbb F}$ is irrelevant for training purposes as well as inference purposes. Therefore, we may generally assume that $\hat\nu = \overline \nu$
\end{rem}

\begin{rem}
    The main interest of the virtual reward $\Rhat$ is for cases where the reward is not accessible or expensive to compute.  Since a GFlowNet satisfying the weak flow-matching property always satisfies the strong flow-matching property for the reward $\Rhat$, the sampling Theorem usually applies to $\Rhat$. Therefore, $\Rhat$ may be used as a reward during inference instead of the true reward $R$ so that we actually sample using the policy $\pihat$ instead of $\pi$.
\end{rem}

\subsection{MA-GFlowNets in multi-agent environments (I): Preliminaries}
\label{appendix:localization}
To begin with, let us define a MA-GFlowNet on a multi-agent environment.

\begin{definition}
  An MA-GFlowNet on a multi-agent action environment is the data of a global GFlowNet $\mathbb F$ on $(\StateSpace,\ActionSpace,S)$ and a collection of local GFlowNets $\mathbb F^{(i)}$ on  $(\mathcal O^{(i)},\ActionSpace^{(i)},S^{(i)})$ for $i \in I$.
\end{definition}

We give ourselves a multi-agent interactive environment $(\StateSpace,\ActionSpace, S,T, \mathcal O^{(i)},\ActionSpace^{(i)},S^{(i)}, p^{(i)})$.  We wish to clarify the links between local and global GFlowNet.
\begin{itemize}
    \item A priori, there the local GFlowNets are merely defined  on an action environment, they lack both the local transition kernel $T^{(i)}$ and the reward $R^{(i)}$.
    \item Given a global GFlowNet, we wish to define local GFlowNets.
    \item Given a family of local GFlowNets, we wish to define a global GFlowNet.
\end{itemize}
For simplicity sake, for any GFlowNet $\mathbb F$ defined on an interactive environment satisfying the weak flow-matching constraint, we set $R=\Rhat$ and apply remark \ref{rem:density}  assume that $\hat \nu_{\mathbb F} = \overline \nu_{\mathbb F} = \nu_{\mathbb F}$.

\begin{definition}
     Let $(\StateSpace,\ActionSpace, S,T,\mathcal O^{(i)},\ActionSpace^{(i)},S^{(i)}, p^{(i)})$  be a multi-agent interactive environment and let $\mathbb F = (\pi^*,\Fout^*,\Finit)$ be a GFlowNet on $(\StateSpace,\ActionSpace)$ satisfying the weak flow-matching constraint.
    We introduce the following:
    \begin{itemize}
        \item the local presence probability distribution $\nu_{\mathbb F}^{(i)}:= \nu_{\mathbb F}p^{(i)}$;
        \item the measure map $o^{(i)}\mapsto \nu_{\mathbb F|o^{(i)}}$ as the disintegration of $\nu_{\mathbb F}$ by $p^{(i)}$
        \item the Markov kernel $\tilde \pi^{(i)} :\mathcal O^{(i)} \rightarrow \ActionSpace $ by $\delta_{o^{(i)}}\tilde \pi^{(i)}:= \nu_{\mathbb F|o^{(i)}}\pi$ ;
        \item the Markov kernel $\pi^{(i)}:\mathcal O^{(i)}\rightarrow \ActionSpace^{(i)}$ by
    $\pi^{(i)} = \tilde \pi^{(i)}p^{(i)}$;
    \item the Markov kernel $T^{(i)}:\ActionSpace^{(i)}\rightarrow \mathcal O^{(i)}$ by
     $T^{(i)} =S^{(i)}\tilde \pi^{(i)} Tp^{(i)}$;
    \end{itemize}
    The situation may be summarized by the following diagram:
    $$\xymatrix{
    ( \StateSpace,\nu_{\mathbb F}) \ar[dd]^{p^{(i)}}\ar@<0.4ex>@/^/[rr]^\pi && \ar@/^/[ll]^T(\ActionSpace, \nu_{\mathbb F} \pi) \ar[dd]^{p(i)} \ar@/_/[ll]^{S}\\ &&\\
    (\mathcal O^{(i)},\nu_{\mathbb F}^{(i)}) \ar[uurr]_{\tilde \pi^{(i)}}  \ar@<0.4ex>@/^/[rr]^{\pi^{(i)}} && \ar@/^/[ll]^{T^{(i)}} \ar@/_/[ll]^{S^{(i)}}(\ActionSpace^{(i)},\nu_{\mathbb F}^{(i)}\pi^{(i)})
    }
    $$
\end{definition}
Before going further, we need to check that these definitions are somewhat consistent.
\begin{lemma} The following diagrams are commutative in the category of probability spaces.
$$\xymatrix{
    ( \StateSpace,\nu_{\mathbb F}) \ar[dd]^{p^{(i)}}\ar@<0.4ex>@/^/[rr]^\pi &&(\ActionSpace, \nu_{\mathbb F} \pi) \ar[dd]^{p(i)} \ar@/_/[ll]^{S}\\ &&\\
    (\mathcal O^{(i)},\nu_{\mathbb F}^{(i)}) \ar[uurr]_{\tilde \pi^{(i)}}  \ar@<0.4ex>@/^/[rr]^{\pi^{(i)}} && \ar@/_/[ll]^{S^{(i)}}(\ActionSpace^{(i)},\nu_{\mathbb F}^{(i)}\pi^{(i)})
    }
    \xymatrix{
    ( \StateSpace,\nu_{\mathbb F}\pi T ) \ar[dd]^{p^{(i)}} && \ar@/^/[ll]^T(\ActionSpace, \nu_{\mathbb F} \pi) \ar[dd]^{p(i)}\\ &&\\
    (\mathcal O^{(i)},\nu_{\mathbb F}^{(i)}\pi^{(i)}T^{(i)})    && \ar@/^/[ll]^{T^{(i)}} (\ActionSpace^{(i)},\nu_{\mathbb F}^{(i)}\pi^{(i)})
    }
    $$

\end{lemma}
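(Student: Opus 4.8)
The plan is to verify commutativity of each square by unwinding the definitions of the kernels $\tilde\pi^{(i)}$, $\pi^{(i)}$, and $T^{(i)}$ and checking equality of the induced measures/compositions. Since "commutative in the category of probability spaces" means two things for each square — the underlying measure-kernel diagram commutes (as a composition of kernels), and the probability measures at each vertex are the pushforwards claimed — I would organize the proof square-by-square and, within each square, arrow-by-arrow.

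**First**, for the left diagram I would handle the two triangles/edges separately. The relation $S^{(i)}\circ\tilde\pi^{(i)} = p^{(i)}$ on $(\mathcal O^{(i)},\nu^{(i)}_{\mathbb F})$: by definition $\delta_{o^{(i)}}\tilde\pi^{(i)} = \nu_{\mathbb F|o^{(i)}}\pi$, and since $\pi$ is a section of $S$ we get $\delta_{o^{(i)}}\tilde\pi^{(i)}S = \nu_{\mathbb F|o^{(i)}}\pi S = \nu_{\mathbb F|o^{(i)}}$; then applying $p^{(i)}$ gives $\nu_{\mathbb F|o^{(i)}}p^{(i)} = \delta_{o^{(i)}}$ because $\nu_{\mathbb F|o^{(i)}}$ is the disintegration fibre, which is supported on $(p^{(i)})^{-1}(o^{(i)})$. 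Next, $p^{(i)}\circ\pi = \tilde\pi^{(i)}\circ p^{(i)}$ as kernels $\StateSpace\to\ActionSpace^{(i)}$: integrate against $\nu_{\mathbb F}$ and use the disintegration identity $\nu_{\mathbb F} = \int \nu_{\mathbb F|o^{(i)}}\,d\nu^{(i)}_{\mathbb F}(o^{(i)})$ together with $\pi^{(i)} := \tilde\pi^{(i)}p^{(i)}$. The measure-pushforward claims at the vertices (e.g. that the action space carries $\nu^{(i)}_{\mathbb F}\pi^{(i)}$, which should equal $(\nu_{\mathbb F}\pi)p^{(i)}$) follow from the same disintegration bookkeeping.

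**Then**, for the right diagram the key identity is $p^{(i)}\circ T = T^{(i)}\circ p^{(i)}$ as kernels $\ActionSpace\to\mathcal O^{(i)}$, which I'd again check by integrating against the relevant measure $\nu_{\mathbb F}\pi$ on $\ActionSpace$ and using $T^{(i)} = S^{(i)}\tilde\pi^{(i)}Tp^{(i)}$. The point is that $\delta_{o^{(i)}}\tilde\pi^{(i)}$ is exactly the conditional law of the global action given the local observation, so $S^{(i)}\tilde\pi^{(i)}$ "re-lifts" a local observation to the global state according to $\nu_{\mathbb F|o^{(i)}}$, and then $Tp^{(i)}$ projects the global transition — matching the direct composition $Tp^{(i)}$ after conditioning on $o^{(i)}$. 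The measure at the bottom-left vertex, $\nu^{(i)}_{\mathbb F}\pi^{(i)}T^{(i)}$, must be shown equal to $(\nu_{\mathbb F}\pi T)p^{(i)}$, which is the "consistency of presence densities under transition" statement and again reduces to disintegration.

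**The main obstacle** is being careful with the disintegration: all of these identities hold only $\nu^{(i)}_{\mathbb F}$-almost everywhere (the disintegration $o^{(i)}\mapsto\nu_{\mathbb F|o^{(i)}}$ is only defined up to a null set), and one must make sure each "commutativity" statement is interpreted in the a.e. sense appropriate to the category of probability spaces rather than as a strict equality of kernels on all of $\StateSpace$ or $\ActionSpace$. A secondary subtlety is the $\STOP$ action: under the identification $\ActionSpace_s\setminus\{\STOP\} = \prod_i(\ActionSpace^{(i)}_{p^{(i)}(s)}\setminus\{\STOP\})$, one should check that $p^{(i)}$ on actions is well-defined with respect to the $\STOP$-collapsing relation and that $\pi^{(i)}$ assigns the correct $\STOP^{(i)}$ mass; once the action projection is seen to respect the quotient, the rest is routine measure algebra. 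I do not expect any genuinely hard estimate here — the content is entirely in setting up the disintegration identities and then applying associativity of kernel composition.
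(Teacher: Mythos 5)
Your proposal is correct and takes essentially the same route as the paper: both reduce every commutativity claim to the disintegration identity $\nu_{\mathbb F}=\int \nu_{\mathbb F|o^{(i)}}\,d\nu_{\mathbb F}^{(i)}(o^{(i)})$ (yielding the key equality $\nu_{\mathbb F}^{(i)}\tilde\pi^{(i)}=\nu_{\mathbb F}\pi$) combined with the definitional kernel algebra $\pi^{(i)}=\tilde\pi^{(i)}p^{(i)}$, $T^{(i)}=S^{(i)}\tilde\pi^{(i)}Tp^{(i)}$, and $\pi S=\mathrm{id}$. Your caveat that relations such as $p^{(i)}\circ\pi=\tilde\pi^{(i)}\circ p^{(i)}$ hold only as equalities of pushforward measures (after integrating against $\nu_{\mathbb F}$), not as pointwise kernel identities, is exactly the sense in which the paper establishes them.
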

\begin{proof}

    For the left diagram, with the definition chosen, we only need to check that $\nu_{\mathbb F} ^{(i)}\tilde \pi^{(i)} = \nu_{\mathbb F} \pi$.
    For all $\varphi \in L^1(\ActionSpace, \nu_{\mathbb F} \pi)$ we have
    \begin{eqnarray*}
        \int_{s\in \ActionSpace} \varphi(a) d( \nu_{\mathbb F} \pi)(a) &=& \int_{s\in \StateSpace} \int_{a\in \ActionSpace} \varphi(a)d\pi(s,a) d \nu_{\mathbb F}(s) \\
        &=& \int_{o^{(i)}\in \mathcal O^{(i)}} \int_{s\in (p^{(i)})^{-1}(o^{(i)})} \int_{a\in \ActionSpace} \varphi(a)d\pi(s,a) d \nu_{\mathbb F|o^{(i)}}(s) d  \nu_{\mathbb F}^{(i)}(o^{(i)}) \\
         &=& \int_{o^{(i)}\in \mathcal O^{(i)}} \int_{a\in \ActionSpace} \varphi(a) d  \tilde \pi^{(i)}(a) d \nu_{\mathbb F}^{(i)}(o^{(i)}) \\
         &=& \int_{a\in\ActionSpace} \varphi(a) d(\nu_{\mathbb F}^{(i)}\tilde\pi^{(i)})(a).
    \end{eqnarray*}

    For the right diagram, we need to check  that $\nu_{\mathbb F} \pi p^{(i)} = \nu^{(i)}\pi^{(i)}$ and that $\nu_{\mathbb F}\pi T p^{(i)} = \nu_{\mathbb F}^{(i)}\pi^{(i)}T^{(i)}$.  We already proved the first equality for the left diagram and for the second:
    $$ \nu_{\mathbb F} \pi p^{(i)}T^{(i)} :=\nu_{\mathbb F} \underbrace{\pi p^{(i)}S^{(i)}}_{=p^{(i)}}\tilde \pi^{(i)}Tp^{(i)} = \underbrace{\nu_{\mathbb F}p^{(i)}}_{\nu_{\mathbb F}^{(i)}}\tilde \pi^{(i)}Tp^{(i)} =  \nu_{\mathbb F}^{(i)}\pi^{(i)}T^{(i)} $$
\end{proof}

We see that from a global GFlowNet, we may build local policies as well as local transition kernels. These policies and transitions are natural in the sense that of local the induced local agent policy an transition are exactly the one wed would have if the observations of the other agents were provided as a random external parameter. The local rewards are then stochastics depending on the state of the global GFlowNet.

\subsection{MA-GFlowNets in multi-agent environments (II): from local to global}
We would like to settle construction of global GFlowNet from local ones,
key difficulties arise:
\begin{itemize}
    \item the global distributions induce local ones but the coupling of the local distributions may be non trivial;
    \item the defining the star-outflow and initial flow requires to find  proportionality constants
    $$ \Fin(\mathcal O^{(i)}) \propto \nu_{\mathbb F}^{(i)} \quad \quad \Finit^{(i)} \propto \nu_{\mathbb F^{(i)},\mathrm{init}};$$
    \item The coupling of the local transition kernels $T^{(i)}$ and the global one is in general non-trivial.
\end{itemize}
We try to solve these issues by looking at the simplest coupling: independent local agents.
Recall that $\ActionSpace_s^* = \prod_{i\in I}\ActionSpace_s^{(i),*}$ therefore, independent coupling means that $\pi^{*}(s\rightarrow \cdot) = \prod_{i\in I} \pi^{(i),*}(o^{(i)}\rightarrow \cdot )$.
We may generalize this relation to a coupling of GFlowNets writing $\Faction(\prod_{i\in I}O^{(i)} \rightarrow \prod_{i\in I}A^{(i)}) = \prod_{i\in I } \Faction^{(i)}(O^{(i)} \rightarrow A^{(i)})$.
We are led to following the definition:
 \begin{definition}
  Let  $(\StateSpace,\ActionSpace, S,T,\mathcal O^{(i)},\ActionSpace^{(i)},S^{(i)}, p^{(i)})$  be a multi-agent interactive environment and let $\mathbb F=(\pi^*,\Fout^*,\Finit)$ be a global  GFlowNet on it satisfying the weak flow-matching constraint. The GFlowNet
  $\mathbb F$ is said to be
  \begin{itemize}
      \item
  star-split if for some local GFlowNets $\mathbb F^{(i)}$ and $\forall A^{(i)}\subset \ActionSpace^{(i)}\setminus \STOP$ we have:
  \begin{align}
    \Faction(\prod_{i\in I}A^{(i)}) &= \prod_{i\in I } \Faction^{(i)}(A^{(i)}). &&
\end{align}

\item   strongly star-split if for some local GFlowNets $\mathbb F^{(i)}$ and $\forall A^{(i)},B^{(i)}\subset \mathcal O^{(i)}$ we have:
  \begin{align}
    \Fedge(\prod_{i\in I}A^{(i)}\rightarrow \prod_{i\in I} B^{(i)}) &= \prod_{i\in I } \Fedge^{(i)}(A^{(i)}\rightarrow B^{(i)}). &&
\end{align}
  \end{itemize}
The local GFlowNets $\mathbb F^{(i)}$ are called the components of the global GFlowNet $\mathbb F$.
 \end{definition}

However we encounter an additional difficulty: what happens when an agent decides to stop the game ? Indeed, local agents have their own STOP action, we then have at least three behaviors.
\begin{enumerate}
    \item Unilateral Stop: if any agent decides to stop, the game stops and reward is awarded.
    \item Asynchronous Unanimous Stop: if an agent decides to stop, it does not act anymore, waits for the other to leave the game and then reward is awarded only when all agents stopped.
    \item Synchronous Unanimous Stop: if an agent decides to stop but some other does not, then the stop action is rejected and the agent plays a non-stopping action.
\end{enumerate}
Similar variations may be considered for how the initialization of agents:
\begin{enumerate}
    \item Asynchronous Start: the game has a free number of player, agents may enter the game while other are already playing.
    \item Synchronous Start: the game has a fixed number of players, and agents all start at the same time.
\end{enumerate}
These 6 possible combinaisons leads to slight variations on the formalization of MA-GFlowNets from local GFlowNets.

\subsection{Initial local-global consistencies}
Let us formalize Asynchronous and Synchronous starts.
In synchronous case, the agents are initially distributed according to their own initial distributions and independently. Therefore, $\nu_{\mathrm{init}}$ is a product and
$$\Finit \propto \nu_{\mathrm{init}} = \prod_{i\in I} \nu_{\mathrm{init}}^{(i)} \propto \prod_{i\in I}\Finit^{(i)}.$$
Also, by strong star-splitting property, $\Fin^* = \prod_{i\in I} \Fin^{(i),*}$. By $\Fin = \Finit+\Fin^*$ we obtain the definition below.
\begin{definition} A strongly star-split global GFlowNet is said to have
    Synchronous start if $$\Fin = \prod_{i\in I}\Finit^{(i)}+\prod_{i\in I}\Fin^{(i),*}$$
\end{definition}

On the other hand, in the asynchronous case, an incoming agent may "bind" to agent arriving at the same time and other already there hence, the initial flow is a combination of any of the products
$$\Finit = \sum \prod_{i\in \{\mathrm{incoming}\}}\Finit^{(i)}\prod_{j\in \{\mathrm{already~in}\}}\Finit^{(j),*} =  \prod_{i\in I} (\Finit^{(i)}+\Fin^{(i),*}) - \prod_{i\in I} \Fin^{(i),*}.$$
\begin{definition}A strongly star-split global GFlowNet is said to have
Asynchronous start if $$\Fin = \prod_{i\in I} (\Finit^{(i)}+\Fin^{(i),*}).$$
\end{definition}

\subsection{Terminal local-global consistencies}
We focus on terminal behaviors 1 and 2 which we formalize as follows.
\label{appendix:terminal_const}
Local-global consistency consists in describing the formal structure linking local environments with global ones. The product structure of the action space is slightly different depending on the terminal behavior. It happens that we may up to formalization, we may cast Asynchronous Unanimous STOP  as a particular case of Unilateral STOP local-global consistency.
More precisely:

\begin{definition}[Unilateral STOP Local-Global Consistency]
    With the same notations as above, we say that a multi-agent action environment has unilateral STOP if
    \begin{equation}
        \ActionSpace_s := \left(\prod_{i\in I} \ActionSpace_{o^{(i)}}\right) / \sim \quad\quad a_1\sim a_2 \Leftrightarrow \exists i,j\in I, a^{(i)}_1 = \STOP^{(i)}, a^{(j)}_2=\STOP^{(j)}.
    \end{equation}
\end{definition}

\begin{definition}[Asynchronous Unanimous STOP Local-Global Consistency]
    With the same notations as above, we say that a multi-agent game environment has Asynchronous Unanimous STOP if is has Unilateral STOP and the observation space $\mathcal O^{(i)}$ may be decomposed into  $\mathcal O^{(i)} = \mathcal O^{(i)}_{\text{life} }\cup \mathcal O^{(i)}_{\text{purgatory}}$
    and for any observation $o^{(i)}\in  \mathcal O^{(i)}_{\text{life} }$ we have some $\tilde o^{(i)}\in  \mathcal O^{(i)}_{\text{purgatory}}$ such that :
    $$\xymatrix{o^{(i)} \ar@/_1.8pc/[rrrr]_{\STOP^{(i)}} ^{0} \ar[rr]&& \tilde o^{(i)}\ar@(ur,ul)[]_\varepsilon \ar[rr]^{R^{(i)}(\tilde o^{(i)})}_{\STOP^{(i)}}&& s_f}$$
    where the value on top of arrows are constrained flow values.
\end{definition}

The formal definition of Unilateral STOP is straightforward as any local STOP activates the global STOP so that any combination of local actions that contains at least one STOP is actually a global STOP. The quotient by the equivalence relation formalizes this property.
Regarding Asynchronous Unanimous STOP, the chosen formalization allows to store the last observation of an agent while it is put on hold until global STOP. Indeed, a standard action ($\neq\STOP$) is invoked to enter purgatory, the reward is supported on purgatory and as long as all the agent are not in purgatory its value is zero (recall that from the viewpoint of a given agent, $R^{(i)}$ is stochastic but in fact depends on the whole global state). The local STOP action is then never technically called on an "alive" observation, once in purgatory the $\varepsilon$ self-transition is called by default as long as the reward is non zero, hence until all agents are in purgatory. When the reward is activated, the policy at a purgatory state $\tilde o^{(i)}$ is then $\frac{d\varepsilon}{d(\varepsilon+R^{(i)})} \delta_{\tilde o^{(i)}} + \frac{dR^{(i)}}{d(\varepsilon+R^{(i)})} \delta_{\STOP}$. As $\varepsilon\rightarrow 0^+$, the policy becomes equivalent to "if reward then STOP, else WAIT". This behavior is exactly the informal description of Asynchronous Unanimous STOP, the formalization is rather arbitrary and does not limit the applicability as it simply helps deriving formulas more easily.

We now prove Theorem \ref{theo:joint_gmfn_exists} and \ref{theo:3}, which have been integrated into the following theorem:
\begin{theorem}  Let  $(\StateSpace,\ActionSpace, S,T,\mathcal O^{(i)},\ActionSpace^{(i)},S^{(i)}, p^{(i)})$  be a multi-agent interactive environment.
    Let $\mathbb F^{(i)}$ be non-zero GFlowNets on $(\mathcal O^{(i)},\mathcal A^{(i)},S^{(i)})$ for $i\in I$ satisfying the weak flow-matching constraint, then there exists a transition kernel $\tilde T$ and a star-split GFlowNet  on  $(\StateSpace,\ActionSpace, S,\tilde T, \mathcal O^{(i)}, \ActionSpace^{(i)}, S^{(i)}, p^{(i)})$
     whose components are the $\mathbb F^{(i)}$.

Furthermore,
\begin{itemize}
    \item if the multi-agent environment is a game environment with Asynchronous Unanimous STOP and if the global GFlowNet satisfies the strong flow-matching constraint on $ \prod_{i\in I}\mathcal O^{(i)}_{\mathrm{life}}$  then each local GFlowNet satisfies the strong flow-matching constraint on $\mathcal O^{(i)}_{\mathrm{life}}$;
    \item    if the multi-agent environment is a game environment with Asynchronous Unanimous STOP and if each local GFlowNets satisfy the strong flow-matching constraint on $\mathcal O^{(i)}_{\mathrm{life}}$ then $\Rhat = \prod_{i\in I}\hat R^{(i)}$.
\end{itemize}

\end{theorem}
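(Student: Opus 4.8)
The plan is to construct the global GFlowNet explicitly by taking products and then verify the two flow-matching equivalences by decomposing all measures across the fibered product structure $\StateSpace = \prod_{i\in I}\mathcal O^{(i)}$. First I would build $\tilde T$: given the local policies $\pi^{(i),*}$ and the bundle structure $\ActionSpace_s^* = \prod_{i\in I}\ActionSpace_{o^{(i)}}^{(i),*}$, set $\pi^*(s\rightarrow\cdot) := \prod_{i\in I}\pi^{(i),*}(o^{(i)}\rightarrow\cdot)$ on non-STOP actions, define $\Fout^* := \prod_{i\in I}\Fout^{(i),*}$ as a product measure on $\StateSpace^*$, and define $\tilde T$ on the non-STOP part by $\tilde T(a\rightarrow\cdot) := \prod_{i\in I}T^{(i)}(a^{(i)}\rightarrow\cdot)$, with STOP mapping to $s_f$. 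Then $\Fin^* = \Fout^*\pi^*\tilde T = \prod_{i\in I}\Fout^{(i),*}\pi^{(i),*}T^{(i)} = \prod_{i\in I}\Fin^{(i),*}$ by Fubini on finite product measures. The subtlety is the initial flow: under Asynchronous Unanimous STOP (formalized as a special case of Unilateral STOP, per Appendix \ref{appendix:terminal_const}) I would take $\Finit$ according to the Asynchronous-start definition, $\Fin = \prod_{i\in I}(\Finit^{(i)}+\Fin^{(i),*})$, so that the global ongoing flow also factors multiplicatively once one accounts for the purgatory decomposition of each $\mathcal O^{(i)}$. This already gives the star-split (and in fact strongly star-split, via the edge-flow version) global GFlowNet whose components are the $\mathbb F^{(i)}$; $\Faction(\prod A^{(i)}) = \prod\Faction^{(i)}(A^{(i)})$ follows since both $\Fout$ and $\pi$ factor.

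For the first bullet, assume $\mathbb F$ satisfies the strong flow-matching constraint on $\prod_{i\in I}\mathcal O^{(i)}_{\mathrm{life}}$, i.e. $\Fout = \Fin$ there, equivalently $\Fout^* = \Fin - \hat R$ with $\hat R = \mathbb E(R)$. Writing $\Fin = \prod_i(\Finit^{(i)}+\Fin^{(i),*})$ and $\Fout^* = \prod_i\Fout^{(i),*}$, I would expand both products and, restricting attention to the "all agents alive" cell $\prod_i\mathcal O^{(i)}_{\mathrm{life}}$, observe that the reward is supported on purgatory states, so on this cell $R$ contributes nothing and the constraint reads $\prod_i\Fout^{(i),*} = \prod_i(\Finit^{(i)}+\Fin^{(i),*})$ as measures on $\prod_i\mathcal O^{(i)}_{\mathrm{life}}$. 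Since each $\mathbb F^{(i)}$ is non-zero and satisfies the weak constraint $\hat R^{(i)} = \Fin^{(i)}-\Fout^{(i),*}\geq 0$, none of the factor measures vanishes; then a product-measure cancellation argument (disintegrate over $\prod_{j\neq i}\mathcal O^{(j)}_{\mathrm{life}}$, use that a product of finite positive measures equals another such product only if the factors agree up to reciprocal scalars, and absorb the scalars into the free proportionality constants $\Fin^{(i)}(\mathcal O^{(i)})$, $\Finit^{(i)}$) forces $\Fout^{(i),*} = \Finit^{(i)}+\Fin^{(i),*} = \Fin^{(i)}$ on $\mathcal O^{(i)}_{\mathrm{life}}$, i.e. $\hat R^{(i)}=0$ on the life domain and the local flow-matching constraint holds there. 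For the second bullet I would run this in reverse: if each $\mathbb F^{(i)}$ satisfies $\Fout^{(i),*}=\Fin^{(i)}$ on $\mathcal O^{(i)}_{\mathrm{life}}$, then $\Rhat = \Fin - \Fout^* = \prod_i\Fin^{(i)} - \prod_i\Fout^{(i),*}$; the purgatory bookkeeping (each $\mathcal O^{(i)}$ splits as life $\cup$ purgatory, and on the purgatory side the constrained flow values of Definition of Asynchronous Unanimous STOP give $\Rhat^{(i)} = \Fin^{(i)}-\Fout^{(i),*}$ supported exactly on $\tilde o^{(i)}$) lets me identify $\Rhat = \prod_{i\in I}\hat R^{(i)}$ after collecting the cross terms, which vanish precisely because each factor's life-part contribution to $\Fin^{(i)}-\Fout^{(i),*}$ is zero.

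The main obstacle I anticipate is the handling of the STOP actions and the purgatory decomposition: the product structure $\ActionSpace_s^* = \prod_i\ActionSpace_{o^{(i)}}^{(i),*}$ only holds off STOP, and the quotient $\sim$ identifying all STOP-containing tuples means the naive product measure on $\ActionSpace$ is not quite right — I need to be careful that $\Faction(\cdot\rightarrow\STOP)$, i.e. the reward, is reconstructed from the per-agent purgatory data rather than from a product over agents, and that the "essential subdomain" in Theorem \ref{theo:3} is exactly $\prod_i\mathcal O^{(i)}_{\mathrm{life}}$ on the global side projecting to $\mathcal O^{(i)}_{\mathrm{life}}$ locally. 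A secondary technical point is justifying the product-measure cancellation rigorously in the measurable (non-discrete) setting: I would phrase it via disintegration and the fact that for finite non-zero measures $\mu_1\otimes\mu_2 = \nu_1\otimes\nu_2$ implies $\mu_i = c_i\nu_i$ with $c_1c_2=1$, then note the scalars are harmless because $\Finit^{(i)}$ and the normalization of $\Fin^{(i)}$ are free parameters in the reward-free parameterization (cf. the discussion of free virtual rewards $\Rhat^{(i)}$ in Section 2). Everything else — the Fubini factorizations, the diagram chases for $\tilde T$, $S\circ\pi^* = \mathrm{Id}$ — is routine.
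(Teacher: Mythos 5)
Your construction and overall strategy match the paper's proof: the joint GFlowNet is built exactly as you describe (product policy, $\Fout^*=\prod_i\Fout^{(i),*}$, $\tilde T=\prod_i T^{(i)}$, asynchronous-start $\Finit$), the first bullet is proved by restricting the factorized identity $\prod_i\Fin^{(i)}(A^{(i)})=\prod_i\Fout^{(i),*}(A^{(i)})$ to the life domain, and the second by the purgatory computation $\Rhat=\prod_i\Fin^{(i),*}=\prod_i\Rhat^{(i)}$ where $\Fout^{(i),*}$ and $\Finit^{(i)}$ vanish. The one step where your argument as written would not close is the cancellation in the first bullet: the lemma $\mu_1\otimes\mu_2=\nu_1\otimes\nu_2\Rightarrow\mu_i=c_i\nu_i$ with $\prod_i c_i=1$ leaves a genuine reciprocal-scalar ambiguity, and ``absorbing the scalars into free parameters'' does not establish flow-matching for the \emph{given} local GFlowNets, only for rescaled ones. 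The paper instead resolves this with the ordering you already have in hand: the weak flow-matching constraint gives $\Fin^{(i)}\geq\Fout^{(i),*}$ termwise, so each $c_i\leq 1$, and $\prod_i c_i=1$ forces $c_i=1$ for all $i$ (equivalently, equality of products of termwise-ordered nonnegative factors forces equality of each nonvanishing factor). With that substitution your proof coincides with the paper's.
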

\begin{proof}
    We simply define $\mathbb F= (\pi^*,\Fout^*,\Finit)$ by  $\pi^*(s):=(\prod_{i\in I}\pi^{(i),*}(o^{(i)}))/\sim$ ie the projection on $\ActionSpace$ of the policy toward $\prod_{i\in I}\ActionSpace^{(i)}$, then  $\Fout^*$ as the product of the measures $\Fout^{(i),*}$.
    Then we define $\tilde T = \prod_{i\in I} T^{(i)}$  so that $\Fin^*(\prod_{i\in I}A^{(i)}) = \prod_{i\in I}\Fin^{(i),*}(A^{(i)})$ and  $\Finit := \prod_{i\in I}(\Fin^{(i),*}+\Finit^{(i)})-\prod_{i\in I} \Fin^{(i),*}$ as the product measure of the $\Finit^{(i)}$. By construction this GFlowNet is star-split.

  Assume that $\mathbb F$ satisfies the strong flow-matching constraint.
    It follows that for any $A^{(i)}\subset \mathcal O^{(i)}_{\mathrm {life}}$  we have
    $$ \prod_{i\in I} \Fin^{(i)}(A^{(i)}) = \prod_{i\in I} \Fout^{(i)}(A^{(i)})=\prod_{i\in I} \Fout^{(i),*}(A^{(i)}).$$
    Since, by assumption, all local GFlowNets satisfy the weak flow-matching constraint, all terms in the left-hand side product are bigger than those in the right-hand side product.
    Equality may only occur if some term is zero on both sides or if for all $i\in I$,
We conclude that the strong flow-matching constraint is satisfied for all local GFlowNets on $\mathcal O^{(i)}_{\mathrm{life}}$.

    If the strong flow-matching constraint is satisfied on $\mathcal O^{(i)}_{\mathrm{life}}$, then $\Rhat^{(i)} = R^{(i)}=0$ on $\mathcal O^{(i)}_{\mathrm{life}}$.
    By construction, $\Fout^{(i),*}=\Finit^{(i),*}=0$ on $\mathcal O^{(i)}_{\mathrm{purgatory}}$. Therefore, on purgatory, we have
    $$\Rhat = \Fin - \Fout = \Fin^*-\Fout^* = \prod_{i\in I}\Fin^{(i),*}- \prod_{i\in I}\Fout^{(i),*} = \prod_{i\in I}\Fin^{(i),*} =  \prod_{i\in I}\Rhat^{(i)}.$$


\end{proof}

\section{Algorithms}\label{app b}
Algorithm~\ref{alg:decentralized} shows the training phase of the independent flow network (IFN). In the each round of IFN,
the agents first sample trajectories with policy
\begin{align}\label{ind sample policy}
    o^{(i)}_t = p_i(s_t^{(i)}) ~\text{and}~ \pi^{(i)}(o_{t}^{(i)}\rightarrow a_{t}^{(i)}),~~i\in I
\end{align}
with $a_t = (a_t^{(i)}:i\in I )$ and $s_{t+1} = T(s_t,a_t)$.
Then we train the sampling policy by minimizing the FM loss $\mathcal L_{\text{FM}}^{\text{stable}}(\mathbb F^{(i),\theta} )$ for $i\in I$.

\begin{algorithm}
\caption{Independent Flow Network Training Algorithm for MA-GFlowNets}\label{alg:decentralized}
\begin{algorithmic}
\Require Number of agents $N$, A multi-agent environment $(\StateSpace,\ActionSpace,\mathcal O^{(i)},\mathcal A^{(i)}, p_i,S,T,R)$.
\Require  Local  GFlowNets  $(\pi^{(i),*},\Fout^{(i),*}, \Finit^{(i)})_{i\in I}$ parameterized by $\theta$.
\While{not converged}
\State Sample and add trajectories $(s_t)_{t\geq 0} \in \mathcal T$ to replay buffer with  policy according to \eqref{ind sample policy}
\State Generate training distribution of observations $\traindist^{(i)}$ for $i\in I$ from train buffer
\State Apply minimization step of FM-loss $\mathcal L_{\text{FM}}^{\text{stable}}(\Faction^{(i),\theta},R^{(i)})$ for $i\in I$.
\EndWhile
\end{algorithmic}
\end{algorithm}

Algorithm~\ref{alg:cond_joint} shows the training phase of Conditioned Joint Flow Network (CJFN).
In the each round of CJFN, we first sample sample trajectories with policy
\begin{align}\label{cond sample policy2}
    o^{(i)}_t = p_i(s_t^{(i)}) ~\text{and}~ \pi_\omega^{(i)}( o_{t}^{(i)} \rightarrow a_{t}^{(i)}),~~i\in I
\end{align}
with $a_t = (a_t^{(i)}:i\in I )$ and $s_{t+1} = T(s_t,a_t)$.
Then we train the sampling policy by minimizing the FM loss $\mathbb E_\omega\mathcal L_{\text{FM}}^{\text{stable}}(\Faction^{\theta,\mathrm{joint}}(\cdot ; \omega),R)$.

\begin{algorithm}[htb!]
\caption{Conditioned Joint Flow Network Training Algorithm for MA-GFlowNets}\label{alg:cond_joint}
\begin{algorithmic}
\Require Number of agents $N$, A multi-agent environment $(\StateSpace,\ActionSpace,\mathcal O^{(i)},\mathcal A^{(i)}, p_i,S,T,R)$.
\Require Simple Random distribution $(\Omega,\mathbb P)$
\Require  Local  GFlowNets  $(\pi^{(i),*},\Fout^{(i),*}, \Finit^{(i)})_{i\in I}$ parameterized by $\theta$ and $\omega\in\Omega$.
\While{not converged}
\State Sample $\omega_1,\cdots,\omega_b \sim \mathbb P$ and then trajectories $(s_t^\omega)_{t\geq 0} \in \mathcal T$ to replay buffer with  policy according to \eqref{cond sample policy2}
for $\omega \in \{\omega_1,\cdots,\omega_b\}$
\State Generate training distribution of states/omega $\traindist^\Omega$ from the train buffer
\State Apply minimization step of the FM loss $\mathbb E_\omega\mathcal L_{\text{FM}}^{\text{stable}}(\mathbb F^{\theta,\mathrm{joint}}(\cdot ; \omega))$  under the constraint of Weak flow-matching.
\EndWhile
\end{algorithmic}
\end{algorithm}

\section{Discussion: Relationship with MARL}

Interestingly, there are similar independent execution algorithms in the multi-agent reinforcement learning scheme. Therefore, in this subsection, we discuss the relationship between flow conservation networks and multi-agent RL. The value decomposition approach has been widely used in multi-agent RL based on IGM conditions, such as VDN and QMIX. For a given global state $s$ and joint action $ {a}$, the IGM condition asserts the consistency between joint and local greedy action selections in the joint action-value $Q_{\text{tot}}(s,  {a})$ and individual action values $[Q_i(o_i, a_i)]_{i=1}^{k}$:
\begin{equation}
    \arg\max_{ {a} \in \mathcal{A}} Q_{\text{tot}}(s,  {a}) = \left(\arg\max_{a_1 \in \mathcal{A}_1}Q_1(o_1,a_1),\cdots,\arg\max_{a_k \in \mathcal{A}_k} Q_k(o_k,a_k) \right), \forall s \in \mathcal{S}.
\end{equation}

\begin{assumption}\label{Assumption-2}
For any complete trajectory in an MADAG $\tau = (s_0,...,s_f)$, we assume that $Q_{{\text{tot}}}^{\mu}(s_{f-1}, {a}) = R(s_f) f(s_{f-1})$ with  $f(s_n) = \prod_{t=0}^n \frac{1}{\mu (  a| s_t)}$.
\end{assumption}

\begin{remark}
Although Assumption~\ref{Assumption-2} is a strong assumption that does not always hold in practical environments. Here we only use this assumption for discussion analysis, which does not affect the performance of the proposed algorithms.
A scenario where the assumption directly holds is that we sample actions according to a uniform distribution in a tree structure, i.e., $\mu(  a|s) = {1}/{|\mathcal{A}(s)|}$. The uniform policy is also used as an assumption in~\cite{bengio2021flow}.
\end{remark}

\begin{lemma}\label{lemma-2}
Suppose Assumption~1 holds and the environment has a tree structure, based on Theorem~\ref{theo:joint_gmfn_exists} and IGM conditions we have: \\
1) $Q_{\text{tot}}^{\mu}(s, {a}) = F(s, {a})f(s)$;\\
2) $(\arg\max_{a_i}Q_i(o_i,a_i) )_{i=1}^k = ( \arg\max_{a_i}F_i(o_i,a_i) )_{i=1}^k$.
\end{lemma}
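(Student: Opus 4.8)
The plan is to establish part~1) by backward induction over the tree, and then obtain part~2) as a short consequence of part~1), the product factorization of the joint flow, and the IGM condition. For part~1), I would induct on the depth of $s$ (equivalently, on the graph distance from $s$ to $s_f$). The base case is exactly Assumption~\ref{Assumption-2}: at a penultimate state $s_{f-1}$ one has $Q_{\text{tot}}^\mu(s_{f-1},a) = R(s_f)\, f(s_{f-1})$, and flow-matching for the joint GFlowNet forces the flow on the unique terminal edge to equal the terminal reward, $F(s_{f-1},a) = R(s_f)$, so $Q_{\text{tot}}^\mu = F f$ holds there. For the inductive step, fix a non-terminal $s$, an available joint action $a$, and let $s' = T(s,a)$ be its deterministic successor. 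Combining the Bellman backup $Q_{\text{tot}}^\mu(s,a) = \sum_{a'} \mu(a'\mid s')\, Q_{\text{tot}}^\mu(s',a')$, the induction hypothesis $Q_{\text{tot}}^\mu(s',a') = F(s',a')\, f(s')$, the multiplicative recursion $f(s') = f(s)/\mu(a\mid s)$ read off from the definition of $f$ along the unique $s_0\to s'$ path, and flow conservation at $s'$ on the tree (inflow through the unique parent edge equals outflow, $F(s,a) = \sum_{a'} F(s',a')$), the right-hand side rearranges to $F(s,a)\, f(s)$ once one invokes the balancedness of $\mu$ underlying Assumption~\ref{Assumption-2} (the uniform/regular-tree regime of the Remark makes $\sum_{a'}\mu(a'\mid s')F(s',a') = \mu(a\mid s)\,F(s,a)$). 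This closes the induction.

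For part~2), start from part~1): since $f(s)>0$ is independent of the joint action, $\arg\max_a Q_{\text{tot}}^\mu(s,a) = \arg\max_a F(s,a)$. By the joint MA-GFlowNet theorem (Theorem~\ref{theo:joint_gmfn_exists}), on the relevant domain the joint action flow factorizes as $F(s,a) = \prod_{i} F_i(o_i,a_i)$; since each factor is non-negative (and positive where it matters), maximizing this product over $a=(a_1,\dots,a_k)$ is equivalent to maximizing each factor separately, so $\arg\max_a F(s,a) = \big(\arg\max_{a_i} F_i(o_i,a_i)\big)_{i=1}^k$. On the other hand the IGM condition reads $\arg\max_a Q_{\text{tot}}^\mu(s,a) = \big(\arg\max_{a_i} Q_i(o_i,a_i)\big)_{i=1}^k$. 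Chaining the three equalities gives $\big(\arg\max_{a_i} Q_i(o_i,a_i)\big)_{i=1}^k = \big(\arg\max_{a_i} F_i(o_i,a_i)\big)_{i=1}^k$, as claimed.

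The main obstacle is the inductive step of part~1): making the Bellman backup for $Q_{\text{tot}}^\mu$ compatible with flow conservation requires $\mu$ to be balanced in precisely the sense encoded by Assumption~\ref{Assumption-2} (e.g.\ uniform on a regular tree), and some care is needed in the bookkeeping of $f$ — in particular whether its product over $\mu$ includes the action taken at the current node — and at terminal/STOP edges. A secondary point in part~2) is the positivity caveat: handling ties and vanishing local flows $F_i(o_i,a_i)$ in the ``$\arg\max$ of a product equals the tuple of $\arg\max$es'' step. Everything else is used as a black box: the tree structure (deterministic transitions and unique parents), the product form of $F$ from Theorem~\ref{theo:joint_gmfn_exists}, and the IGM identity.
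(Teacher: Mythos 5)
Your proposal follows essentially the same route as the paper's proof: backward induction on the tree with the penultimate-state case $Q_{\text{tot}}^{\mu}(s,a)=R(s_f)f(s)$, $F(s,a)=R(s_f)$ as the base, the Bellman backup combined with tree flow conservation $F(s,a)=\sum_{a'}F(s',a')$ and the multiplicative recursion for $f$ (using the uniform-policy regime of Assumption~1, which the paper uses implicitly by pulling the $\mu$ factor out of the sum) for the inductive step, and then part~2) via positivity of $F$ and $f$, the product factorization $F(s,a)=\prod_i F_i(o_i,a_i)$ from Theorem~\ref{theo:joint_gmfn_exists}, and the IGM identity. The points you flag (balancedness of $\mu$, ties/zero flows in the $\arg\max$ step) are exactly the tacit assumptions in the paper's argument, so the proof is correct and matches the paper's.
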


Based on Assumption\ref{Assumption-2}, we have Lemma~\ref{lemma-2}, which shows the connection between Theorem~\ref{theo:joint_gmfn_exists} and the IGM condition. This action-value function equivalence property helps us better understand the multi-flow network algorithms, especially showing the rationality of Theorem~\ref{theo:joint_gmfn_exists}.

\subsection{Proof of Lemma~\ref{lemma-2}}



\begin{proof}
The proof is an extension of that of Proposition~4 in~\cite{bengio2021flow}. For any $(s, {a})$ satisfies $s_f = T(s,  a)$, we have  $Q_{{\text{tot}}}^{\mu}(s, {a}) = R(s_f)f(s)$ and $F(s,  {a}) = R(s_f)$. Therefore, we have $Q_{{\text{tot}}}^{\mu}(s, {a}) = F(s,  {a})f(s)$.
Then, for each non-final node $s'$, based on the action-value function in terms of the action-value at the next step, we have by induction:
\begin{equation}
    \begin{split}
        Q_{{\text{tot}}}^{\mu}(s, {a}) &=  \hat{R}(s') + \mu(  a|s') \sum_{ {a}^{\prime}\in\mathcal{A}(s')}Q_{{\text{tot}}}^{\mu}(s', {a}^{\prime};\hat{R}) \\
        & \overset{(a)}{=} 0 + \mu(  a|s') \sum_{ {a}^{\prime}\in\mathcal{A}(s')}F(s',  {a}^{\prime}; R)f(s'),
    \end{split}
\end{equation}
where $\hat R(s')$  is the reward of $Q_{{\text{tot}}}^{\mu}(s, {a})$ and $(a)$ is due to that $\hat R(s')=0$ if $s'$ is not a final state.
Since the environment has a tree structure, we have
\begin{equation}
    F(s, {a}) =   \sum_{ {a}^{\prime}\in\mathcal{A}(s')}F(s^{\prime}, {a}^{\prime}),
\end{equation}
which yields
\begin{equation}
    Q_{{\text{tot}}}^{\mu}(s, {a}) = \mu(  a|s') F(s,  {a})f(s') = \mu(  a|s') F(s,  {a})f(s) \frac{1}{\mu(  a|s')} = F(s,  {a})f(s). \nonumber
\end{equation}
According to Theorem~\ref{theo:joint_gmfn_exists}, we have $F(s_t,  a_t) = \prod_i F_i(o_t^i, a_t^i)$, yielding
\begin{equation}
\begin{split}
    \arg\max_{ {a}} Q_{{\text{tot}}}(s,  {a}) &\overset{(a)}{=} \arg\max_{ {a}} \log F(s,  {a})f(s) \\
    &\overset{(b)}{=} \arg\max_{ {a}} \sum_{i=1}^k \log F_i(o_i, a_i) \\
    &\overset{(c)}{=} \left(\arg\max_{a_1 \in \mathcal{A}_i}F_1(o_1,a_1), \cdots, \arg\max_{a_k \in \mathcal{A}_k}F_k(o_k,a_k)\right),
\end{split}
\end{equation}
where $(a)$ is based on the fact $F$ and $f(s)$ are positive, $(b)$ is due to Theorem~\ref{theo:joint_gmfn_exists}.
Combining with the IGM condition
\begin{equation}
    \arg\max_{ {a} \in \mathcal{A}} Q_{\text{tot}}(s,  {a}) = \left(\arg\max_{a_1 \in \mathcal{A}_1}Q_1(o_1,a_1),\cdots,\arg\max_{a_k \in \mathcal{A}_k} Q_k(o_k,a_k) \right), \forall s \in \mathcal{S}.
\end{equation}
we can conclude that
$$
\left( \arg\max_{a_i \in \mathcal{A}_i}F_i(o_i,a_i) \right)_{i=1}^k = \left(\arg\max_{a_1 \in \mathcal{A}_1}Q_i(o_i,a_i) \right)_{i=1}^k.
$$
Then we complete the proof.
\end{proof}

\section{Additional Experiments}
\subsection{Hyper-Grid Environment}
\subsubsection{Effect of Sampling Method:} We consider two different sampling methods of JFN; the first one is to sample trajectories using the flow function $F_i$ of each agent independently, called JFN (IS), and the other one is to combine the policies $\pi_i$ of all agents to obtain a joint policy $\pi$, and then performed centralized sampling, named JFN (CS).
As shown in Figure~\ref{fig_sampling}, we found that the JFN (CS) method has better performance than JFN (IS) because the error of the policy $\pi$ estimated by the combination method is smaller, and several better samples can be obtained during the training process.
However, the JFN (IS) method can achieve decentralized sampling, which is more in line with practical applications.
\begin{figure}[htb]
	\centering
    \subfloat[Mode Found]{
	\includegraphics[width=2.3in]{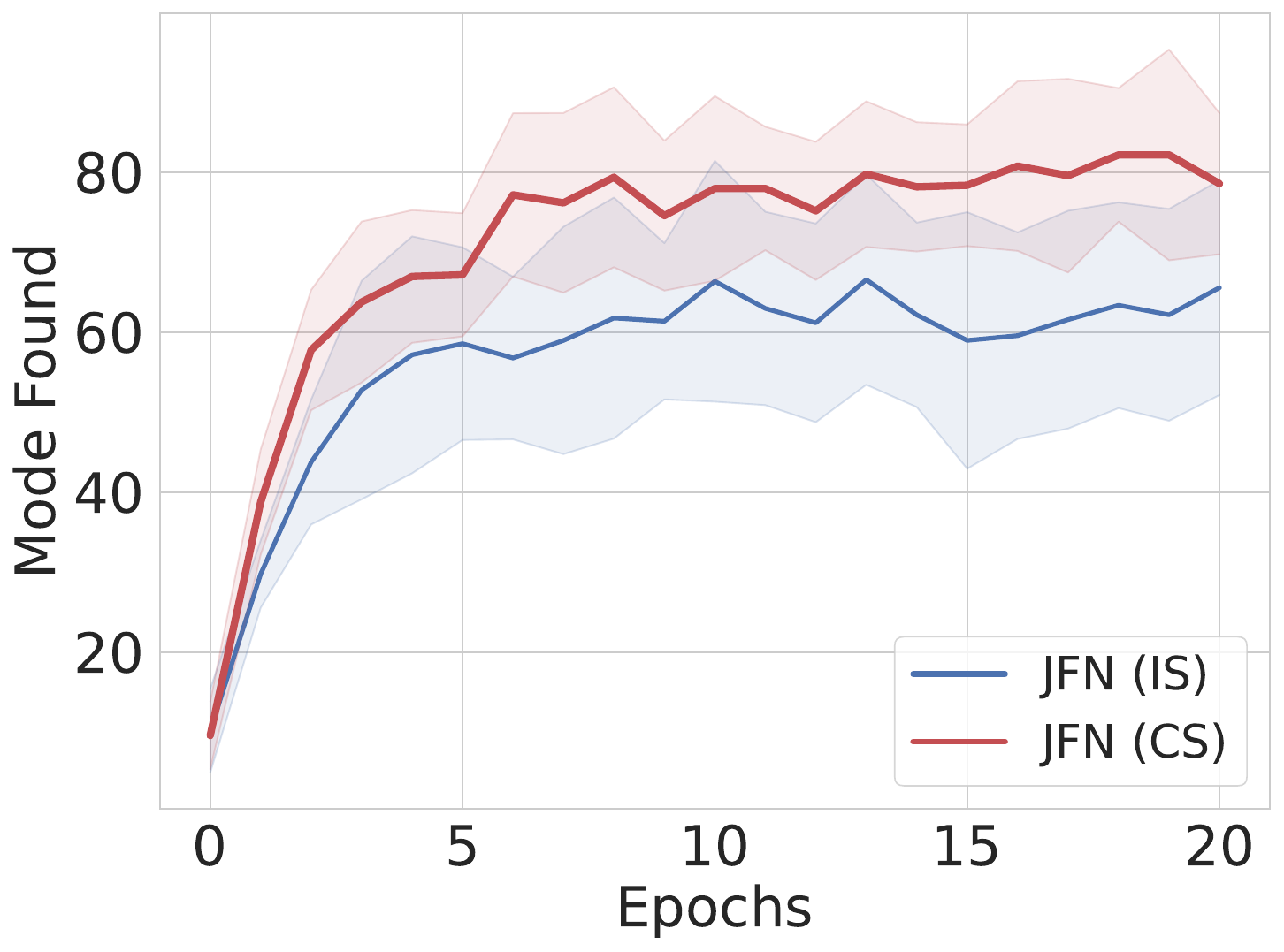}
	}
	\subfloat[L1-Error]{
	\includegraphics[width=2.3in]{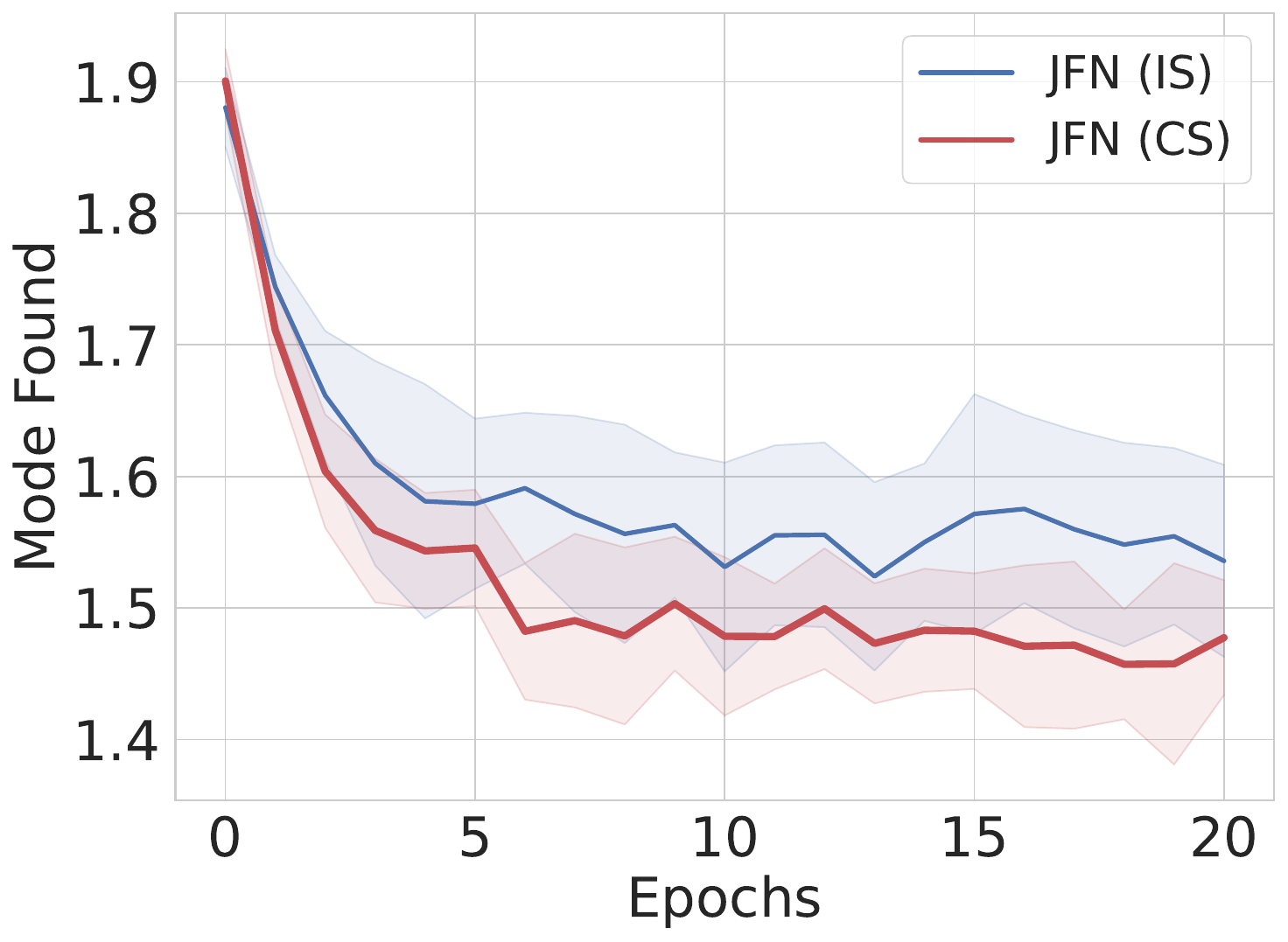}
	}
 	\caption{The performance of JFN with different methods.}
	\label{fig_sampling}
\end{figure}
 \subsubsection{Effect of Different Rewards:}
We study the effect of different rewards in Figure~\ref{fig_reward}. In particular, we set $R_0 = \{10^{-1}, 10^{-2}, 10^{-4}\}$ for different task challenge.
A smaller value of $R_0$ makes the reward function distribution more sparse, which makes policy optimization more difficult \cite{bengio2021flow,riedmiller2018learning,trott2019keeping}.
As shown in Figure~\ref{fig_reward}, we found that our proposed method is robust with the cases $R_0 = 10^{-1}$ and $R_0 = 10^{-2}$.
When the reward distribution becomes sparse, the performance of the proposed algorithm degrades slightly.

\begin{figure}[htb!]
	\centering
    \subfloat{
	\includegraphics[width=2.3in]{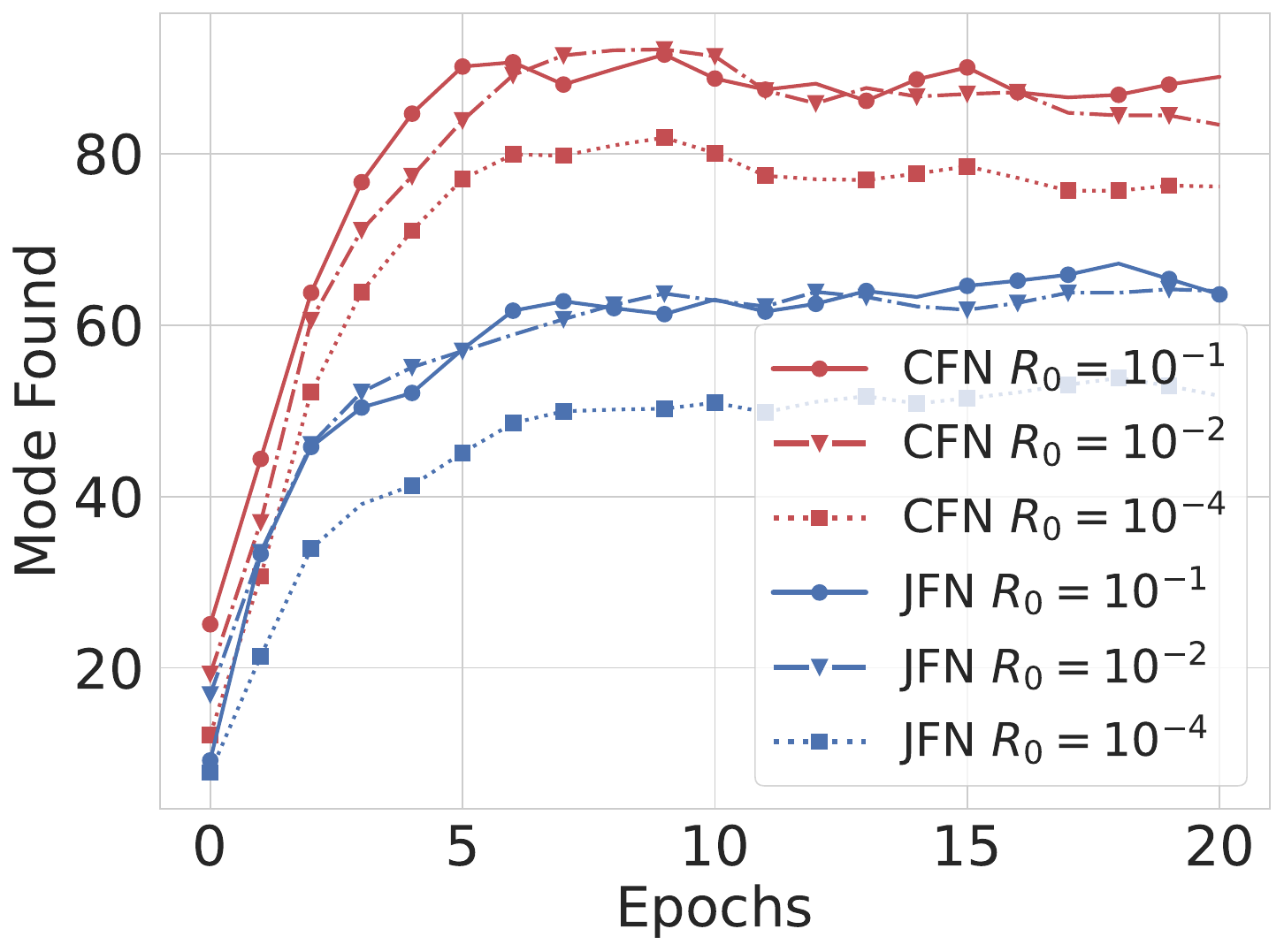}
	}
	\subfloat{
	\includegraphics[width=2.3in]{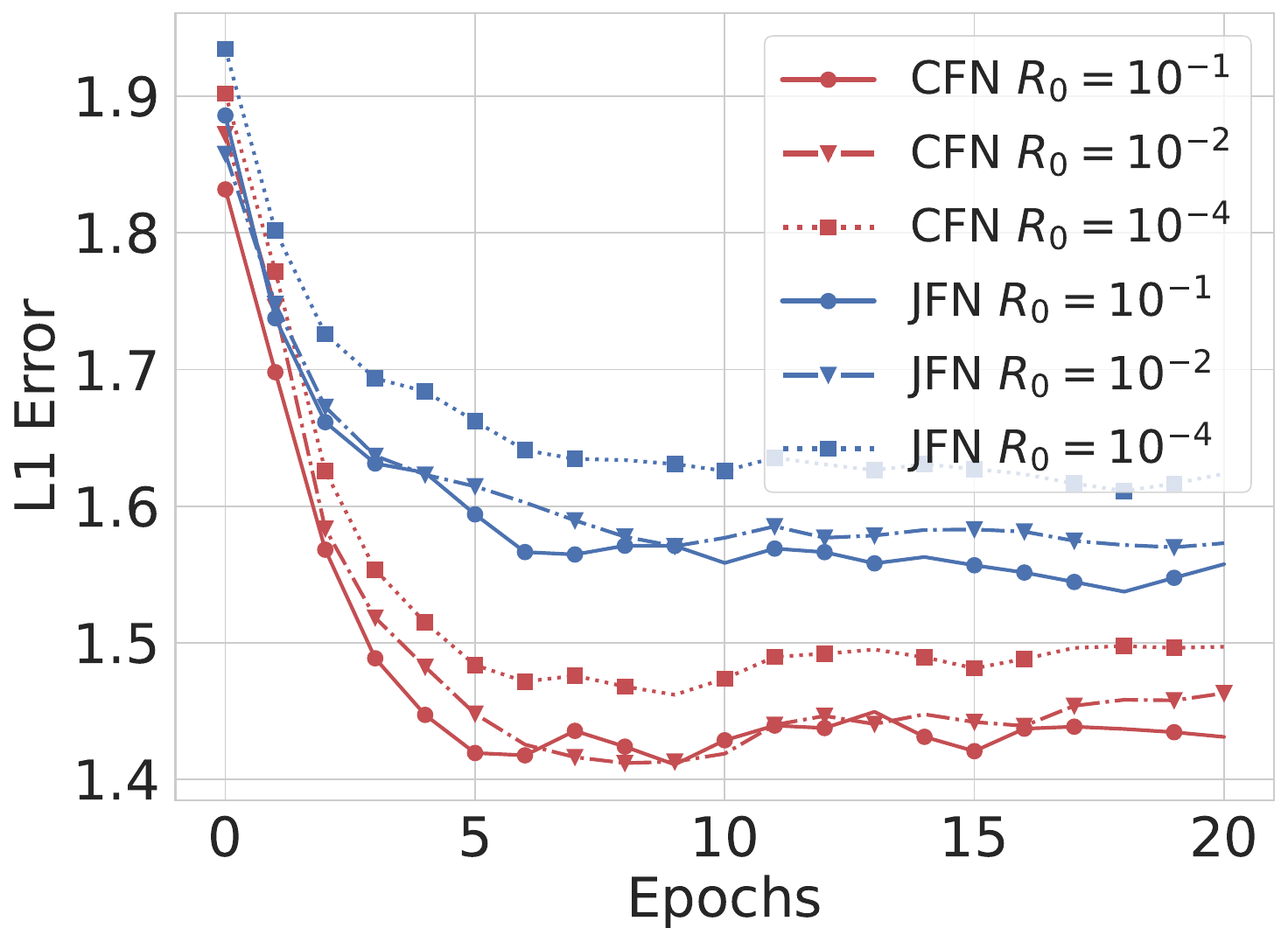}
	}

 	\caption{The effect of different reward $R_0$ on different algorithm.}
	\label{fig_reward}
\end{figure}

\subsubsection{Flow Match Loss Function:}
Figure~\ref{fig_loss} shows the curve of the flow matching loss function with the number of training steps. The loss of our proposed algorithm gradually decreases, ensuring the stability of the learning process.
For some RL algorithms based on the state-action value function estimation, the loss usually oscillates.
This may be because RL-based methods use experience replay buffer and the transition data distribution is not stable enough.
The method we propose uses an on-policy based optimization method, and the data distribution changes with the current sampling policy, hence the loss function is relatively stable.
Then we present the experimental details on the Hyper-Grid environments. We set the same number of training steps for all algorithms for a fair comparison. Moreover, we list the key hyperparameters of the different algorithms in Tables~\ref{parameter-mappo}-\ref{parameter-cfn}.


\begin{figure}[htb!]
	\centering
	\includegraphics[width=2.3in]{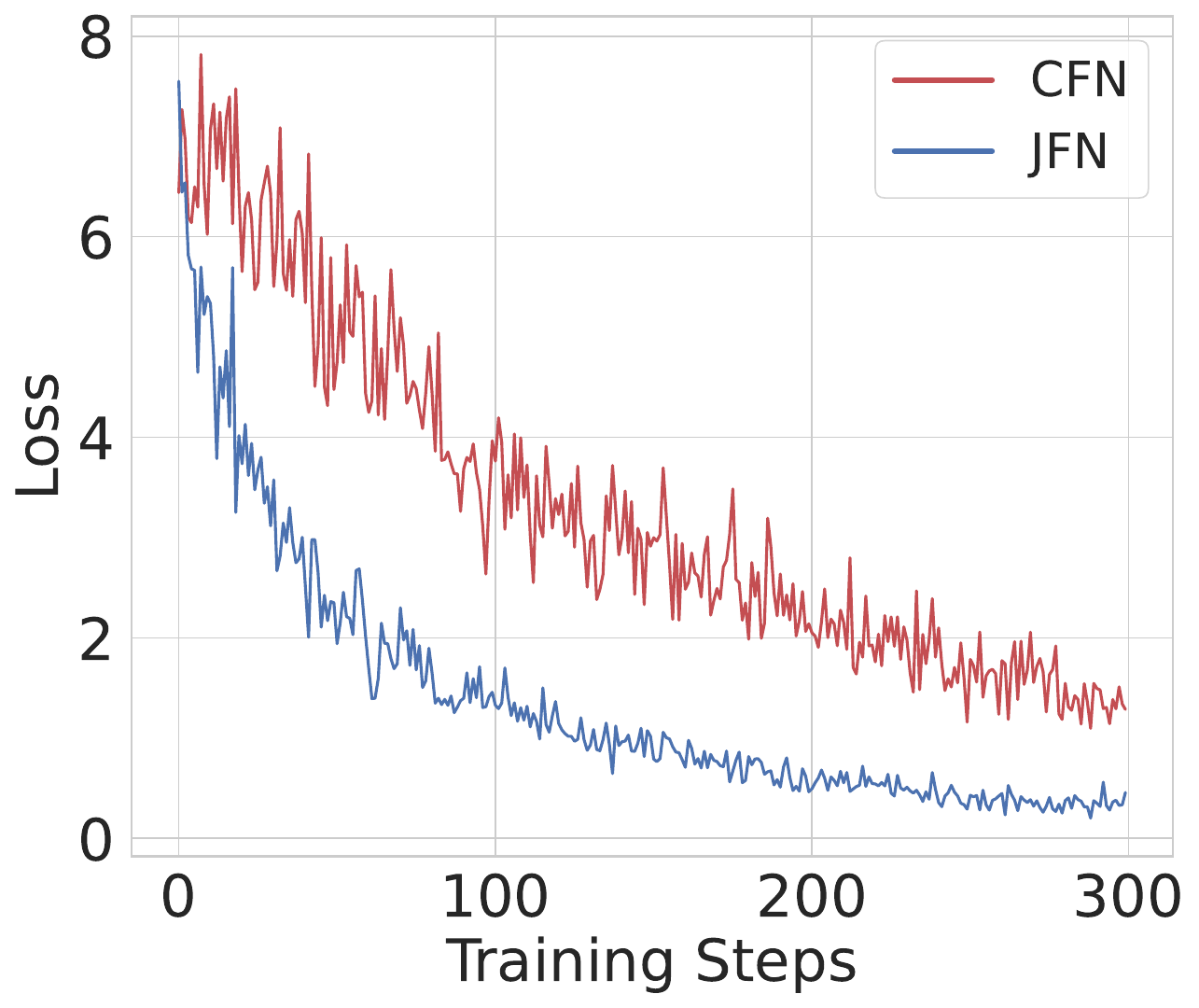}

 	\caption{The flow matching loss of different algorithm.}
	\label{fig_loss}
\end{figure}

In addition, as shown in Table~\ref{table4}, both the reinforcement learning methods and our proposed method can achieve the highest reward, but the average reward of reinforcement learning is slightly better for all found modes. Our algorithms do not always have higher rewards compared to RL, which is reasonable since the goal of MA-GFlowNets is not to maximize rewards.

\begin{table}[!htbp]
\centering
\begin{tabular}{ccccccccccc} 
\toprule 
\multicolumn{2}{c}{Environment}&\multicolumn{1}{c}{MAPPO}& \multicolumn{1}{c}{MASAC}& \multicolumn{1}{c}{MCMC}& \multicolumn{1}{c}{CFN}& \multicolumn{1}{c}{JFN}\\
\midrule
\rowcolor{gray!10}\multicolumn{2}{c}{Hyper-Grid v1}& 2.0 & 1.84 &1.78 & 2.0 & 2.0 \\   
\multicolumn{2}{c}{Hyper-Grid v2}&1.90 & 1.76 & 1.70 & 1.85 & 1.85 \\
\rowcolor{gray!10}\multicolumn{2}{c}{Hyper-Grid v3}&1.84 & 1.66 & 1.62 & 1.82 & 1.82 \\
\bottomrule 
\end{tabular}
\caption{The best reward found using different methods.} \vspace{0.5cm}
\label{table4}
\end{table}

\subsection{StarCraft}
We present a visual analysis based on 3m with three identical entities attacking to win.
All comparison experiments adopted PyMARL framework and used default experimental parameters.
Figure~\ref{fig_3mm} shows the decision results of different algorithms on the 3m map.
It can be found that the proposed algorithm can obtain results under different reward distributions, that is, win at different costs. The costs of other algorithms are often the same, which shows that the proposed algorithm is suitable for scenarios with richer rewards.
\textcolor{black}{Figure~\ref{fig_2s3z} shows the performance of the different algorithms on 2s3z, which shows a similar conclusion that the algorithm based on GFlowNets may be difficult to get the best yield, but the goal is not to do this, but to fit the distribution better.
Moreover, on StarCraft missions, we did not use a clear metric to indicate the diversity of different trajectories, mainly because the status of each entity includes multiple aspects, its movement range, health, opponent observation, etc., which can easily result in different trajectories, but these differences do not indicate a good fit for the reward distribution. As a result, it is not presented in the same way as Hyper-Grid and Simple-Spread.
Therefore, we used a visual method to compare the results. The maximized reward-oriented algorithms such as QMIX will improve the reward by reducing the death of entities, while the GFlowNets method can better fit the distribution on the basis of guaranteeing higher rewards.}
\begin{figure}[htb!]
	\centering
    \subfloat{
	\includegraphics[width=1.3in]{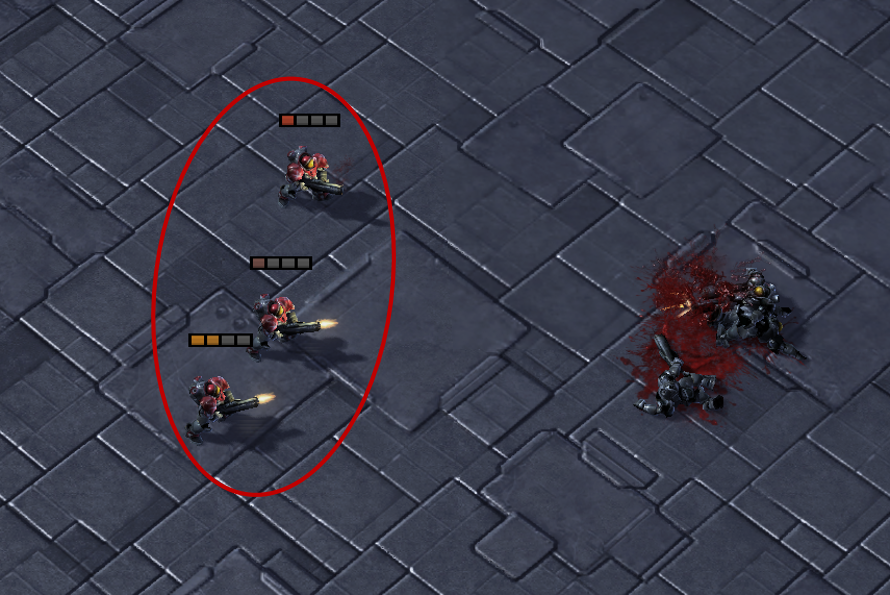}
	}
	\subfloat{
	\includegraphics[width=1.3in]{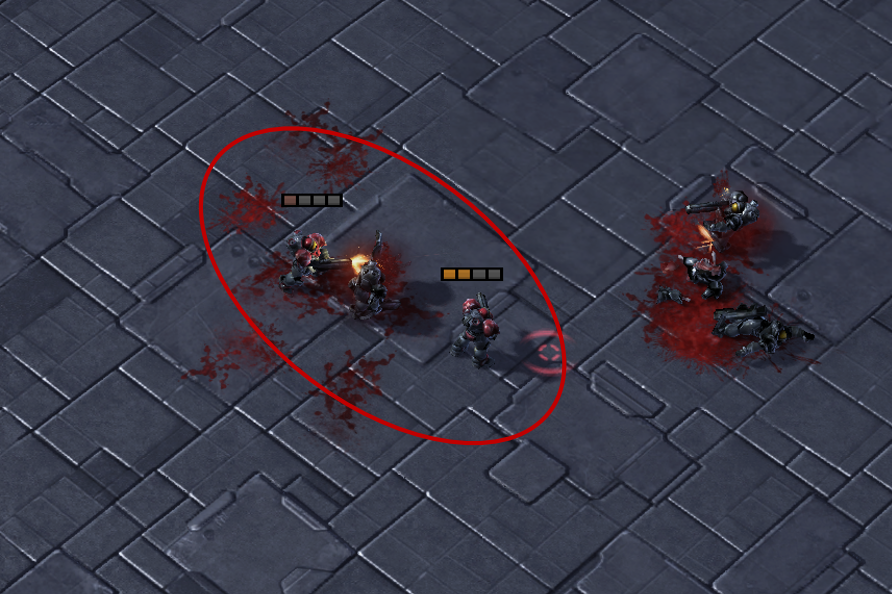}
 }
     \subfloat{
	\includegraphics[width=1.3in]{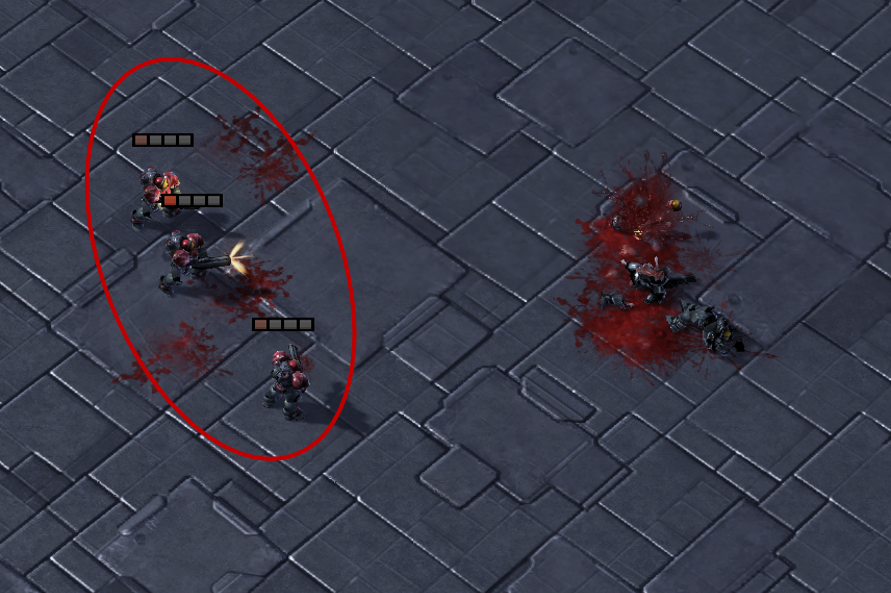}
	}
	\subfloat{
	\includegraphics[width=1.3in]{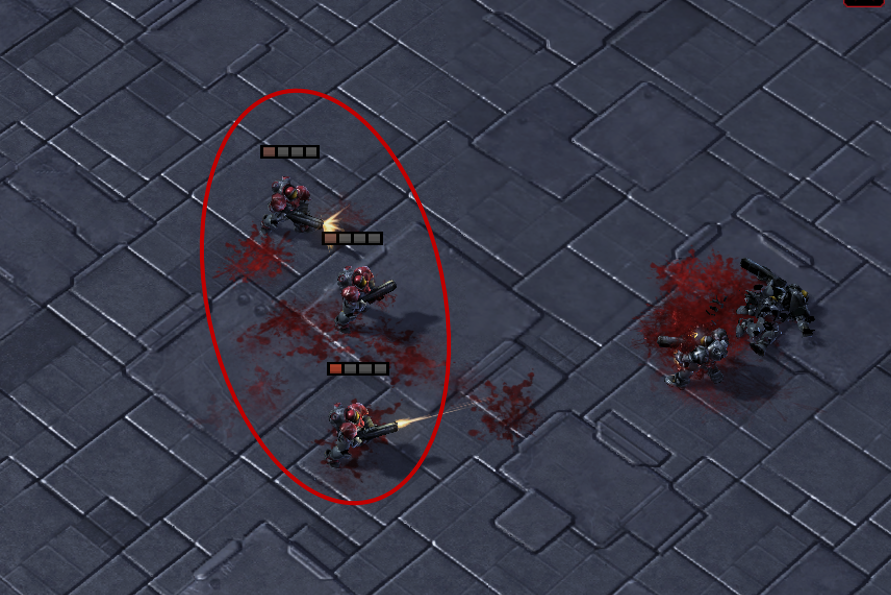}
 }

	    \subfloat{
	\includegraphics[width=1.3in]{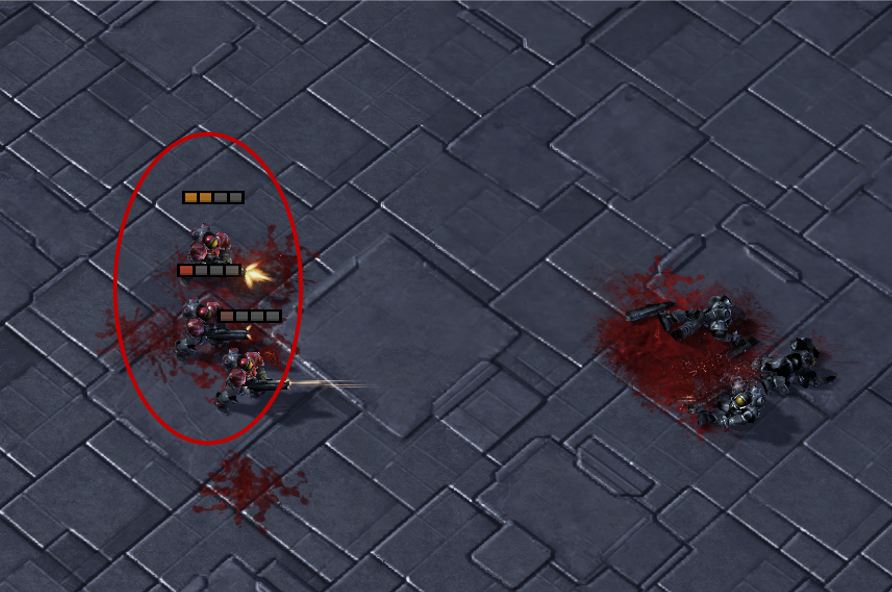}
	}
	\subfloat{
	\includegraphics[width=1.3in]{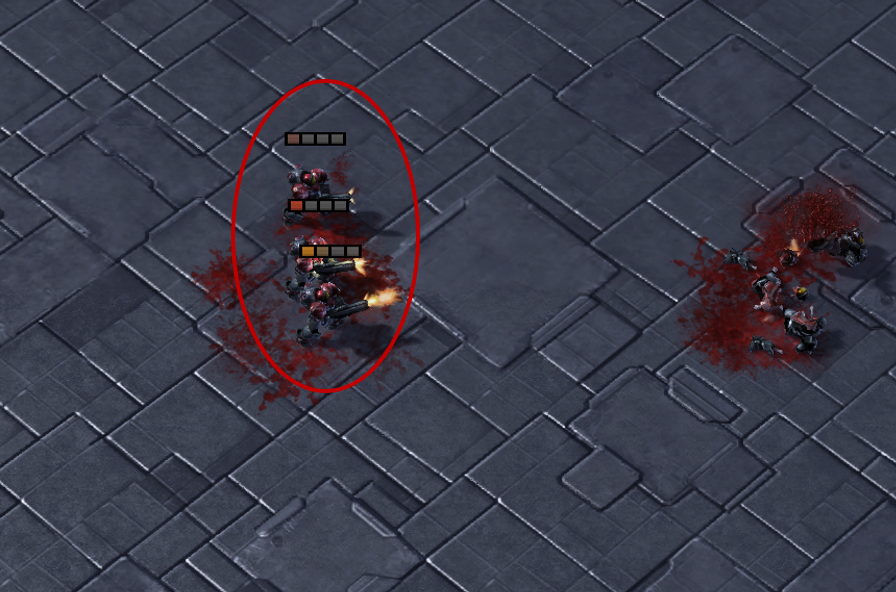}
 }
     \subfloat{
	\includegraphics[width=1.3in]{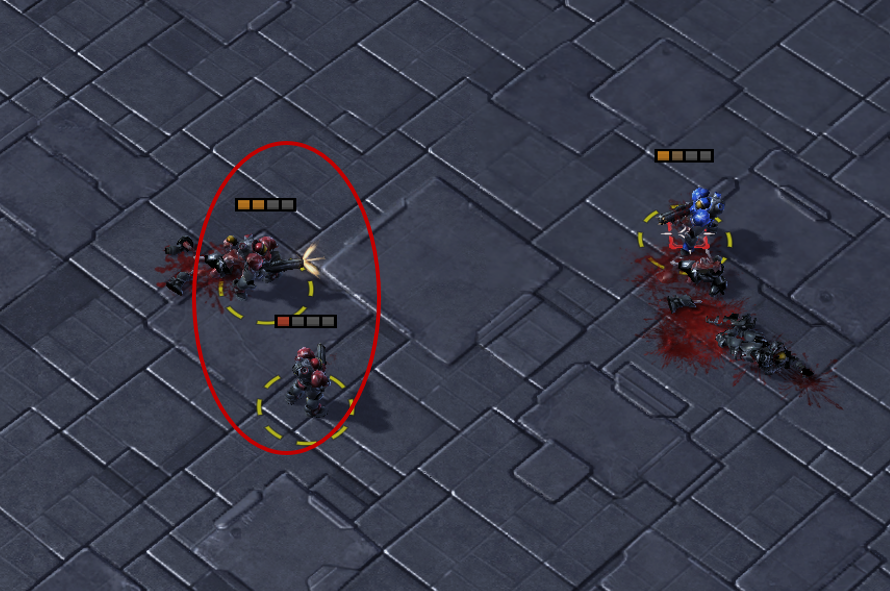}
	}
	\subfloat{
	\includegraphics[width=1.3in]{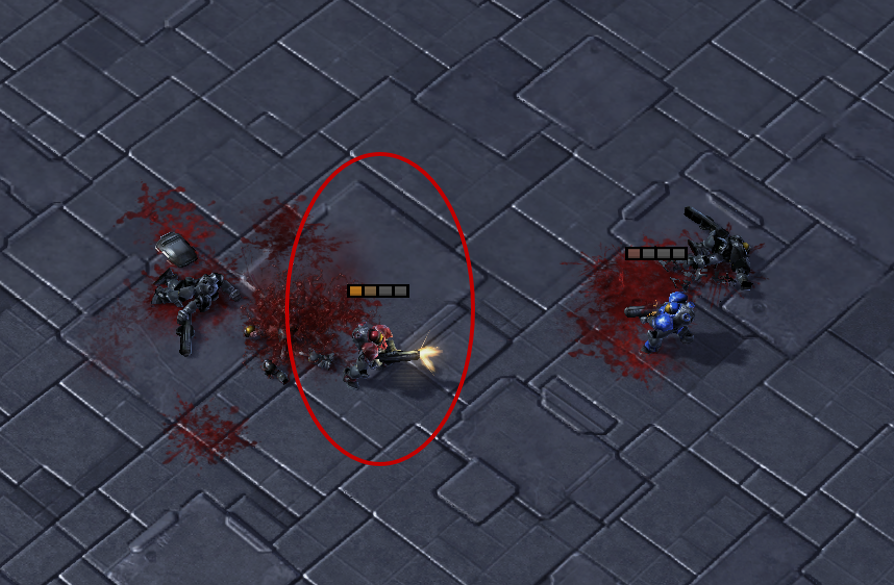}
 }

 	\caption{The sample results of different algorithm on 3m map. \textbf{Upper}: QMIX, \textbf{Bottom}: JFN}
	\label{fig_3mm}
\end{figure}

\begin{figure}[htb!]
	\centering
    \subfloat{
	\includegraphics[width=2.3in]{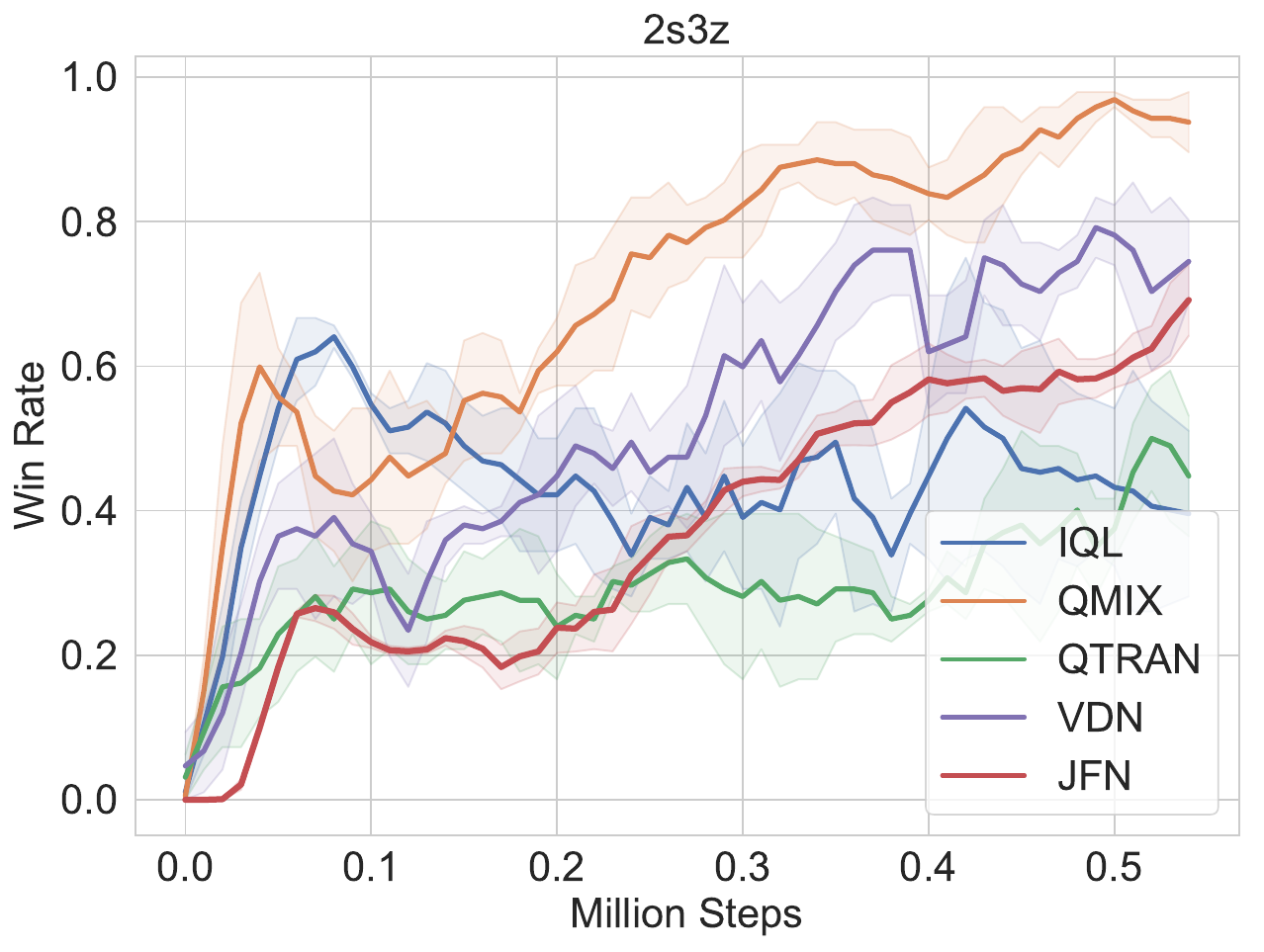}
	}

 	\caption{Average rate on 2s3z}
	\label{fig_2s3z}
\end{figure}

\subsection{Sparse-Simple-Spread Environment}
In order to verify the performance of the CFN and JFN algorithms more extensively, we also conducted experiments on Simple-Spread in the multi-agent particle environment.
We compared two classic Multi-agent RL algorithms, QMIX~\cite{rashid2018qmix} and MAPPO~\cite{yu2021surprising}, which have achieved State-of-the-Art performance in the standard simple-spread environment.
Since the decision-making problems solved by GFlowNets are usually the setting of discrete state-action space, we modified Simple-Spread to meet the above conditions and named it discrete Sparse-Simple-Spread.
Specifically, we set the reward function such that if the agent arrives at or near a landmark, the agent will receive the highest or second-highest reward. And this reward is given to the agent only after each trajectory ends.
In addition, we fix the speed of the agent to keep the state space discrete and all agents start from the origin.

We adopt the average return and the number of distinguishable trajectories as performance metrics.
When calculating the average return, JFN and CFN select the action with the largest flow for testing.
As shown in Figure~\ref{fig_spread}-Left, although the MAPPO and QMIX algorithms converge faster than the JFN, the JFN eventually achieves comparable performance.
The performance of JFN is better than that of the CFN algorithm, which also shows that the method of flow decomposition can better learn the flow $F_i$ of each agent.
In each test round, we collect 16 trajectories and calculate the number of trajectories, which can be accumulated for comparison.
The number of different trajectories found by JFN is 4 times that of MAPPO in Figure~\ref{fig_spread}-Right, which shows that MA-GFlowNets can obtain more diverse results by sampling with the flow function.
Moreover, the performance of JFN is not as good as that of the RL algorithm.
This is because JFN lacks a guarantee for monotonic policy improvement \cite{schulman2015trust,schulman2017proximal}. It pays more attention to exploration and does not fully use the learned policy, resulting in fewer high-return trajectories collected.
MAPPO finds more high-return trajectories in  Figure~\ref{fig_spread}-Right, but it still struggles to generate more diverse results.
In each sampling process, the trajectories found by MAPPO are mostly the same, while JFN does better.

\begin{figure}[htb!]
	\centering
    \subfloat{
	\includegraphics[width=2.3in]{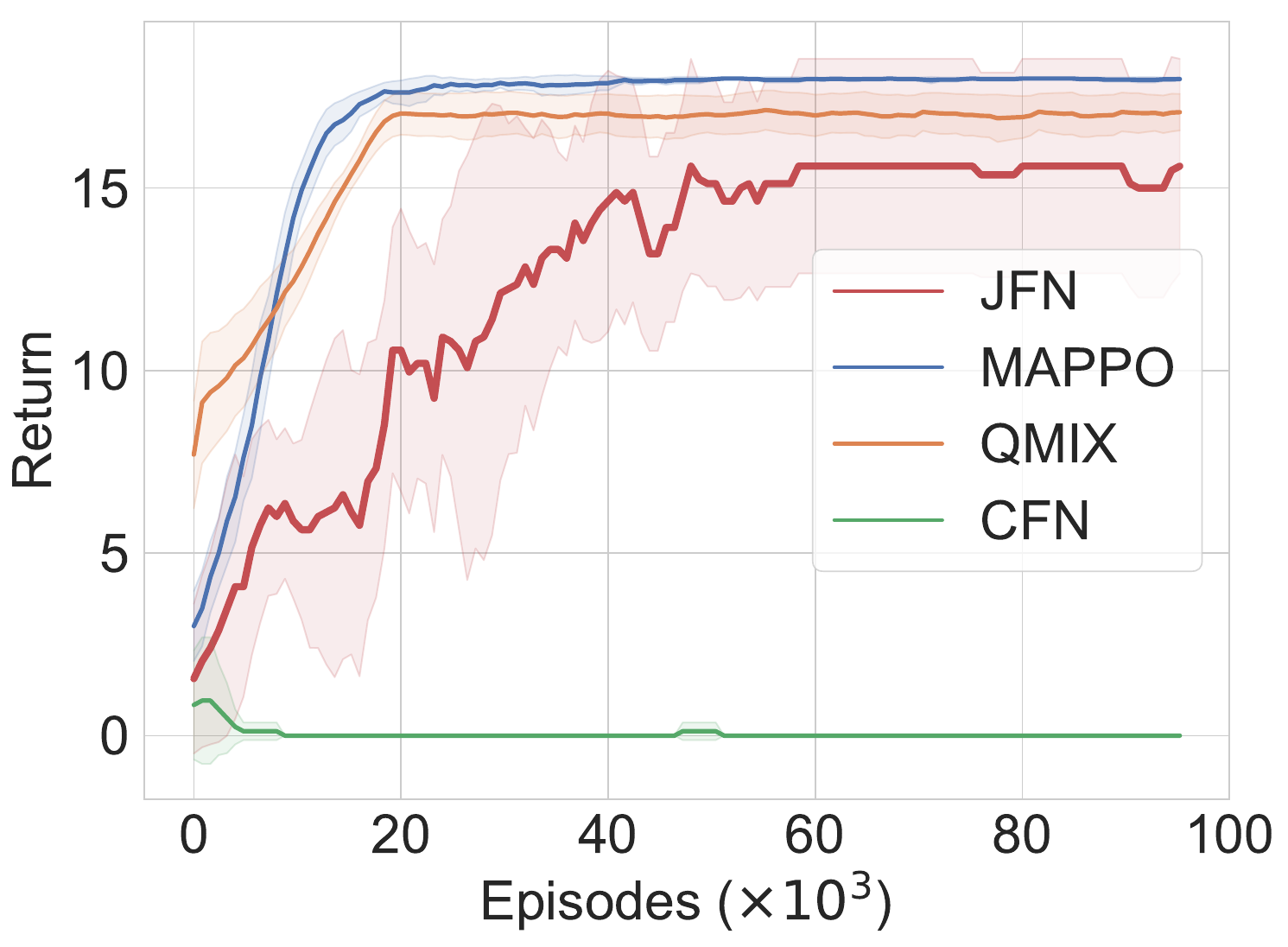}
	}
	\subfloat{
	\includegraphics[width=2.3in]{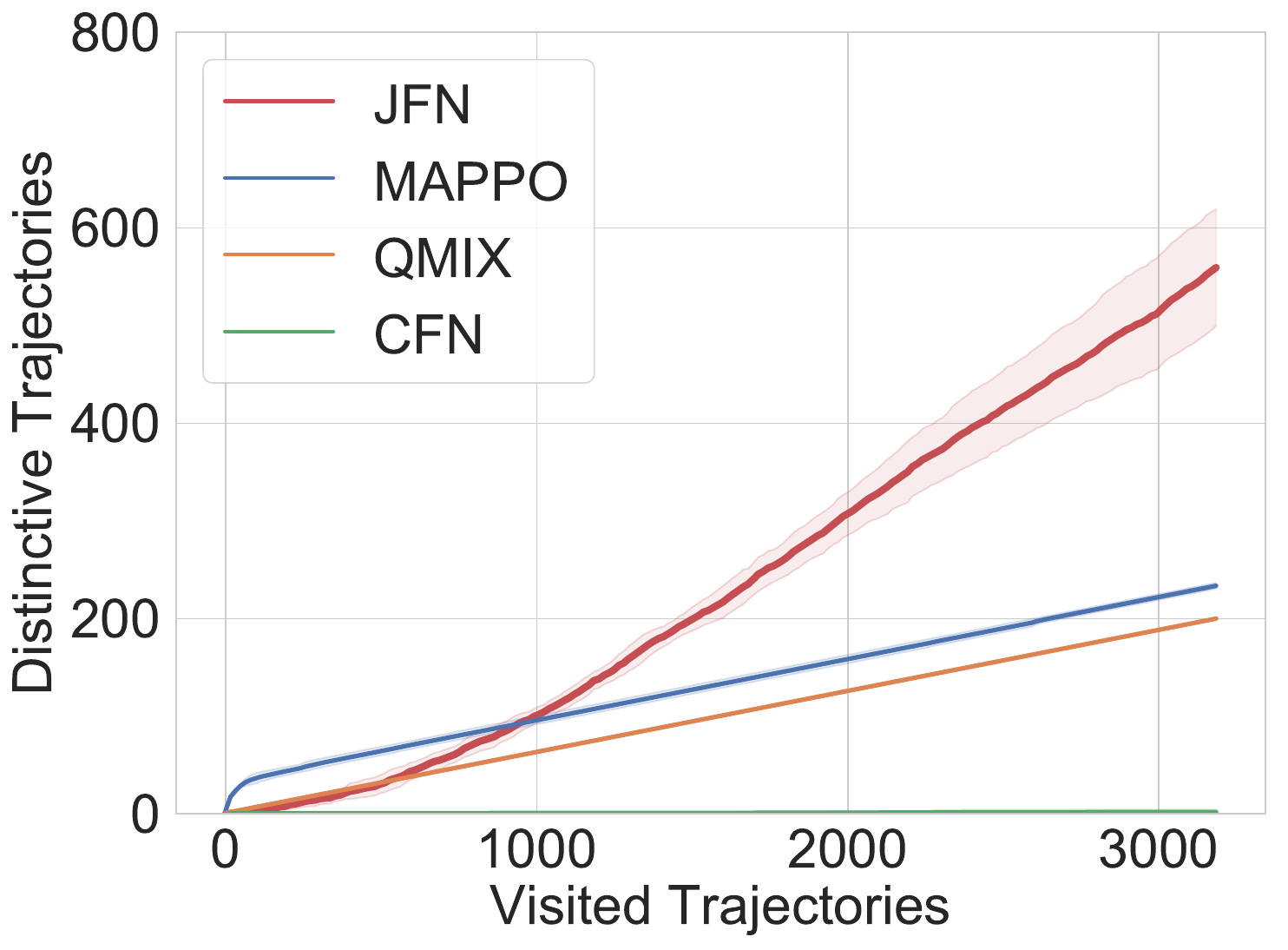
 }
	}

 	\caption{Average return and the number of distinctive trajectories performance of different algorithms on Sparse-Simple-Spread environments. }
	\label{fig_spread}
\end{figure}




\begin{table}[!h]
\centering
\caption{Hyper-parameter of MAPPO under different environments}
\label{parameter-mappo}
\scalebox{0.9}{\begin{tabular}{ccccccc}
\toprule
\multicolumn{1}{l}{}              & \multicolumn{1}{l}{Hyper-Grid-v1} & \multicolumn{1}{l}{Hyper-Grid-v2} & \multicolumn{1}{l}{Hyper-Grid-v3} \\ \midrule
\rowcolor{gray!10}Train Steps                   & 20000                                & 20000                                  & 20000                            \\
Agent     & 2  & 2  & 3 \\
\rowcolor{gray!10}Grid Dim     & 2  & 3  & 3 \\
Grid Size    & [8,8] & [8,8] & [8,8] \\
\rowcolor{gray!10}Actor Network Hidden Layers        & {[}256,256{]}                          & {[}256,256{]}                           & {[}256,256{]}                      \\
Optimizer                         & Adam                                   & Adam                                    & Adam                               \\
\rowcolor{gray!10}Learning Rate                     & 0.0001                                 & 0.0001                                  & 0.0001                            \\
Batchsize                         & 64                                    & 64                                     & 64                                \\
\rowcolor{gray!10}Discount Factor                & 0.99                                     & 0.99                                      & 0.99                                 \\
PPO Entropy                & 1e-1               & 1e-1                                      & 1e-1                                 \\
\bottomrule
\end{tabular}}%
\end{table}

\begin{table}[!h]
\centering
\caption{Hyper-parameter of MASAC under different environments}
\label{parameter-masac}
\scalebox{0.9}{\begin{tabular}{ccccccc}
\toprule
\multicolumn{1}{l}{}              & \multicolumn{1}{l}{Hyper-Grid-v1} & \multicolumn{1}{l}{Hyper-Grid-v2} & \multicolumn{1}{l}{Hyper-Grid-v3} \\ \midrule
\rowcolor{gray!10}Train Steps                   & 20000                                & 20000                                  & 20000                            \\
Grid Dim     & 2  & 3  & 3 \\
\rowcolor{gray!10}Grid Size    & [8,8] & [8,8] & [8,8] \\
Actor Network Hidden Layers        & {[}256,256{]}                          & {[}256,256{]}                           & {[}256,256{]}                      \\
\rowcolor{gray!10}Critic Network Hidden Layers        & {[}256,256{]}                          & {[}256,256{]}                           & {[}256,256{]}                      \\
Optimizer                         & Adam                                   & Adam                                    & Adam                               \\
\rowcolor{gray!10}Learning Rate                     & 0.0001                                 & 0.0001                                  & 0.0001                            \\
Batchsize                         & 64                                    & 64                                     & 64                                \\
\rowcolor{gray!10}Discount Factor                & 0.99                                     & 0.99                                      & 0.99                                 \\
SAC Alpha                & 0.98          & 0.98                                      & 0.98                                 \\
\rowcolor{gray!10}Target Network Update                & 0.001          & 0.001                                      & 0.001                              \\
\bottomrule
\end{tabular}}%
\end{table}

\begin{table}[!h]
\centering
\caption{Hyper-parameter of JFN under different environments}
\label{parameter-fcn}
\scalebox{0.9}{\begin{tabular}{ccccc}
\toprule
\multicolumn{1}{l}{}              & \multicolumn{1}{l}{Hyper-Grid-v1} & \multicolumn{1}{l}{Hyper-Grid-v2} & \multicolumn{1}{l}{Hyper-Grid-v3} \\ \midrule
\rowcolor{gray!10}Train Steps                   & 20000                                & 20000                                  & 20000                            \\
$R_2$    & 2  & 2  & 2 \\
\rowcolor{gray!10}$R_1$    & 0.5  & 0.5  & 0.5 \\
Grid Dim     & 2  & 3  & 3 \\
\rowcolor{gray!10}Grid Size    & [8,8] & [8,8] & [8,8] \\
Trajectories per steps & 16  & 16  & 16 \\
\rowcolor{gray!10}Flow Network Hidden Layers        & {[}256,256{]}                          & {[}256,256{]}                           & {[}256,256{]}                      \\
Optimizer                         & Adam                                   & Adam                                    & Adam                               \\
\rowcolor{gray!10}Learning Rate                     & 0.0001                                 & 0.0001                                  & 0.0001                            \\
$\epsilon$                & 0.0005          & 0.0005                                      & 0.0005                                 \\
\bottomrule
\end{tabular}}%
\end{table}

\begin{table}[!h]
\centering
\caption{Hyper-parameter of CJFN under different environments}
\label{parameter-fcn}
\scalebox{0.9}{\begin{tabular}{ccccc}
\toprule
\multicolumn{1}{l}{}              & \multicolumn{1}{l}{Hyper-Grid-v1} & \multicolumn{1}{l}{Hyper-Grid-v2} & \multicolumn{1}{l}{Hyper-Grid-v3} \\ \midrule
\rowcolor{gray!10}Train Steps                   & 20000                                & 20000                                  & 20000                            \\
$R_2$    & 2  & 2  & 2 \\
\rowcolor{gray!10}$R_1$    & 0.5  & 0.5  & 0.5 \\
Grid Dim     & 2  & 3  & 3 \\
\rowcolor{gray!10}Grid Size    & [8,8] & [8,8] & [8,8] \\
Trajectories per steps & 16  & 16  & 16 \\
\rowcolor{gray!10}Flow Network Hidden Layers        & {[}256,256{]}                          & {[}256,256{]}                           & {[}256,256{]}                      \\
Optimizer                         & Adam                                   & Adam                                    & Adam                               \\
\rowcolor{gray!10}Learning Rate                     & 0.0001                                 & 0.0001                                  & 0.0001                            \\
$\epsilon$                & 0.0005          & 0.0005                                      & 0.0005                                 \\
\rowcolor{gray!10}Number of $\omega$                     & 4                                 & 4                                  & 4                           \\
\bottomrule
\end{tabular}}%
\end{table}

\begin{table}[!h]
\centering
\caption{Hyper-parameter of CFN under different environments}
\label{parameter-cfn}
\scalebox{0.9}{\begin{tabular}{cccccc}
\toprule
\multicolumn{1}{l}{}              & \multicolumn{1}{l}{Hyper-Grid-v1} & \multicolumn{1}{l}{Hyper-Grid-v2} & \multicolumn{1}{l}{Hyper-Grid-v3} \\ \midrule
\rowcolor{gray!10}Train Steps                   & 20000                                & 20000                                  & 20000                            \\
Trajectories per steps & 16  & 16  & 16 \\
\rowcolor{gray!10}$R_2$    & 2  & 2  & 2 \\
$R_1$    & 0.5  & 0.5  & 0.5 \\
\rowcolor{gray!10}Grid Dim     & 2  & 3  & 3 \\
Grid Size    & [8,8] & [8,8] & [8,8] \\
\rowcolor{gray!10}Flow Network Hidden Layers        & {[}256,256{]}                          & {[}256,256{]}                           & {[}256,256{]}                      \\
Optimizer                         & Adam                                   & Adam                                    & Adam                               \\
\rowcolor{gray!10}Learning Rate                     & 0.0001                                 & 0.0001                                  & 0.0001                            \\
$\epsilon$                & 0.0005          & 0.0005                                      & 0.0005                                 \\

\bottomrule
\end{tabular}}%
\end{table}


\appendix

\section{Technical Appendices and Supplementary Material}
Technical appendices with additional results, figures, graphs and proofs may be submitted with the paper submission before the full submission deadline (see above), or as a separate PDF in the ZIP file below before the supplementary material deadline. There is no page limit for the technical appendices.


\newpage
\section*{NeurIPS Paper Checklist}

The checklist is designed to encourage best practices for responsible machine learning research, addressing issues of reproducibility, transparency, research ethics, and societal impact. Do not remove the checklist: {\bf The papers not including the checklist will be desk rejected.} The checklist should follow the references and follow the (optional) supplemental material.  The checklist does NOT count towards the page
limit. 

Please read the checklist guidelines carefully for information on how to answer these questions. For each question in the checklist:
\begin{itemize}
    \item You should answer \answerYes{}, \answerNo{}, or \answerNA{}.
    \item \answerNA{} means either that the question is Not Applicable for that particular paper or the relevant information is Not Available.
    \item Please provide a short (1–2 sentence) justification right after your answer (even for NA). 
\end{itemize}

{\bf The checklist answers are an integral part of your paper submission.} They are visible to the reviewers, area chairs, senior area chairs, and ethics reviewers. You will be asked to also include it (after eventual revisions) with the final version of your paper, and its final version will be published with the paper.

The reviewers of your paper will be asked to use the checklist as one of the factors in their evaluation. While "\answerYes{}" is generally preferable to "\answerNo{}", it is perfectly acceptable to answer "\answerNo{}" provided a proper justification is given (e.g., "error bars are not reported because it would be too computationally expensive" or "we were unable to find the license for the dataset we used"). In general, answering "\answerNo{}" or "\answerNA{}" is not grounds for rejection. While the questions are phrased in a binary way, we acknowledge that the true answer is often more nuanced, so please just use your best judgment and write a justification to elaborate. All supporting evidence can appear either in the main paper or the supplemental material, provided in appendix. If you answer \answerYes{} to a question, in the justification please point to the section(s) where related material for the question can be found.

IMPORTANT, please:
\begin{itemize}
    \item {\bf Delete this instruction block, but keep the section heading ``NeurIPS Paper Checklist"},
    \item  {\bf Keep the checklist subsection headings, questions/answers and guidelines below.}
    \item {\bf Do not modify the questions and only use the provided macros for your answers}.
\end{itemize}


\begin{enumerate}

\item {\bf Claims}
    \item[] Question: Do the main claims made in the abstract and introduction accurately reflect the paper's contributions and scope?
    \item[] Answer:\answerYes{} 
    \item[] Justification: Section 1 provides the MA-GFN formulation, section 2 provides theoretical motivations for the Multi-agent loss, section 4 provides experimental support.
    \item[] Guidelines:
    \begin{itemize}
        \item The answer NA means that the abstract and introduction do not include the claims made in the paper.
        \item The abstract and/or introduction should clearly state the claims made, including the contributions made in the paper and important assumptions and limitations. A No or NA answer to this question will not be perceived well by the reviewers. 
        \item The claims made should match theoretical and experimental results, and reflect how much the results can be expected to generalize to other settings. 
        \item It is fine to include aspirational goals as motivation as long as it is clear that these goals are not attained by the paper. 
    \end{itemize}

\item {\bf Limitations}
    \item[] Question: Does the paper discuss the limitations of the work performed by the authors?
    \item[] Answer: \answerYes{} 
    \item[] Justification: They are discussed in section 5
    \item[] Guidelines:
    \begin{itemize}
        \item The answer NA means that the paper has no limitation while the answer No means that the paper has limitations, but those are not discussed in the paper. 
        \item The authors are encouraged to create a separate "Limitations" section in their paper.
        \item The paper should point out any strong assumptions and how robust the results are to violations of these assumptions (e.g., independence assumptions, noiseless settings, model well-specification, asymptotic approximations only holding locally). The authors should reflect on how these assumptions might be violated in practice and what the implications would be.
        \item The authors should reflect on the scope of the claims made, e.g., if the approach was only tested on a few datasets or with a few runs. In general, empirical results often depend on implicit assumptions, which should be articulated.
        \item The authors should reflect on the factors that influence the performance of the approach. For example, a facial recognition algorithm may perform poorly when image resolution is low or images are taken in low lighting. Or a speech-to-text system might not be used reliably to provide closed captions for online lectures because it fails to handle technical jargon.
        \item The authors should discuss the computational efficiency of the proposed algorithms and how they scale with dataset size.
        \item If applicable, the authors should discuss possible limitations of their approach to address problems of privacy and fairness.
        \item While the authors might fear that complete honesty about limitations might be used by reviewers as grounds for rejection, a worse outcome might be that reviewers discover limitations that aren't acknowledged in the paper. The authors should use their best judgment and recognize that individual actions in favor of transparency play an important role in developing norms that preserve the integrity of the community. Reviewers will be specifically instructed to not penalize honesty concerning limitations.
    \end{itemize}

\item {\bf Theory assumptions and proofs}
    \item[] Question: For each theoretical result, does the paper provide the full set of assumptions and a complete (and correct) proof?
    \item[] Answer: \answerYes{} 
    \item[] Justification: Comprehensive justifications and proofs are provided in appendix A and C
    \item[] Guidelines:
    \begin{itemize}
        \item The answer NA means that the paper does not include theoretical results. 
        \item All the theorems, formulas, and proofs in the paper should be numbered and cross-referenced.
        \item All assumptions should be clearly stated or referenced in the statement of any theorems.
        \item The proofs can either appear in the main paper or the supplemental material, but if they appear in the supplemental material, the authors are encouraged to provide a short proof sketch to provide intuition. 
        \item Inversely, any informal proof provided in the core of the paper should be complemented by formal proofs provided in appendix or supplemental material.
        \item Theorems and Lemmas that the proof relies upon should be properly referenced. 
    \end{itemize}

    \item {\bf Experimental result reproducibility}
    \item[] Question: Does the paper fully disclose all the information needed to reproduce the main experimental results of the paper to the extent that it affects the main claims and/or conclusions of the paper (regardless of whether the code and data are provided or not)?
    \item[] Answer: \answerYes{} 
    \item[] Justification: The code is not disclosed but the pseudo-code is provided. 
    \item[] Guidelines:
    \begin{itemize}
        \item The answer NA means that the paper does not include experiments.
        \item If the paper includes experiments, a No answer to this question will not be perceived well by the reviewers: Making the paper reproducible is important, regardless of whether the code and data are provided or not.
        \item If the contribution is a dataset and/or model, the authors should describe the steps taken to make their results reproducible or verifiable. 
        \item Depending on the contribution, reproducibility can be accomplished in various ways. For example, if the contribution is a novel architecture, describing the architecture fully might suffice, or if the contribution is a specific model and empirical evaluation, it may be necessary to either make it possible for others to replicate the model with the same dataset, or provide access to the model. In general. releasing code and data is often one good way to accomplish this, but reproducibility can also be provided via detailed instructions for how to replicate the results, access to a hosted model (e.g., in the case of a large language model), releasing of a model checkpoint, or other means that are appropriate to the research performed.
        \item While NeurIPS does not require releasing code, the conference does require all submissions to provide some reasonable avenue for reproducibility, which may depend on the nature of the contribution. For example
        \begin{enumerate}
            \item If the contribution is primarily a new algorithm, the paper should make it clear how to reproduce that algorithm.
            \item If the contribution is primarily a new model architecture, the paper should describe the architecture clearly and fully.
            \item If the contribution is a new model (e.g., a large language model), then there should either be a way to access this model for reproducing the results or a way to reproduce the model (e.g., with an open-source dataset or instructions for how to construct the dataset).
            \item We recognize that reproducibility may be tricky in some cases, in which case authors are welcome to describe the particular way they provide for reproducibility. In the case of closed-source models, it may be that access to the model is limited in some way (e.g., to registered users), but it should be possible for other researchers to have some path to reproducing or verifying the results.
        \end{enumerate}
    \end{itemize}

\item {\bf Open access to data and code}
    \item[] Question: Does the paper provide open access to the data and code, with sufficient instructions to faithfully reproduce the main experimental results, as described in supplemental material?
    \item[] Answer: \answerNo{}
    \item[] Justification: Only pseudo-code and environment description are provided.
    \item[] Guidelines:
    \begin{itemize}
        \item The answer NA means that paper does not include experiments requiring code.
        \item Please see the NeurIPS code and data submission guidelines (\url{https://nips.cc/public/guides/CodeSubmissionPolicy}) for more details.
        \item While we encourage the release of code and data, we understand that this might not be possible, so “No” is an acceptable answer. Papers cannot be rejected simply for not including code, unless this is central to the contribution (e.g., for a new open-source benchmark).
        \item The instructions should contain the exact command and environment needed to run to reproduce the results. See the NeurIPS code and data submission guidelines (\url{https://nips.cc/public/guides/CodeSubmissionPolicy}) for more details.
        \item The authors should provide instructions on data access and preparation, including how to access the raw data, preprocessed data, intermediate data, and generated data, etc.
        \item The authors should provide scripts to reproduce all experimental results for the new proposed method and baselines. If only a subset of experiments are reproducible, they should state which ones are omitted from the script and why.
        \item At submission time, to preserve anonymity, the authors should release anonymized versions (if applicable).
        \item Providing as much information as possible in supplemental material (appended to the paper) is recommended, but including URLs to data and code is permitted.
    \end{itemize}

\item {\bf Experimental setting/details}
    \item[] Question: Does the paper specify all the training and test details (e.g., data splits, hyperparameters, how they were chosen, type of optimizer, etc.) necessary to understand the results?
    \item[] Answer: \answerYes{} 
    \item[] Justification: Only the Starcraft experiment requires particular Hyperparamter tuning effort due to the difference between the reward maximization objective and the GFlowNet diversity objective. Manual tuning was sufficient using standard reward temperature tuning method for similar GFlowNets training.
    \item[] Guidelines:
    \begin{itemize}
        \item The answer NA means that the paper does not include experiments.
        \item The experimental setting should be presented in the core of the paper to a level of detail that is necessary to appreciate the results and make sense of them.
        \item The full details can be provided either with the code, in appendix, or as supplemental material.
    \end{itemize}

\item {\bf Experiment statistical significance}
    \item[] Question: Does the paper report error bars suitably and correctly defined or other appropriate information about the statistical significance of the experiments?
    \item[] Answer: \answerYes{} 
    \item[] Justification: Standard deviations at 2-sigma are provided on most plots.
    \item[] Guidelines:
    \begin{itemize}
        \item The answer NA means that the paper does not include experiments.
        \item The authors should answer "Yes" if the results are accompanied by error bars, confidence intervals, or statistical significance tests, at least for the experiments that support the main claims of the paper.
        \item The factors of variability that the error bars are capturing should be clearly stated (for example, train/test split, initialization, random drawing of some parameter, or overall run with given experimental conditions).
        \item The method for calculating the error bars should be explained (closed form formula, call to a library function, bootstrap, etc.)
        \item The assumptions made should be given (e.g., Normally distributed errors).
        \item It should be clear whether the error bar is the standard deviation or the standard error of the mean.
        \item It is OK to report 1-sigma error bars, but one should state it. The authors should preferably report a 2-sigma error bar than state that they have a 96\% CI, if the hypothesis of Normality of errors is not verified.
        \item For asymmetric distributions, the authors should be careful not to show in tables or figures symmetric error bars that would yield results that are out of range (e.g. negative error rates).
        \item If error bars are reported in tables or plots, The authors should explain in the text how they were calculated and reference the corresponding figures or tables in the text.
    \end{itemize}

\item {\bf Experiments compute resources}
    \item[] Question: For each experiment, does the paper provide sufficient information on the computer resources (type of compute workers, memory, time of execution) needed to reproduce the experiments?
    \item[] Answer: \answerYes{} 
    \item[] Justification: Even though details of hardware used are not provided, all experiments were conducted on consumer grade hardware. Moreover, the work focuses on relative performance between algorithms.
    \item[] Guidelines:
    \begin{itemize}
        \item The answer NA means that the paper does not include experiments.
        \item The paper should indicate the type of compute workers CPU or GPU, internal cluster, or cloud provider, including relevant memory and storage.
        \item The paper should provide the amount of compute required for each of the individual experimental runs as well as estimate the total compute. 
        \item The paper should disclose whether the full research project required more compute than the experiments reported in the paper (e.g., preliminary or failed experiments that didn't make it into the paper). 
    \end{itemize}
    
\item {\bf Code of ethics}
    \item[] Question: Does the research conducted in the paper conform, in every respect, with the NeurIPS Code of Ethics \url{https://neurips.cc/public/EthicsGuidelines}?
    \item[] Answer: \answerYes 
    \item[] Justification: All contributors to the work are accounted for, no non-public dataset, environment or code were used. The theoretical nature of the work does not exclude military applications or societal consequences, but they are not the main expected outcome.
    \item[] Guidelines:
    \begin{itemize}
        \item The answer NA means that the authors have not reviewed the NeurIPS Code of Ethics.
        \item If the authors answer No, they should explain the special circumstances that require a deviation from the Code of Ethics.
        \item The authors should make sure to preserve anonymity (e.g., if there is a special consideration due to laws or regulations in their jurisdiction).
    \end{itemize}

\item {\bf Broader impacts}
    \item[] Question: Does the paper discuss both potential positive societal impacts and negative societal impacts of the work performed?
    \item[] Answer: \answerNA{} 
    \item[] Justification: The work is mainly theoretical and would only help scaling existing applications.
    \item[] Guidelines:
    \begin{itemize}
        \item The answer NA means that there is no societal impact of the work performed.
        \item If the authors answer NA or No, they should explain why their work has no societal impact or why the paper does not address societal impact.
        \item Examples of negative societal impacts include potential malicious or unintended uses (e.g., disinformation, generating fake profiles, surveillance), fairness considerations (e.g., deployment of technologies that could make decisions that unfairly impact specific groups), privacy considerations, and security considerations.
        \item The conference expects that many papers will be foundational research and not tied to particular applications, let alone deployments. However, if there is a direct path to any negative applications, the authors should point it out. For example, it is legitimate to point out that an improvement in the quality of generative models could be used to generate deepfakes for disinformation. On the other hand, it is not needed to point out that a generic algorithm for optimizing neural networks could enable people to train models that generate Deepfakes faster.
        \item The authors should consider possible harms that could arise when the technology is being used as intended and functioning correctly, harms that could arise when the technology is being used as intended but gives incorrect results, and harms following from (intentional or unintentional) misuse of the technology.
        \item If there are negative societal impacts, the authors could also discuss possible mitigation strategies (e.g., gated release of models, providing defenses in addition to attacks, mechanisms for monitoring misuse, mechanisms to monitor how a system learns from feedback over time, improving the efficiency and accessibility of ML).
    \end{itemize}
    
\item {\bf Safeguards}
    \item[] Question: Does the paper describe safeguards that have been put in place for responsible release of data or models that have a high risk for misuse (e.g., pretrained language models, image generators, or scraped datasets)?
    \item[] Answer: \answerNA{} 
    \item[] Justification: Theoretical work.
    \item[] Guidelines:
    \begin{itemize}
        \item The answer NA means that the paper poses no such risks.
        \item Released models that have a high risk for misuse or dual-use should be released with necessary safeguards to allow for controlled use of the model, for example by requiring that users adhere to usage guidelines or restrictions to access the model or implementing safety filters. 
        \item Datasets that have been scraped from the Internet could pose safety risks. The authors should describe how they avoided releasing unsafe images.
        \item We recognize that providing effective safeguards is challenging, and many papers do not require this, but we encourage authors to take this into account and make a best faith effort.
    \end{itemize}

\item {\bf Licenses for existing assets}
    \item[] Question: Are the creators or original owners of assets (e.g., code, data, models), used in the paper, properly credited and are the license and terms of use explicitly mentioned and properly respected?
    \item[] Answer: \answerYes{} 
    \item[] Justification: The starcraft asset is cited.
    \item[] Guidelines:
    \begin{itemize}
        \item The answer NA means that the paper does not use existing assets.
        \item The authors should cite the original paper that produced the code package or dataset.
        \item The authors should state which version of the asset is used and, if possible, include a URL.
        \item The name of the license (e.g., CC-BY 4.0) should be included for each asset.
        \item For scraped data from a particular source (e.g., website), the copyright and terms of service of that source should be provided.
        \item If assets are released, the license, copyright information, and terms of use in the package should be provided. For popular datasets, \url{paperswithcode.com/datasets} has curated licenses for some datasets. Their licensing guide can help determine the license of a dataset.
        \item For existing datasets that are re-packaged, both the original license and the license of the derived asset (if it has changed) should be provided.
        \item If this information is not available online, the authors are encouraged to reach out to the asset's creators.
    \end{itemize}

\item {\bf New assets}
    \item[] Question: Are new assets introduced in the paper well documented and is the documentation provided alongside the assets?
    \item[] Answer: \answerNA{} 
    \item[] Justification: No new asset are introduced.
    \item[] Guidelines:
    \begin{itemize}
        \item The answer NA means that the paper does not release new assets.
        \item Researchers should communicate the details of the dataset/code/model as part of their submissions via structured templates. This includes details about training, license, limitations, etc. 
        \item The paper should discuss whether and how consent was obtained from people whose asset is used.
        \item At submission time, remember to anonymize your assets (if applicable). You can either create an anonymized URL or include an anonymized zip file.
    \end{itemize}

\item {\bf Crowdsourcing and research with human subjects}
    \item[] Question: For crowdsourcing experiments and research with human subjects, does the paper include the full text of instructions given to participants and screenshots, if applicable, as well as details about compensation (if any)? 
    \item[] Answer: \answerNA{} 
    \item[] Justification:  \answerNA{} 
    \item[] Guidelines:
    \begin{itemize}
        \item The answer NA means that the paper does not involve crowdsourcing nor research with human subjects.
        \item Including this information in the supplemental material is fine, but if the main contribution of the paper involves human subjects, then as much detail as possible should be included in the main paper. 
        \item According to the NeurIPS Code of Ethics, workers involved in data collection, curation, or other labor should be paid at least the minimum wage in the country of the data collector. 
    \end{itemize}

\item {\bf Institutional review board (IRB) approvals or equivalent for research with human subjects}
    \item[] Question: Does the paper describe potential risks incurred by study participants, whether such risks were disclosed to the subjects, and whether Institutional Review Board (IRB) approvals (or an equivalent approval/review based on the requirements of your country or institution) were obtained?
    \item[] Answer:  \answerNA{}  
    \item[] Justification:  \answerNA{} 
    \item[] Guidelines:
    \begin{itemize}
        \item The answer NA means that the paper does not involve crowdsourcing nor research with human subjects.
        \item Depending on the country in which research is conducted, IRB approval (or equivalent) may be required for any human subjects research. If you obtained IRB approval, you should clearly state this in the paper. 
        \item We recognize that the procedures for this may vary significantly between institutions and locations, and we expect authors to adhere to the NeurIPS Code of Ethics and the guidelines for their institution. 
        \item For initial submissions, do not include any information that would break anonymity (if applicable), such as the institution conducting the review.
    \end{itemize}

\item {\bf Declaration of LLM usage}
    \item[] Question: Does the paper describe the usage of LLMs if it is an important, original, or non-standard component of the core methods in this research? Note that if the LLM is used only for writing, editing, or formatting purposes and does not impact the core methodology, scientific rigorousness, or originality of the research, declaration is not required.
    \item[] Answer:  \answerNA{}  
    \item[] Justification: \answerNA{} 
    \item[] Guidelines:
    \begin{itemize}
        \item The answer NA means that the core method development in this research does not involve LLMs as any important, original, or non-standard components.
        \item Please refer to our LLM policy (\url{https://neurips.cc/Conferences/2025/LLM}) for what should or should not be described.
    \end{itemize}

\end{enumerate}

\end{document}